\begin{document}

%

%
\twocolumn[
\aistatstitle{Black-Box Importance Sampling}
\aistatsauthor{ Qiang Liu \And Jason D. Lee }
\aistatsaddress{ Dartmouth College \And  University of South California } ]

%



%


\newcommand{\fix}{\marginpar{FIX}}
\newcommand{\new}{\marginpar{NEW}}

\begin{abstract}
Importance sampling is widely used in machine learning and statistics, but its power is limited by the restriction of using \emph{simple} proposals 
for which the importance weights can be tractably calculated.  
We address this problem by 
studying \emph{black-box importance sampling methods} 
that calculate importance weights for samples generated from any unknown proposal or black-box mechanism. 
Our method allows us to use better and richer proposals to solve difficult problems, 
and (somewhat counter-intuitively) also has the additional benefit of improving the estimation accuracy beyond typical importance sampling. Both theoretical and empirical analyses are provided. 
\end{abstract}

\section{Introduction}
Efficient Monte Carlo methods are workhorses for modern Bayesian statistics and machine learning. 
Importance sampling (IS) and Markov chain Monte Carlo (MCMC) are two fundamental tools widely used when
it is intractable to draw exact samples from the underlying distribution $p(x)$. 
IS uses an \emph{simple} proposal distribution $q(x)$ to draw a sample $\{x_i\}$, and attaches it with a set of importance weights that are proportional to the probability ratio $p(x_i)/q(x_i)$. 
MCMC methods, on the other hand, rely on simulating Markov chains whose equilibrium distribution matches the target distribution. 

Unfortunately, both importance sampling (IS) and MCMC have their own critical weaknesses. 
IS heavily relies on a good proposal $q(x)$ that closely matches the target distribution $p(x)$ to obtain accurate estimates. 
However, it is critically challenging, or even impossible, to design good proposals for high dimensional complex target distributions, given the restriction of using simple proposals. 
Therefore, alternative methods that do not require to calculating proposal probabilities would greatly enhance the powerful of IS, yielding 
efficient solutions for difficulty problems. 

On the other hand, MCMC approximates the target distribution with an (often complex) distribution simulated from a large number of steps of Markov transitions,
and has been widely used to solve complex problems. However, MCMC has a long-standing difficulty accessing its convergence, and one may get absurdly wrong results when using non-convergent results \citep[e.g.,][]{morris1996ising}. 
In addition, the computational cost of MCMC becomes critically expensive when the number of data instances is very large (a.k.a. the big data setting). 
A number of approximate versions of MCMC have been developed recently to deal with the big data issue \citep[e.g.,][]{welling2011bayesian, alquier2016noisy}, 
but these methods usually no longer converge to the correct stationary distribution. 
%


Motivated by combining the advantages of IS and MCMC, we study \emph{black-box importance sampling} methods that can calculate importance weights for any given sample $\{x_i\}_{i=1}^n$ generated from arbitrary, unknown black-box mechanisms. 
Such methods allow us to use highly complex proposals that closely match the target distribution, without worrying about the computational tractability of the typical importance weights. 

Interestingly, the black-box methods, despite using no information of the proposal distribution, 
can actually give better estimation accuracy than the typical importance sampling that leverages the proposal information. 
This appears to be a paradox (using less information yet getting better results), 
but is consistent with the arguments of \citet{o1987monte} 
that ``Monte Carlo (that uses the proposal information)
is fundamentally unsound'' for violating the Likelihood Principle, 
and the interesting results of \citet{henmi2007importance, delyon2014integral}
that certain types of approximate versions of IS weights reduce the variance over exact IS weights. 
As an example of application, we apply black-box importance weights to samples simulated by a number of short Markov chains, 
in which MCMC helps provide a complex proposal that are ``crudely'' closely to the target distribution, and the black-box weights further refine the result. 
In this way, we obtain consistent estimators even from un-convergent MCMC results, or approximate MCMC transitions that appear commonly in big data settings. 

Beyond MCMC, 
black-box IS can be used to refine many other approximation methods 
related to complex generation mechanisms, 
including variational inference with complex proposals \citep[e.g.,][]{rezende2015variational}, 
  bootstrapping \citep{efron2012bayesian} 
 and perturb-and-MAP methods \citep{hazan2013sampling, papandreou2011perturb}.  
Further, we envision our method can find more applications in many areas where importance sampling or variance reduction plays an importance role, 
such as 
probabilistic inference in graphical model \citep[e.g.,][]{liu2015probabilistic}, 
variance reduction for variational inference \citep[e.g.,][]{wang2013variance, ranganath2014black}
and policy gradient \citep[e.g.,][]{greensmith2004variance}, 
 covariance shift in transfer learning \citep[e.g.,][]{sugiyama2008direct}, off-policy reinforcement learning \citep[e.g.,][]{li2015toward}, and stochastic optimization \citep[e.g.,][]{zhao2014stochastic}.


Our black-box importance weights are calculated by a convex quadratic optimization, based on  
minimizing a recently proposed kernelized Stein discrepancy that measures the goodness of a sample fit with an unnormalized distribution 
\citep{oates2014control, chwialkowski2016kernel, liu2016kernelized}; 
this makes our method widely applicable for unnormalized distributions that widely appear in Bayesian statistics and machine learning.

\subsection*{Related Works}\label{sec:related}
 Our method is closely related to \citet{briol2015probabilistic, briol2015frank, oates2014control},
 which combine Stein's identity with Bayesian Monte Carlo  \citep{o1991bayes, ghahramani2002bayesian} 
 and control variates, respectively, 
 and can also be interpreted as a form of importance weights similar to our method. 
 The key difference is that the weights in their method can be negative and are not normalized to sum to one, 
while our approach explicitly optimizes the weights in the probability simplex, 
which helps provide more stable practical results as we illustrate both theoretically and empirically in our work. 
We provide a more throughout discussion in Section~\ref{sec:other}. 

An alternative approach for black-box weights is to directly approximate the underlying proposal distribution $q$ 
with an estimator $\hat q$ and use the corresponding ratio 
$p(x)/\hat q(x)$ as the importance weight. 
\citet{henmi2007importance, delyon2014integral} showed that certain types of approximation $\hat q$ can improve, rather than deteriorate, 
the performance compared with the weight with the exact $q$. 
However, the method by \citet{henmi2007importance} 
is not widely applicable since it requires to solve a maximum likelihood estimator in a parametric family that include the proposal distribution; 
\citet{delyon2014integral} uses a kernel density estimator for $q$ and tends to give unstable empirical results as we show in our experiments. 
Related to this, there is a literature 
in semi-supervised learning for covariance shifts \citep[e.g.,][]{sugiyama2012machine}
that estimates the density ratio $p(x)/q(x)$ given two samples $\{x_i\}\sim p$ and $\{y_i\} \sim q$, when both $p$ and $q$ are unknown. 

There are also other directions where the advantages of IS and MCMC can be combined, 
including adaptive importance sampling  \citep[e.g.,][to only name a few]{martino2015layered, botev2013markov, beaujean2013initializing, yuan2013novel}, and sequential Monte Carlo \citep[e.g.,][]{smith2013sequential, robert2013monte, neal2001annealed}. The black-box techniques 
can be combined with these methods to obtain more powerful, adaptive methods. 

%
\todo{This approach ``fixes" MCMC, by adjusting the samples from MCMC with proper importance weights, and is orthogonal to the line of work that diagnostic the weights. }
%
%
\todo{  We studied the convergence rate of our method in cases when $\{x_i\}_{i=1}^n$ is i.i.d. drawn from a distribution; it in fact has super-root-$n$ rate under proper smoothness conditions. }
\todo{We also make connections with two lines of previously unconnected works: Bayesian Monte Carlo \& Control variate methods \citep{oates2014control, oates2016convergence}, and estimated importance sampling \citep{henmi2007importance, delyon2014integral}. }

\todo{
There are a number of existing methods that can also be viewed providing black-box importance weights, 
including Bayesian Monte Carlo \citep{o1991bayes, ghahramani2002bayesian} and the related (linear) control variates method \citep[e.g.,][]{liu2008monte}, 
 as well as an approximate importance sampling method 
 \citep{henmi2007importance, delyon2014integral} 
 that can be interpreted as a \emph{multiplicative} control variates method \citep{nelson1987control}. 
 By viewing these methods as black-box importance weights (which seems unusual in the literature), 
 can also open up more applications for these methods. 
 We also provide a unified view of these methods. 
 }
 
\todo{
Our method draws a particular close connection with 
Bayesian Monte Carlo \citep{o1991bayes, ghahramani2002bayesian, briol2015probabilistic}, 
and control variates  \citep[e.g.,][]{liu2008monte} and control functional \citep{oates2014control}, 
which can be viewed as providing different forms of importance weights that also 
work for black-box proposals. 
The disadvantage of the typical Bayesian Monte Carlo, however, is that it requires to evaluate 
a kernel expectation term which is computationally tractable only for simple distributions with typical kernels. 
\citep{briol2015probabilistic, oates2014control} 
proposed to use Stein kernel 
in Bayesian Monte Carlo and control variates to allow tractable computation of the kernel expectation.
We will discuss futher connections with \citep{briol2015probabilistic, oates2014control} in Section~\ref{sec:other}. 
}

\paragraph{Preliminary and Notation}
Let $\k(x,x') \colon \X \times \X \to \R$ be a positive definite kernel; we denote by $\k(x, \cdot)$ the one-variable function for each fixed $x$. 
The reproducing kernel Hilbert space (RKHS) $\H$ of $\k(x,x')$ is the closure of linear span $\{f \colon  f= \sum_{i=1}^m a_i \k(x, x_i), ~~~ a_i \in \R, ~~ m\in \mathbb{N}, ~ x_i \in \X  \}$, equipped with an inner product $\la f, ~g \ra_{\H}= \sum_{ij} a_i b_j \k(x_i,x_j)$ for 
$f= \sum_{i=1}^m a_i \k(x, x_i)$ and 
$g  = \sum_i b_i \k(x,x_i)$. One can verify that such $\H$ has a \emph{reproducing} property in that $f = \la f, ~ \k(x, \cdot)\ra_{\H}$. 
We use $\Op(\cdot)$ for the Big O in probability notation.

\section{Background: Kernelized Stein Discrepancy}
\label{sec:background}

We give a brief introduction to Stein's identity and 
kernelized Stein discrepancy (KSD) \citep{liu2016kernelized, oates2014control, chwialkowski2016kernel} which forms the foundation of our method. 

Let $p(x)$ be a continuously differentiable (also called smooth) density supported on $\X\subseteq \RR^d$. 
We say that a smooth function $f(x)$ is in the Stein class of $p(x)$ if 
\begin{align}
\int_{\X}  \nabla_x (p(x)f(x)) dx = 0,
\label{equ:steinclass}
\end{align}
which can be implied by a zero boundary condition $p(x)f(x) = 0$, $\forall x \in \partial \X$ when $\X$ is compact, or $\lim_{||x||\to \infty} f(x)p(x) = 0 $ when $\X = \R^d$. 
For $f(x)$ in the Stein class of $p(x)$, 
Stein's identity shows  
\begin{align}
\label{def:steq1}
\begin{split}
&  \E_{x\sim p}[\score_p(x) f(x)  + \nabla_x f(x)] = 0,   \\
&\text{where} ~~~~~
\score_p(x) = \nabla_x \log p(x),
\end{split}
\end{align}
which is in fact a direct rewrite of \eqref{equ:steinclass} using the product rule of derivatives. We call $\score_p(x) \coloneqq \nabla_x \log p(x)$ the score function of $p(x)$. 
Note that calculating $\score_p(x)$ does not depend on the normalization constant in $p(x)$, that is, when $p(x) = f(x)/Z$
where $Z$ is the normalization constant and is often critical difficult to calculate, 
we have $\score_p(x) = \nabla_x \log f(x)$, independent of $Z$. 
This property makes Stein's identity a powerful practical tool for handling unnormalized distributions that widely appear in  machine learning and statistics. 

We can ``kernelize''  Stein's identity with a smooth positive definite kernel $\k(x,x')$ for which 
$\k(x, x')$ is in the Stein class of $p(x)$ for each fixed $x'\in \X$ (we say such $\k(x,x')$ is in the Stein class of $p$ in this case). 
By first applying \eqref{def:steq1} on $k(x,x')$ with fixed $x'$ and subsequently with fixed $x$, we can get the following kernelized versions of Stein's identity: 
\begin{align}
\E_{x \sim p} \big [  \kp(x,x') \big] = 0,  ~~~~ \forall x' \in \X, 
\label{equ:Dhh}
\end{align}
where $x,x'$ are i.i.d. drawn from $p$ and $\kp(x,x')$ is a new kernel function defined via
\begin{align}
& \kp(x,x')  = \score_p(x)^\top \k(x,x') \score_p(x')  +   \score_p(x)^\top \nabla_{x'} \k(x,x')  \notag \\ 
& ~~~~~ +  \score_p(x')^\top \nabla_{x} \k(x,x') + \trace(\nabla_{x,x'} \k(x,x')). \label{equ:uq}
\end{align}
See Theorem 3.5 of \citet{liu2016kernelized}. 
\eqref{equ:Dhh} suggests that $p(x)$ is an eigenfunction of kernel $\kp(x,x')$ with zero eigenvalue. 
In fact, let $\Hp$ be the RKHS related to $\kp(x,x')$, then all the functions $h(x)$ in $\Hp$ are orthogonal to $p(x)$ in that $\E_p[h(x)]= 0$. 
Such $\Hp$ were first studied in \citet{oates2014control}, in which it was used to 
define an infinite dimensional control variate for variance reduction. 
We remark that $ \kp(x,x') $ can be easily calculated with given $\k(x,x')$ and $\score_p(x)$, even when $p(x)$ is unnormalized. 

If we now replace the expectation $\E_p[\cdot]$ in \eqref{equ:Dhh} with $\E_q[\cdot]$ of a different smooth density $q(x)$ supported on $\X$,
\eqref{equ:Dhh} would not equal zero; instead, it gives a non-negative discrepancy measure of $p$ and $q$: 
\begin{align}
\label{equ:ksdexp}
\S(q, ~p) = \E_{x,x' \sim q} \big [  \kp(x,x') \big] \geq 0, 
\end{align}
where $\S(q,~p)$ is always nonnegative because $\kp(x,x')$ can be shown to be  positive definite if $\k(x,x')$ is positive definite \citep[e.g.,][Theorem 3.6]{liu2016kernelized}.
In addition, one can further show that $\S(q,~p)$ equals zero if and only if $p = q$ once $\k(x,x')$ is strictly positive definite in certain sense: 
strictly integrally positive definite 
in \citet{liu2016kernelized}, and $cc$-universal in  \citet{chwialkowski2016kernel}; 
these conditions are satisfied by common kernels such as the RBF kernel $\k(x, x') = \exp(-\frac{1}{2h^2} || x  - x'||^2_2)$, which is also easily in the Stein class of a smooth density $p(x)$ supported in $\X = \R^d$ because of its decaying property. 
\todo{We also note that $\S(p,q)$ can be viewed as the maximum mean discrepancy between $p$ and $q$ under kernel $\kp(x,x')$; the difference is, however, that $\kp(x,x')$ is $p$-specific, and hence $S(q,~ p)$ is asymmetric. }



One can further consider kernel $\kpp(x, x') = \kp(x,x') + 1$, whose corresponding RKHS $\Hpp$ consists of functions of form $h(x) + c$ with $h\in \Hp$ and $c$ is a constant in $\R$. 
%
Therefore, $\Hpp$ includes functions with arbitrary values of mean $\E_ph$.
Further, \citet{chwialkowski2016kernel} showed that $\Hpp$ is dense in $L^2(\X)$ when $\k(x, x')$ in $cc$-universal. 
As a consequence, 
$\Hp$ is dense in the subset of $L^2(\X)$ with zero-mean under $p(x)$, 
that is, for any $h \in L^2(\X)$ with $\E_p[h(x)]=0$ and any $\epsilon>0$, there exists $h'\in \Hp$ such that $|| h - h'||_{\infty} \leq \epsilon$. \redfix{[check]}

\section{Stein Importance Weights}\label{sec:stein}
Let $\{x_i\}_{i=1}^n$ be a set of points in $\R^d$ and we want to find a set of weights $\{w_i\}_{i=1}^n$, $w_i \in \R$, such that the weighted sample $\{x_i, w_i\}_{i=1}^n$ closely approximates the target distribution $p(x)$ in the sense that
$$
\sum_{i=1}^n w_i h(x_i) \approx \E_p [h(x)],
$$
for general test function $h(x)$. 
For this purpose, we define an empirical version of the KSD in \eqref{equ:ksdexp} to measure the discrepancy between $\{x_i, w_i\}_{i=1}^n$ and $p(x)$, 
$$
\S(\{x_i, w_i\}, ~ p) = \sum_{i,j=1}^n w_i w_j \kp(x_i, x_j) = \vw ^\top \KP \vw,
$$
where $\KP= \{ \kp(x_i, x_j) \}_{i,j=1}^n$ and $\vv w = \{ w_i \}_{i=1}^n$, and we assume the weights are self normalized, that is, $\sum_i w_i = 1$. 
We then select the optimal weights by minimizing the discrepancy $\S(\{x_i, w_i\}, ~ p)$,  
\begin{align}
\label{equ:vw}
\begin{split}
\hat {\vv w} =  \argmin_{\vv w}\bigg\{ \vv w^\top \KP \vv w,  ~~  s.t.~~ \sum_{i=1}^n w_i = 1, ~~~ w_i \geq 0\bigg\},
\end{split}
\end{align}
where in addition to the normalization constraint $\sum_i w_i = 1$, 
we also restrict the weights to be non-negative; these two simple constraints have important practical implications as we discuss in the sequel. 
Note that the optimization in \eqref{equ:vw} is a convex quadratic programming that can be efficiently solved by off-the-self optimization tools. For example, 
both mirror descent and Frank Wolfe take $\Od(n^2/\epsilon)$ to find the optimum with $\epsilon$-accuracy.\redfix{[reference]} 
Solving \eqref{equ:vw} does not require to know how the points $\{x_i\}_{i=1}^n$ are generated, and hence gives a \emph{black-box} importance sampling. 

Theoretically,  minimizing the empirical KSD can be justified by the following bound. 
\begin{pro}\label{pro:bound}
Let $h(x)$  be  a test function and $h - \E_ph  \in \Hp$. Assume $\sum_{i=1}^n w_i = 1$, 
we have 
\begin{align}\label{equ:bound}
|\sum_{i=1}^n w_i h(x_i)  ~- ~ \E_p h| ~ \leq ~ C_h \sqrt{\S(\{x_i, w_i\}, ~ p)}, 
\end{align}
where $C_h  =  || h  - \E_ph ||_{\Hp},$ which depends on $h$ and $p$, but not on $
\{x_i, w_i\}_{i=1}^n$. 
\end{pro}
%
\paragraph{Remark}
i) The condition $h - \E_ph  \in \Hp$ is a mild requirement
as we discussed in Section~\ref{sec:background}, 
because $\Hp$ is dense in the subset of $L^2(\X)$ with zero means under $p(x)$ when $\k(x,x')$ is $cc$-universal \citep{chwialkowski2016kernel}, 
for which many commonly used kernels satisfy. 

ii) \citet[Theorem 3]{oates2014control} has a similar result which does not require $\sum_i w_i  =1$, but 
has a constant term larger than $C_h$ when $\sum_i w_i = 1$ does hold\redfix{~(see Appendix)}. 
We propose to enforce $\sum_i w_i = 1$ because it gives exact estimation for constant functions $h(x)=c$, and is common practice for importance sampling (which is referred to as self-normalized importance sampling).  In our empirical results, we find that the normalized weights can significantly stabilize the algorithm, especially for high dimensional models. 


iii) One can show that the $\S(q, ~ p)$ as defined in \eqref{equ:ksdexp} can be treated as a maximum mean discrepancy (MMD) between $p$ and $q$, equipped with the ($p$-specific) kernel $\kp(x,x')$. In the light of this, bound \eqref{equ:bound} is a form of the worse case bounds of the kernel-based quadrature rules \citep[e.g.,][]{chen2010super, bach2015equivalence, huszar2012optimally, niederreiter2010quasi}. 
The use of the special kernel $\kp(x,x')$
allows us to calculate the discrepancy tractably for general unnormalized distributions; 
this is in contrast with the MMDs with typical kernels which are intractable to calculate 
due to the need for evaluating the a term related to the expectation of the kernel under distribution $p$. 

\todo{
\red{[]}
iii) 
Although this work mainly focuses on the weight optimization, 
one may consider a more general framework which also optimizes the point locations $\{x_i\}$ in addition to the weights $\{w_i\}$, e.g.,
$$
[\hat{\vv x}, ~ \hat {\vv w}] =  \argmin_{\vv w, \vv x} \bigg\{ \vv w^\top \KP \vv w,  ~~  s.t.~~ \sum_{i=1}^n w_i = 1, ~~~ w_i \geq 0\bigg\}. 
$$
Unfortunately, the optimization over $\hat{\vv x}$ is highly convex, and does not yield good empirical results (personal communication with Chris Oates and Lester Mackey). 
A better approach is discussed in a recent work \citep{liu2016stein}  
where it is shown it is better to optimize the KL divergence instead of the Stein discrepancy,
based on the observation that Stein discrepancy is linked to a type of gradient of KL divergence. 
}

\subsection{Practical Applications}\label{sec:practical}
Our method as summarized in Algorithm~\ref{alg:steinMCMC} can be used to refine any sample $\{x_i\}_{i=1}^n$ generated with arbitrary black-box mechanisms, 
and allows us to apply importance sampling in cases that are otherwise difficult. 
As an example, 
we can generate $\{x_i\}_{i=1}^n$ by simulating $n$ parallel MCMC chains for $m$ steps, 
where the length $m$ of the chains can be smaller than what is typically used in MCMC, because it just needs to be large enough to bring the distribution of $\{x_i\}_{i=1}^n$ ``roughly'' close to the target distribution. 
This also makes it easy to parallelize the algorithm compared with running a single long chain. 
In practice, one may heuristically decide if $m$ is large enough by checking the variance of the estimated weights $\{w_i\}_{i=1}^n$ (or the effective sample size). 
One can also simulate $\{x_i\}_{i=1}^n$ using MCMC with approximate translation kernels 
as these required for massive datasets \citep[e.g.,][]{welling2011bayesian, alquier2016noisy}, 
so our method provides a new solution for big data problems. 

We should remark that when $\{x_i\}_{i=1}^n$ is simulated from $n$ independent MCMC initialized from a distribution $q_0(x)$, the weight $w_0(x)=n^{-1} p(x)/q_0(x)$ does provide a valid importance sampling weights in that $\sum_i w_0(x_i) h(x_i)$ gives an unbiased estimator \citep[][Theorem 6.1]{maceachern1999sequential}. 
However, this weight does not update as we run more MCMC steps, and performs poorly in practice. 
\todo{
This basic fact is leveraged in sequentially Monte Carlo methods \citep[e.g.,][]{smith2013sequential, neal2001annealed}, which forms another way that combines the advantages of IS and MCMC; we leave the combination of our method with sequential Monte Carlo for future work. 
}

There are many other cases where black-box IS can find useful. 
For example, 
we can simulate $\{x_i\}_{i=1}^n$ from bootstrapping or perturbed maximum \emph{a posteriori} (MAP) \citep{papandreou2011perturb, hazan2013sampling}, 
that is, $x_i = \argmax_x \tilde p( x)$ where $\tilde p(x)$ is a perturbed version of $p(x)$, or the bootstrapping likelihood. 
The idea of using importance weighted bootstrapping to carry out Bayesian calculation has been discussed before \citep[e.g.,][]{efron2012bayesian}, but was limited to simple cases when the bootstrap distribution is computable. 
Black-box IS can also be used to refine the results of
variational inference \citep[e.g.,][]{wainwright2008graphical}, 
especially for 
the cases with complex variational 
proposal distributions \citep[e.g.,][]{rezende2015variational}.  
%
\todo{
Black-box IS can also be used to refine the results of
variational inference \citep{wainwright2008graphical}, 
which has been widely used in machine learning, and often lack consistency guarantees. 
We may simulate $\{x_i\}$ from the variational proposal distribution estimated by minimizing the KL divergence, 
turning it into a theoretically consistent estimator. 
Importantly, black-box IS does not requires to calculate the proposal likelihood, and can be used for complex proposal distributions such as \citep{rezende2015variational}.  
}

\begin{algorithm}[tb] %
\caption{Stein Importance Sampling} \label{alg:steinMCMC}
\begin{algorithmic}
\STATE 1. Generate $\{x_i\}_{i=1}^n$ using any mechanism that is believed to resemble $p(x)$ (e.g., by running $n$ independent MCMC chains for a small number of steps, or using parametric bootstrap).
\STATE 2. Calculate importance weights for $\{\hat w_i\}_{i=1}^n$ by \eqref{equ:vw}. 
\STATE 3. Calculate $\sum_i \hat w_i h(x_i)$ to approximate $\E_p[ h] $for test function $h$. 
\end{algorithmic}
\end{algorithm} 



\subsection{Convergence Rate}\label{sec:convergence}
Our procedure does not assume the generation mechanism of $\{x_i\}_{i=1}^n$, 
but if $\{x_i\}_{i=1}^n$ is indeed generated ``nicely'', 
error bounds can be easily established using  Proposition~\ref{pro:bound}: 
if there exists a set of ``reference'' positive normalized weights $\{w^*_i\}_{i=1}^n$ such that $\S(\{x_i, ~ w_i^*\}, ~ p) = \Op(n^{-\delta})$, then 
the mean square error of our estimator with weight $\{\hat w_i\}_{i=1}^n$ returned by \eqref{equ:vw} should also be $\Op(n^{-\delta})$ by 
following \eqref{equ:vw} and \eqref{equ:bound}.  

To gain more intuition, assume $\kp(x,x')$ has a set of eigenfunctions $\{\phi_\ell(x)\}$ and eigenvalues $\{\lambda_\ell\}$ such that $\kp(x,x') = \sum_\ell \lambda_\ell \phi_\ell(x) \phi_\ell(x')$, then we have \text{\jasonQaddressed{What is the outer expectation on the LHS over? The $x_i$'s?}} 
\begin{align*}
|\sum_i \hat w_i h(x_i) - \E_p h |^2
&  \leq  C_h^2 ~  {  \S(\{ x_i, \hat w_i\}, ~ p) }  \\
 & \hspace{-.05\textwidth} =  C_h^2 ~ \sum_{\ell} \lambda_\ell ~ (\sum_i w_i\phi_\ell(x_i) - \E_p \phi_\ell)^2, 
\end{align*}
where $C_h = || h - \E_p h ||_{\Hp}$ and we used the fact that $\E_p\phi_\ell = 0$ since $\phi_\ell \in \Hp$. 
Therefore, it is enough to find a set of positive and normalized reference weights 
whose error on estimating $\E_p\phi_\ell$ is low. 
Note that such reference weight does not necessarily need to be practically computable to establish the bound. 

As an obvious example, when $\{x_i\}_{i=1}^n$ is i.i.d. drawn from an (unknown) proposal distribution $q(x)$, the typical importance sampling weight $w^*_i \propto  p(x_i) / q(x_i)$ (up to the normalization)
can be used as a reference weight 
to establish an $\Op(n{}^{-1/2})$ error rate as the typical Monte Carlo methods have.
%
%
{
\begin{thm}\label{thm:IS}
Assume $h-\E_p h \in \Hp$ and $\{x_i\}_{i=1}^n$ is i.i.d. drawn from $q(x)$ with the same support as $p(x)$. Define $w_*(x)=p(x)/q(x)$ and assume
$\E_{x\sim p} (| w_*(x) \kp(x,x)|) < \infty$, and 
$\E_{\{x,x'\}\sim p} [ w_*(x) w_*(x') \kp(x,x')^2 ] < \infty$. 
For $\{\hat w_i\}_{i=1}^n$ defined in \eqref{equ:vw}, we have
$$ |\sum_{i=1}^n \hat w_i h(x_i) - \E_p h |  = \Op({n^{-1/2}}). $$
\end{thm}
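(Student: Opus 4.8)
The plan is to combine the deterministic bound of Proposition~\ref{pro:bound} with the optimality of $\hat{\vv w}$ to reduce the problem to exhibiting a \emph{feasible} reference weight whose discrepancy is $\Op(n^{-1})$. Since the $x_i\sim q$ share the support of $p$, the self-normalized importance weights
\[
w_i^* = \frac{w_*(x_i)}{\sum_{j=1}^n w_*(x_j)}, \qquad w_*(x) = \frac{p(x)}{q(x)},
\]
are nonnegative and sum to one, hence feasible for \eqref{equ:vw}. By optimality $\S(\{x_i,\hat w_i\},p) \le \S(\{x_i,w_i^*\},p)$, so by Proposition~\ref{pro:bound} it suffices to show $\S(\{x_i,w_i^*\},p) = \Op(n^{-1})$; taking square roots then yields the claimed $\Op(n^{-1/2})$ rate (the constant $C_h = \|h-\E_p h\|_{\Hp}$ is finite because $h-\E_p h\in\Hp$).

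First I would strip off the normalization. Writing $W = \sum_{j}w_*(x_j)$ and $V_n = n^{-2}\sum_{i,j} w_*(x_i) w_*(x_j)\, \kp(x_i,x_j)$, we have $\S(\{x_i,w_i^*\},p) = (n/W)^2 V_n$. Since $\E_{x\sim q}[w_*(x)]=\int p\,dx = 1$, the law of large numbers gives $W/n\to 1$ in probability, so $(n/W)^2=\Op(1)$ and the problem reduces to proving $V_n = \Op(n^{-1})$, i.e.\ $n^2 V_n = \Op(n)$.

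The heart of the argument is a degenerate $V$-statistic estimate. Split $n^2 V_n$ into its diagonal and off-diagonal parts with summand $g(x,x') = w_*(x) w_*(x')\kp(x,x')$. The off-diagonal kernel is \emph{degenerate} under $q$: by the kernelized Stein identity \eqref{equ:Dhh},
\[
\E_{x'\sim q}[g(x,x')] = w_*(x)\,\E_{x'\sim p}[\kp(x,x')] = 0 .
\]
Hence the off-diagonal sum $\sum_{i\ne j}g(x_i,x_j)$ has mean zero, and the variance formula for a degenerate second-order $U$-statistic bounds its variance by $O\!\big(n^2\,\E_{x,x'\sim q}[g(x,x')^2]\big)$; by the change of measure $w_*^2\,q = w_*\,p$ this second moment equals $\E_{x,x'\sim p}[w_*(x)w_*(x')\kp(x,x')^2]$, finite by the second hypothesis, so the off-diagonal part is $\Op(n)$. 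The diagonal part $\sum_i w_*(x_i)^2\kp(x_i,x_i)$ is nonnegative (the diagonal of a positive-definite kernel) with mean $n\,\E_{x\sim q}[w_*(x)^2\kp(x,x)] = n\,\E_{x\sim p}[w_*(x)\kp(x,x)] = O(n)$ by the first hypothesis, hence $\Op(n)$ by Markov's inequality. Adding the two gives $n^2 V_n = \Op(n)$, closing the chain.

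The main obstacle is the degenerate $U$-statistic variance bound for the off-diagonal term: the payoff of the Stein identity is precisely that the leading (Hájek-projection) term vanishes, so the off-diagonal sum is $\Op(n)$ rather than the $\Op(n^{3/2})$ of a generic kernel — this degeneracy drives the faster-than-naive rate and foreshadows the super-root-$n$ phenomenon. The two moment hypotheses enter in complementary places: the first controls the diagonal mean, while the second controls the off-diagonal variance. I note a quicker, if less transparent, route that avoids the explicit variance computation: since $V_n\ge 0$ and, once the cross term is shown to vanish, $\E[V_n]=n^{-1}\E_{x\sim p}[w_*(x)\kp(x,x)]=O(n^{-1})$, Markov's inequality alone gives $V_n=\Op(n^{-1})$; even then the second hypothesis still enters, through the integrability $\E_{x,x'\sim q}|g|\le\sqrt{\E_{x,x'\sim q}[g^2]}<\infty$ needed to justify that the cross term vanishes.
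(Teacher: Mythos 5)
Your proposal is correct and follows essentially the same route as the paper: use the self-normalized importance weights as a feasible reference, invoke optimality of $\hat{\vv w}$ and Proposition~\ref{pro:bound}, and show the unnormalized quadratic form is a degenerate V-statistic of order $\Op(n^{-1})$ thanks to the Stein identity, with the normalization factor handled by the law of large numbers. The only difference is cosmetic — the paper cites the standard degenerate V-statistic result where you spell out the diagonal/off-diagonal moment computation, correctly matching the two moment hypotheses to the two pieces.
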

}
Interestingly, it turns out the typical importance weight $w_*(x)\propto p(x)/q(x)$ is not the best possible reference weight; 
better options can be constructed using various variance reduction techniques to give convergence rates better than the typical $\Op(n^{-1/2})$ rate.  
%
\begin{thm}\label{thm:mainb0}
Assume $\{x_i\}_{i=1}^n$ is i.i.d. drawn from $q(x)$ and $w_*(x)=p(x)/q(x)$. 
Let $\{\phi_\ell\}_{\ell=1}^\infty$ be the set of orthogonal eigenfunctions w.r.t. $p(x)$ with eigenvalues $\{\lambda_\ell\}_{\ell=1}^\infty$. 
Assume all the following quantities are upper bounded by $M$ uniformly for $\forall x\in \X$\emph{:} $\sum_{\ell=1}^\infty \lambda_\ell$, $w_*(x)$, $|\phi(x)|$, $\max_{\ell,\ell'}\var_{x\sim q}[w_*(x)^2 \phi_\ell(x)\phi_\ell'(x)]$, we have
$$\E_{\vx \sim q} \big [   |\sum_i \hat w_i h(x_i) - \E_p h |^2 \big ]^{1/2} = \bigO{n^{-(1+\alpha)/2}}, $$
where $\alpha$ is a number that satisfies $0< \alpha\leq 1$ and is decided by the bound $M$ and the decay of the eigenvalues $\res(n) = \sum_{\ell > n} \lambda_\ell$ of kernel $k_p(x,x')$.  See Theorem~B.5 in Appendix for more details. 
\end{thm}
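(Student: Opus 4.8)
The plan is to leverage the defining optimality of $\hat{\vv w}$: because \eqref{equ:vw} minimizes $\vv w^\top\KP\vv w = \S(\{x_i,w_i\},p)$ over the probability simplex, we have $\S(\{x_i,\hat w_i\},p)\le \S(\{x_i,w_i^*\},p)$ for \emph{every} positive, normalized reference weight $\{w_i^*\}$. Feeding this into Proposition~\ref{pro:bound} and taking expectation over $\vx\sim q$ gives $\E_{\vx\sim q}[\,|\sum_i\hat w_i h(x_i)-\E_p h|^2\,]\le C_h^2\,\E_{\vx\sim q}[\S(\{x_i,w_i^*\},p)]$, so it suffices to exhibit one feasible (but not necessarily computable) reference weight whose expected KSD is $\bigO{n^{-(1+\alpha)}}$. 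Using the eigen-expansion $\kp(x,x')=\sum_\ell\lambda_\ell\phi_\ell(x)\phi_\ell(x')$ together with $\E_p\phi_\ell=0$, this expected KSD decomposes as $\sum_\ell \lambda_\ell\,\E_{\vx\sim q}[(\sum_i w_i^*\phi_\ell(x_i))^2]$, which I would split at a truncation level $m=m(n)$ into a head ($\ell\le m$) and a tail ($\ell>m$).

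For the reference weight I would take a \emph{multiplicatively tilted} importance weight, in the spirit of the multiplicative control variates of \citet{henmi2007importance,delyon2014integral}. Let $w_{0,i}=w_*(x_i)/\sum_j w_*(x_j)$ be the self-normalized IS weight and set $w_i^*=w_{0,i}\big(1+\sum_{\ell\le m}\beta_\ell\phi_\ell(x_i)\big)$, choosing the coefficients $\{\beta_\ell\}_{\ell\le m}$ so that the head is annihilated, i.e.\ $\sum_i w_i^*\phi_{\ell'}(x_i)=0$ for all $\ell'\le m$. The crucial observation is that the relevant Gram matrix $A_{\ell'\ell}=\sum_i w_{0,i}\phi_\ell(x_i)\phi_{\ell'}(x_i)$ is an IS estimate of $\E_p[\phi_\ell\phi_{\ell'}]=\delta_{\ell\ell'}$ (taking the $\phi_\ell$ orthonormal in $L^2(p)$, without loss of generality), so $A\approx I_m$ and the solution is $\beta\approx -z^{(0)}$ with $z^{(0)}_\ell=\sum_i w_{0,i}\phi_\ell(x_i)=\Op(n^{-1/2})$. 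Hence each $\beta_\ell=\Op(n^{-1/2})$, the tilt $\sum_{\ell\le m}\beta_\ell\phi_\ell(x_i)$ is of size $\Op(m\,n^{-1/2})$ uniformly (using $|\phi_\ell|\le M$), and $w_i^*$ stays positive and normalized as soon as $m=o(\sqrt n)$ --- this is exactly where the multiplicative (rather than additive) tilt buys feasibility.

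With the head exactly killed, only the tail survives: for $\ell'>m$ we have $\sum_i w_i^*\phi_{\ell'}(x_i)=z^{(0)}_{\ell'}+\sum_{\ell\le m}\beta_\ell A_{\ell'\ell}$, and since $A_{\ell'\ell}$ estimates $\E_p[\phi_\ell\phi_{\ell'}]=0$ for $\ell\ne\ell'$, each such sum is $\Op(n^{-1/2})$; squaring and summing against $\lambda_{\ell'}$ yields a tail of order $\res(m)/n=\bigO{n^{-1}\sum_{\ell>m}\lambda_\ell}$. The approximations $A\approx I_m$ and $A_{\ell'\ell}\approx 0$ generate lower-order correction terms whose fluctuations are controlled precisely by the assumed uniform bounds $\sum_\ell\lambda_\ell\le M$, $w_*\le M$, $|\phi_\ell|\le M$ and $\max_{\ell,\ell'}\var_{x\sim q}[w_*(x)^2\phi_\ell(x)\phi_{\ell'}(x)]\le M$. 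Finally I would optimize $m=m(n)$: larger $m$ shrinks the tail $\res(m)/n$ while positivity caps $m$ near $\sqrt n$, so taking $m$ near this feasibility threshold (up to logarithmic factors) and reading off the decay of $\res(m)$ pins down the exponent $\alpha\in(0,1]$, giving $\E_{\vx\sim q}[|\sum_i\hat w_i h(x_i)-\E_p h|^2]=\bigO{n^{-(1+\alpha)}}$ and hence the stated root-mean-square rate $\bigO{n^{-(1+\alpha)/2}}$.

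The hard part will be the uniform control of the head-annihilation step as $m$ grows with $n$: one must show simultaneously that the $m\times m$ IS-weighted Gram matrix $A$ concentrates around $I_m$ well enough to keep $\|\beta\|$ small, that the induced tilt preserves positivity, and that the $\Op(\cdot)$ tail corrections do not dominate $\res(m)/n$ --- all uniformly over the growing index set. This is where the four bounds $M$ and a matrix/U-statistic concentration argument do the work, and where the eigenvalue-decay profile $\res(m)$ is converted into the concrete value of $\alpha$; everything upstream follows cleanly from the minimization property of \eqref{equ:vw} and the MMD/eigen-expansion identity for $\S$.
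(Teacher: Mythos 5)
Your overall architecture matches the paper's: reduce via the optimality of \eqref{equ:vw} and Proposition~\ref{pro:bound} to exhibiting one feasible (not necessarily computable) reference weight, expand $\S$ in the eigenbasis of $\kp$, truncate at a level growing with $n$, and argue that the reference weight is a small perturbation of the positive IS weight so that the simplex constraints cost essentially nothing. Where you genuinely diverge is the construction of the reference weight. The paper (Appendix, Steps 1--2) uses an \emph{additive} two-fold control variate: it splits the sample, estimates the coefficients $\beta_\ell=\E_p[h\phi_\ell]$ by unbiased IS averages on one fold, corrects on the other fold, and symmetrizes. The resulting weights are explicit polynomials in the data, so $\E[\hat Z[\phi_{\ell'}]^2]$ is computed exactly; the head is not annihilated but merely reduced to its variance level, giving the bound $\frac{2}{n}\big[M_3\sum_{\ell>L}\lambda_\ell + M_4 L/n\big]$ by direct calculation. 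You instead tilt the self-normalized IS weights multiplicatively and solve an $m\times m$ linear system to kill the head exactly. That buys a cleaner tail-only bound, but at the cost of weights that involve $A^{-1}$.

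That inversion is where your proposal has a real gap. The theorem asserts a bound on $\E_{\vx\sim q}[|\cdot|^2]$, not an in-probability rate, and $\beta=-A^{-1}z^{(0)}$ admits no useful moment bound on the event that $A$ is ill-conditioned, which has positive probability for every finite $n$. You would need to condition on an event such as $\{\|A-I_m\|\le 1/2\}$, fall back to a default feasible weight (e.g., plain normalized IS, contributing $\Op(n^{-1})$) on its complement, and show that $M\cdot\prob(\text{bad event})$ is of lower order than $n^{-1}\sum_{\ell>m}\lambda_\ell$ --- a matrix-concentration bound that must hold uniformly as $m=m(n)$ grows. The same conditioning is needed for positivity, and for normalization: note that $\sum_i w_i^*=1+\sum_{\ell\le m}\beta_\ell z^{(0)}_\ell\neq 1$ exactly, so a final renormalization step is still required, just as in the paper's Step 3. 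None of this is fatal --- the paper does the analogous bad-event bookkeeping with Hoeffding and a two-sample U-statistic inequality in Lemmas~\ref{lem:ineq} and~\ref{lem:boundwp}, paying an extra $M_f(n+2)\exp(-n/(L^2M_0))$ term inside $\gamma(n)$ --- but it is the crux of the argument rather than a routine afterthought, and your sketch defers exactly this step. The paper's sample-split additive construction is the cleaner path precisely because it avoids matrix inversion, which is why its expectation bounds close in a few lines.
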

The proof of Theorem~\ref{thm:mainb0} (see Section~2.2 in Appendix) is based on first constructing a set of (possibly negative) reference weights using a control variates method based on the orthogonal basis functions $\{\phi_\ell\}$, and then zero out the negative elements and normalize the sum to obtain a set of positive normalized reference weights. 
This is made possible because the initial reference obtained by control variates 
is mostly likely positive, since they can be treated as a perturbed version of the typical (positive) importance weights $w_*(x)\propto p(x)/q(x)$ as used in 
Theorem~\eqref{thm:IS}, where the perturbation is introduced 
to cancel the estimation error and increase the accuracy.  
Therefore, zeroing out the negative elements does not have significant impact on the error bound compared with the initial reference weights. 
This provides a justification on the non-negative constraint, and allows us to construct a set of non-negative reference weights for our proof.  
The non-negative constraint, also known as the \emph{garrote} constraint \citep{breiman1995better}, is also motivated by
the empirical observation that it gives more stable results for small sample size $n$ (intuitively, it seems hard to believe that a large negative weight
would give improvements when $n$ is small, unless the points $\{x_i\}_{i=1}^n$ were introduced in a careful way). 


The proof of Theorem~\ref{thm:mainb0} (see Section~2.2 in Appendix) is based on constructing a reference weight using a control variates method based on the orthogonal basis functions $\{\phi_\ell\}$. Our constructed reference weights can be treated as  a perturbed version of the typical importance weights $w_*(x)\propto p(x)/q(x)$ as used in 
Theorem~\eqref{thm:IS}, where the perturbation is introduced to cancel the estimation error and increase the accuracy.  Since this reference weights concentrate around $w_*\propto p(x)/q(x)$ which is positive, we can zero out its negative values without much impact on the error bound. This provides a justification on the non-negative garrote constraint, and allows us to construct a set of non-negative reference weights for our proof.  
\todo{
Because the bound in \eqref{equ:bound} works for both positive and negative weights, enforcing the non-negative constraint can only deteriorate the error bound established based on \eqref{equ:bound}.  
Nevertheless, the non-negative restriction is motivated by empirical study that shows significant improvement for small $n$ (when things are noisy, using negative weights would likely deteriorate the performance); theoretically, this is justified by the intuition that the optimal reference weights is likely to be close to the typical importance weight $p(x)/q(x)$ (but adding perturbations in a subtle way) and hence satisfy $w_i \geq 0$ with high probability. The result in Theorem~B.6 in Appendix shows that the non-negative constraint has no influence on the convergence rate if $\res(n)$ decays faster than $O(n^{-1})$. 
 }

Similar theoretical analysis can be found in \citet{briol2015probabilistic, bach2015equivalence}. 
In particular, \citet{briol2015probabilistic} used a similar ``reference weight'' idea to establish a convergence rate for Bayesian Monte Carlo. 
The main technical challenge in our proof is to 
make sure that the reference weight satisfies the non-negative and self-normalization constraints. 
Section~2.2 in Appendix provides more detailed discussions. 
\subsection{Other ``Super-Efficient'' Weights}
\label{sec:other}
We review several other types of  ``supper-efficient'' weights that also give better convergence rates than the typical $\Op(n^{-1/2})$ rate; 
this includes Bayesian Monte Carlo and the related (linear) control variates method, 
as well as methods based on density approximation of the proposal distributions, 
which can be interpreted as \emph{multiplicative} control variates \citep{nelson1987control} that reduce the variance. 

\paragraph{Bayesian Monte Carlo and Control Variates} 
Bayesian Monte Carlo \citep{o1991bayes, ghahramani2002bayesian} 
was originally developed to 
evaluate integrals using Bayesian inference procedure with Gaussian prior, which turns out to be equivalent to a weighted form $\sum_i w_i h(x_i)$ with $w_i$ being a set of weights independent of the test function $h$;
unlike our method, these weights are not normalized to sum to one and can take negative values.  

From a RKHS perspective, 
one can interpret Bayes MC as approximating $\E_p[h(x)]$ 
with $\E_p[\hat h(x)]$ where $\hat h(x)$ is an approximation of $h(x)$ constructed by kernel linear regression 
based on the data-value pair $\{x_i, h(x_i)\}_{i=1}^n$. 
Let $\k_0(x,x')$ be the kernel used in Bayes MC,
then one can show that Bayes MC estimate equals $\sum_i \hat w_i h(x_i)$
with 
$\hat {\vv{w}} =  [\hat w_i]_{i=1}^n =  (\vv K_0 + \lambda I)^{-1}\vv b$, where 
$\vv K_0 = [\k_0(x_i,x_j)]_{ij}$ and 
$\vv b = [\E_{x\sim p}(\k_0(x,x_i))]_{i=1}^n$, and $\lambda$ a regularization coefficient. 
Equivalently, Bayes MC can be treated as minimizing the maximum mean discrepancy (MMD)
between $\{x_i, w_i\}$ and $p$, with a form of
$$
\hat{\vv w}  = \arg\min_{\vv{w}} \big \{ \vv w ^\top \vv K_0 \vv w - 2 \vv b^\top \vv w + \lambda || \vv w||_2^2\big\}. 
$$
One of the main difficulty of Bayesian MC, however, 
is that it depends on $\vv b = \E_{p}[k_0(x,x')]$, 
which can be intractable to calculate for complex $p(x)$. 

The control variates method \citep[e.g.,][]{liu2008monte} also relies on a (kernel) linear regressor $\hat h(x)$, but 
estimates $\E_ph$ with a bias-correction term $\frac{1}{n}\sum_{i=1}^n (h(x_i) - \hat h(x_i)) + \E_p [\hat h(x_i)]$, which can also be rewritten into a weighted form. 
Note that when $\lambda = 0$ and $\vv K_0$ is strictly positive definite, 
the $\hat h(x_i)$ becomes an interpolation of $h(x)$ (i.e., $h(x_i)=\hat h(x_i)$), and control variates and Bayes MC becomes equivalent. 
In control variates, one can also use only a subset of the data to estimate $\hat h(x)$ and use the remaining data to estimate 
the expectation of the difference $h(x) - \hat h(x)$; 
 this ensures the resulting estimator is unbiased. 
 
 Theoretically, the convergence rate of control variates and Bayesian Monte Carlo 
can both be established to be $\Op(n^{-(1+\alpha)/2})$, where $\alpha$ depends on how well $\hat h(x)$ can approximate $h(x)$; see \citet{oates2014control, oates2016convergence, briol2015probabilistic, briol2015frank, bach2015equivalence} for detailed analysis. 


Closely related to our work,
 \citet{oates2014control} and \citet{briol2015probabilistic} proposed to use the Steinalized kernel $\kpp(x,x') = \kp(x,x')+1$ 
in  control variates and Bayesian MC, respectively,\footnote{$\kp(x,x')$ can be not used directly in Bayesian Monte Carlo since it only includes functions with zero mean.} 
for which $\vv b  = \E_{x\sim p}[ \kpp(x,x')] = 1$. 
%
We can show that their method is equivalent to using the following weight 
\begin{align*}
\hat {\vv w} = \argmin_{\vv w} \{ \vv w^\top \KP \vv w + (\sum_i w_i - 1)^2 + \lambda ||w ||_2^2 \}. 
\end{align*}
This form is similar to our \eqref{equ:vw}, 
but does not enforce the non-negative garrote
constraint \citep{breiman1995better}
and replacing the normalization constraint $\sum_i w_i = 1$ with a quadratic regularization with regularization coefficient of one.
Here the L2 penalty $\lambda ||w ||_2^2$ is necessary for ensuring numerical stability in practice. 
In our case, it is the non-negative constraint that helps stabilize the optimization problem, without needing to specify a regularization parameter. 

\todo{
 \citet{oates2014control} referred his method a \emph{control functional} method because $\kpp(x,x')$ corresponds to an infinite dimensional feature map. 
The convergence rate of control variates and Bayesian Monte Carlo 
are both established to be $\Op(n^{-(1+\alpha)/2})$, where $\alpha$ depends on how well $\hat h(x)$ can approximate $h(x)$; see \citet{oates2014control, oates2016convergence, briol2015probabilistic, briol2015frank, bach2015equivalence} for detailed analysis. 
}

\vspace{-.1\baselineskip}
\paragraph{Approximating the Proposal Distribution}
Another (perhaps less well known) set of methods are based on replacing the importance weight $w_*(x) = p(x) / q(x)$ with an approximate version $\tilde w(x) = p(x) / \hat q(x)$, where $\hat q(x)$ is an estimator of proposal density $q(x)$ from $\{x_i\}_{i=1}^n$. 
While we may naturally expect that such approximation would decrease the accuracy compared with the typical IS that uses the exact $q(x)$, 
surprising results \citep{henmi2007importance, delyon2014integral} show that in certain cases the approximate weights $\tilde w(x)$ actually improve the accuracy. 
To gain an intuition why this can be the case, observe that we have $\tilde w(x) = [p(x)/q(x)] \cdot [q(x) / \hat q(x)]$, 
where the second term $q(x)/\hat q(x)$ may acts as a (multiplicative) control variate \citep{nelson1987control} which can decrease the variance 
if it is negatively correlated with the rest parts of the estimator. 
For asymptotic analysis, it is common to expend multiplicative control variates using Taylor expansion, 
which reduces it to linear control variates.

In particular, \citet{henmi2007importance} showed that when $q(x)$ is embedded in a parametric family $\mathcal Q = \{q(x \cd \theta)$, $\theta \in \Theta\}$, 
replacing $w_*(x)$ with 
the approximate weight $\tilde w(x) = p(x)/ \hat q(x)$, where $\hat q$ is the maximum likelihood estimator of $q(x)$ within $\mathcal Q$,
 would guarantee to decrease the asymptotic variance compared with the standard IS. 
The result in \citet{delyon2014integral} forms a 
non-parametric counterpart of \citet{henmi2007importance}, 
in which it is shown that taking $\hat q(x)$ to be a leave-one-out kernel density estimator of $q(x)$ would give super-efficient error rate $\Op(n^{-(1+ \alpha)/2})$ where $\alpha$ is a positive number that depends on the smoothness of $q(x)$ and $p(x)h(x)$. 
\todo{Both the weights defined in \citet{delyon2014integral} and \citet{oates2014control, oates2016convergence} are (typically) positive, but are not self-normalized. }
%
\todo{
\textbf{Further Discussion}
We should point out that the weights choose by Bayesian MC can also be interpreted as minimizing a MMD-based discrepancy, except without the normalization and positivity constraint. 
For example, consider the Bayesian Monte Carlo using kernel $\kpp(x,x') = \kp(x,x') + 1$ ($\kp(x,x')$ can be not used directly since it only includes functions with zero mean), we can show that its weights can 
$$
\hat {\vv w} = \argmin_{\vv w} \{ \vv w^\top \KP \vv w + (\sum_i w_i - 1)^2 \}. 
$$
This form is similar to our \eqref{equ:vw}, but does not strictly enforce the positivity and replacing the normalization constraint $\sum_i w_i = 1$ with a corresponding quadratic regularization.  
One may further apply a regularization coefficient on the quadratic regularization term, but it introduces an addition tuning parameter. 
In our experiments, we find that enforcing the normalization $\sum_i w_i =1$ can significantly stabilize the algorithm performance, especially in high dimensions. This \emph{self-normalization} idea has been well discussed in importance sampling literature \citep[e.g.,][]{}, but seems to be less discussed from the Bayesian Monte Carlo perspective. 
}
%
\todo{
\textbf{Related Works}
On the other hand, Quasi Monte Carlo type methods deterministically select $\{\vv x, \vv w\}$ by minimizing certain discrepancy $\S(\{\vv x, \vv w\}; ~ p)$ to $p(x)$ which are upper bounds of the approximation error in the sense of 
$$
|\sum_i w_i h(x_i)  ~- ~ \E_p(h(x))| ~ \leq ~ C_h \cdot \sqrt{\S(\{x_i, w_i\}, ~ p)}, 
$$
where $C_h$ is a constant related to $h$. 
The discrepancy is often taken to be a maximum mean discrepancy \citep{smola2007hilbert}; 
We remark that standard Quasi Monte Carlo \citep[e.g.,][]{niederreiter2010quasi} and \citep[][]{chen2010super, bach2015equivalence} usually assumes uniform weights, while Bayesian quadrature further optimizes the weights  \citep[e.g.,][]{o1991bayes, ghahramani2002bayesian, briol2015probabilistic}. These methods, however, only work with relatively simple $p(x)$. 
}

\begin{figure*}[t]
   \centering
\scalebox{.9}{   
   \begin{tabular}{ccccc}
 \hspace{-1.5em}   \raisebox{1.5em}{  \includegraphics[height=0.14\textwidth]{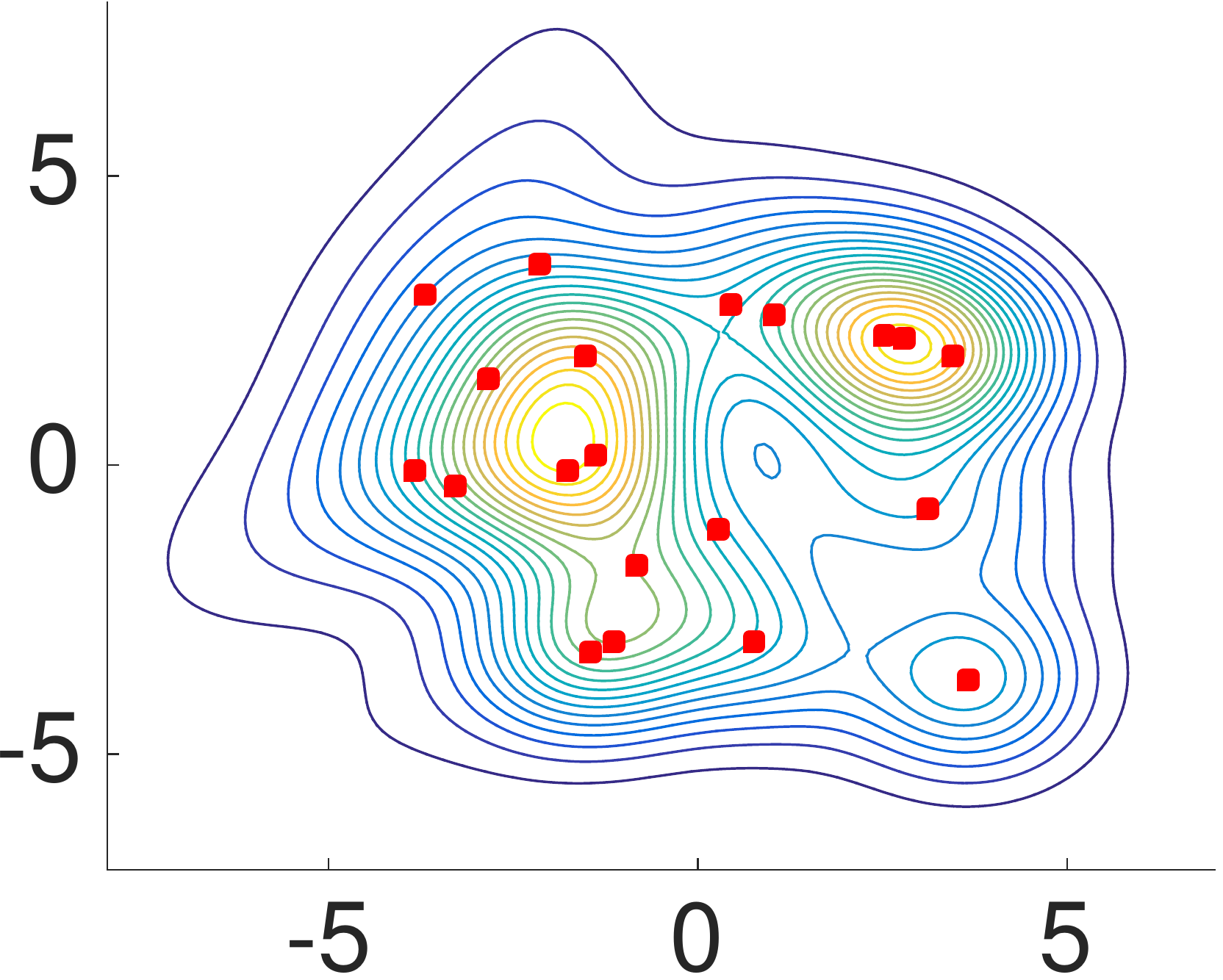} } \hspace{-2em}&
     \hspace{-1.8em}   \includegraphics[height=0.2\textwidth]{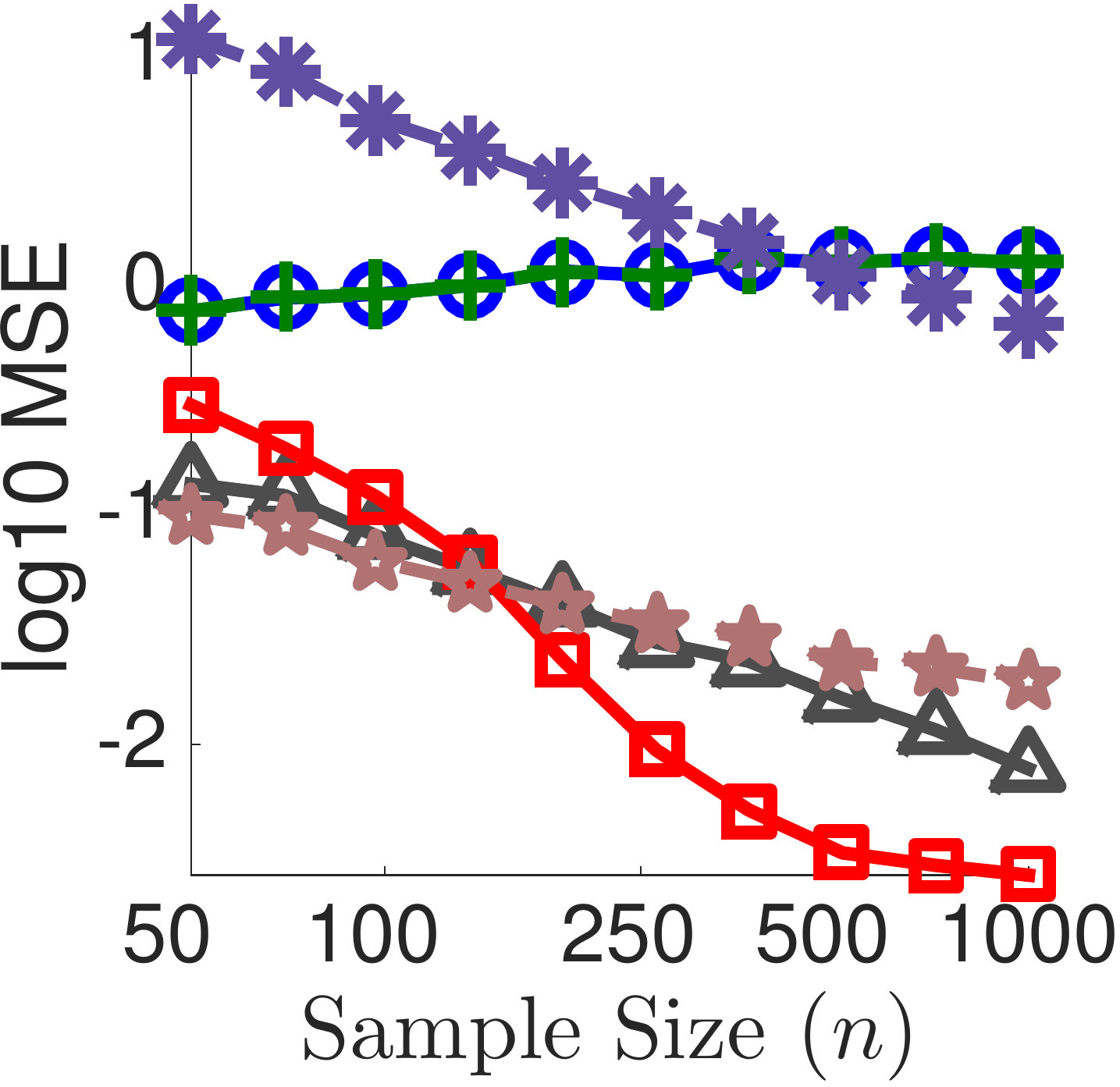} &
 \hspace{-1em}  \raisebox{0em}{  \includegraphics[height=0.2\textwidth, trim={1.4cm 0 0 0},clip]{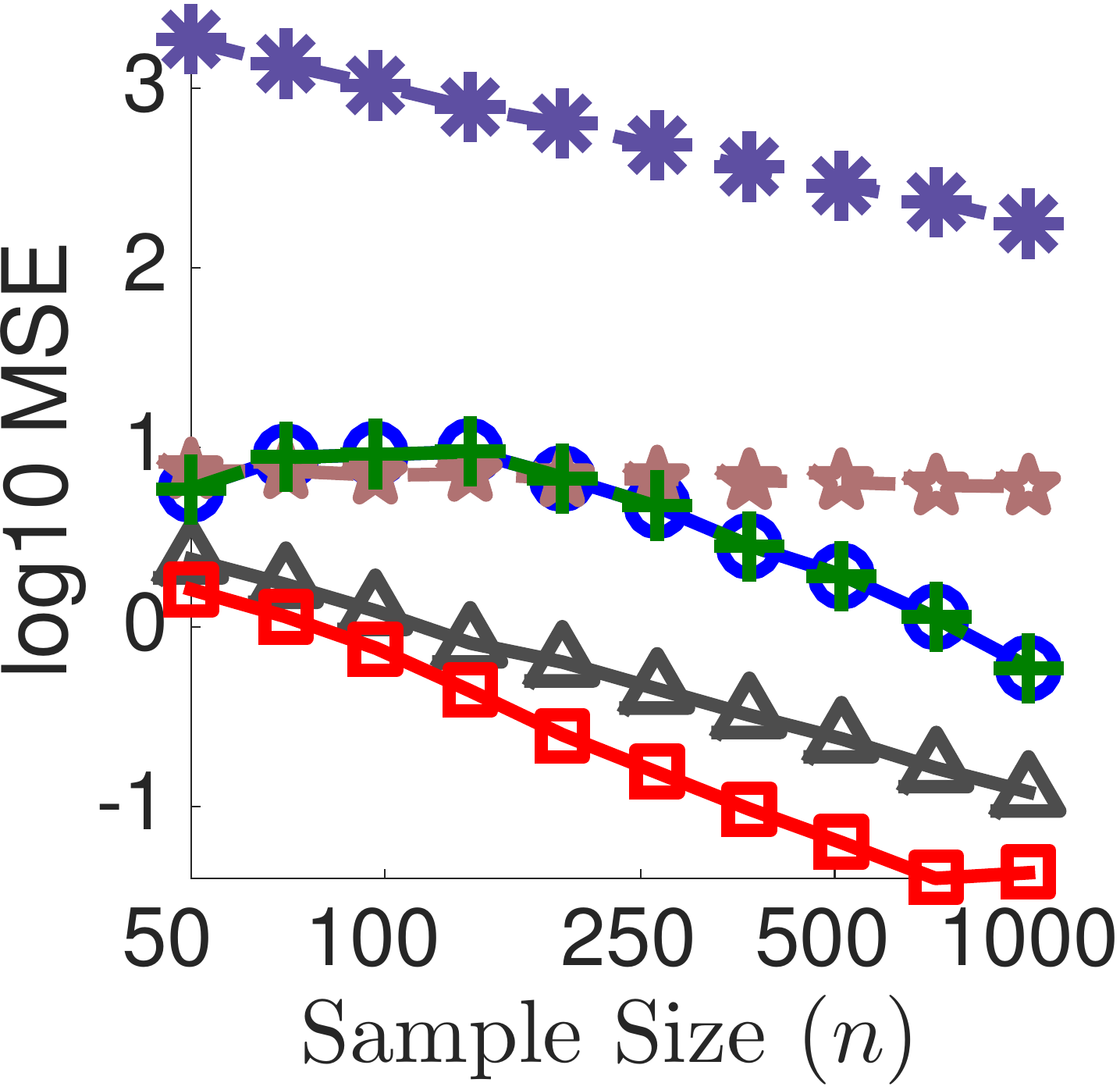}  }&
 \hspace{-1em}  \raisebox{0em}{  \includegraphics[height=0.2\textwidth, trim={1.4cm 0 0 0},clip]{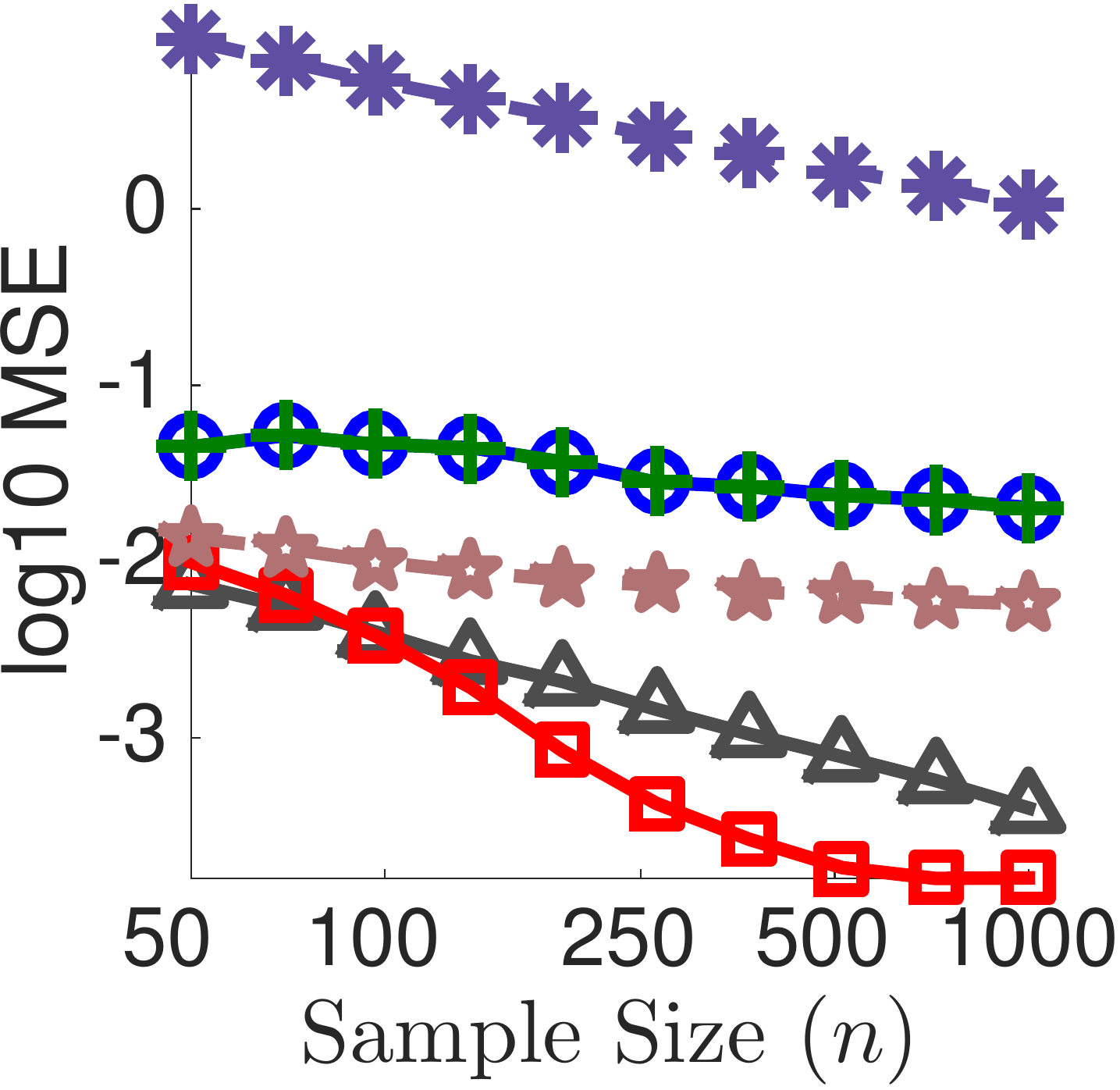}  }&
 \hspace{-2em}  \raisebox{4em}{  \includegraphics[height=0.1\textwidth]{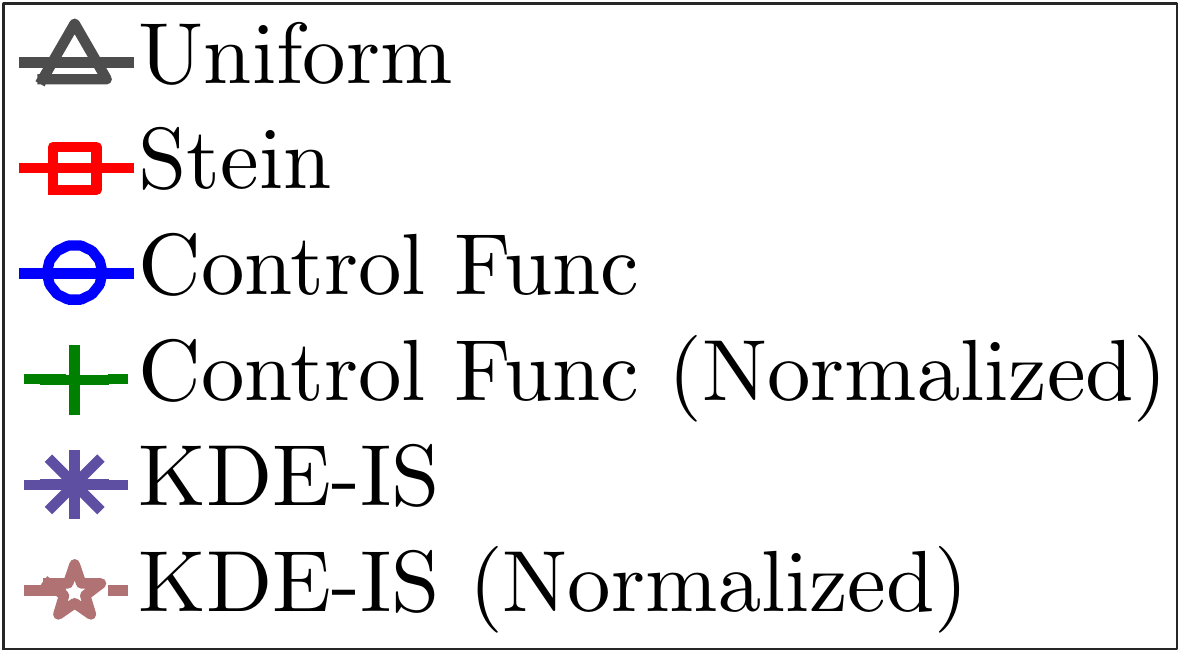}  } \\  
{ \small (a) Contour of $p(x)$} & {\small(b) $\E(x)$} & {\small(c) $\E(x^2)$} & {\small(d) $\E(\cos(\omega x+b))$} &
   \end{tabular}
   }
   \caption{\capfontsize Gaussian Mixture Example. (a) The contour of the distribution $p(x)$ that we use;
   the red dots represent the centers of the mixture components. 
The sample $\{x_i\}$ is i.i.d. drawn from $p(x)$ itself. 
   (b) - (c) The MSE of the different weighting schemes
   for estimating $\E_p h$, when $h(x)$ equals $x$, $x^2$, and $\cos(\omega x+b)$, respectively. For $h=\cos(\omega x+b)$ in (d), we draw $\omega\sim\normal(0,1)$ and $b \sim \textrm{Uniform}([0,2\pi])$ and average the MSE over $20$ trials.}
   \label{fig:gmm}   
\end{figure*}

\begin{figure*}[t]
   \centering
\scalebox{.9}{      
   \begin{tabular}{cccc}
      \includegraphics[width=0.25\textwidth]{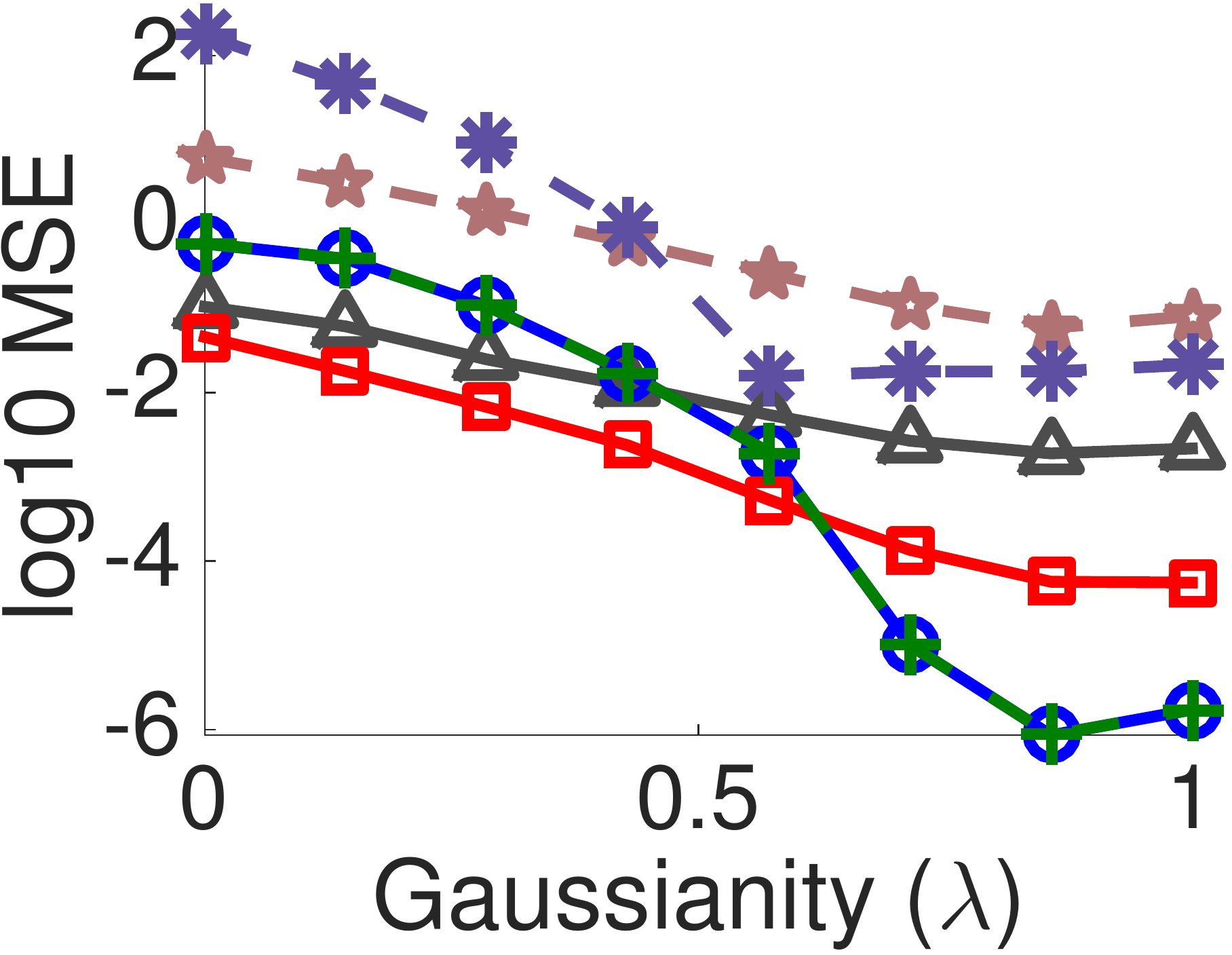}
      &\hspace{.1\textwidth}
      &   
   \includegraphics[width=0.25\textwidth]{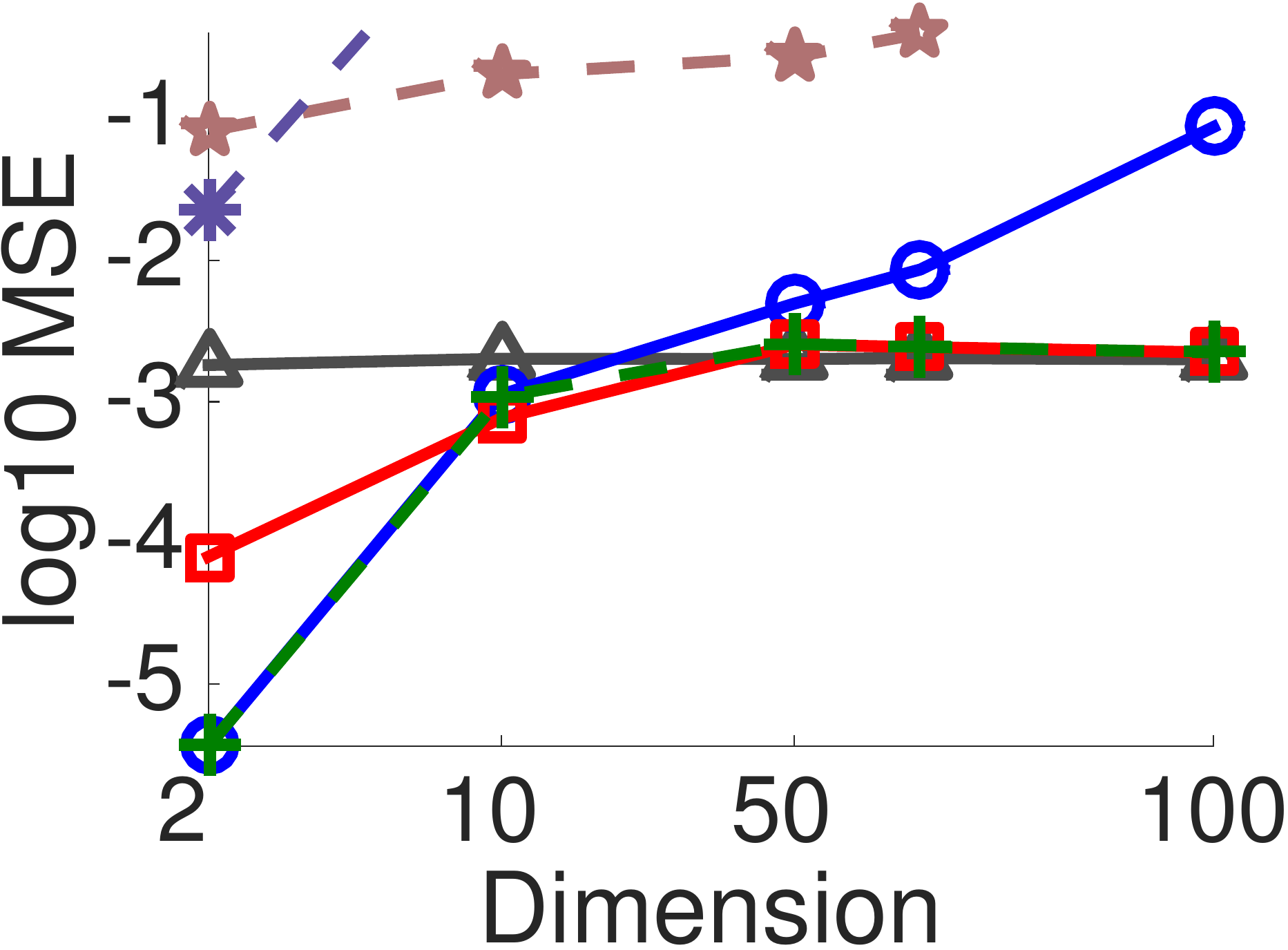} &
\hspace{0em} \raisebox{4em}{   \includegraphics[width=0.2\textwidth]{figures/Leg_51steinISmala0d0039486_binprobitn1000d50wtrue1_50_0mu0invS0d01_NVec50_10_1000_sVec100.pdf}  } \\
 { \small(a) Varying Gaussianity} && {\small(b) Varying Dimension}  &
   \end{tabular}
   }
   \caption{\capfontsize
   (a) Results on $p(x | \lambda)$ where $\lambda$ indexes the Gaussianity: $p(x|\lambda)$ equals $\normal(0,1)$ when $\lambda=1$ and it reduces to the $p(x)$ in Figure~\ref{fig:gmm}(a) when $\lambda=0$. (b) Results on standard Gaussian distribution with increasing dimensions. The sample size is fixed to be $n=100$ in both (a) and (b). 
   \todo{We see that {\tt Control Func} performs the best for low dimensional Gaussian, but seems to degenerate for non-Gaussian and high dimensional cases;  {\tt Control Func (Noramlized)} stablizes the result in high dimensions as shown in (b).}
    The MSE is for estimating $\E(x^2)$.}
   \label{fig:gmm2}
\end{figure*}

\begin{figure}[t]
   \centering
\scalebox{.9}{      
   \begin{tabular}{c}
       \includegraphics[width=0.25\textwidth]{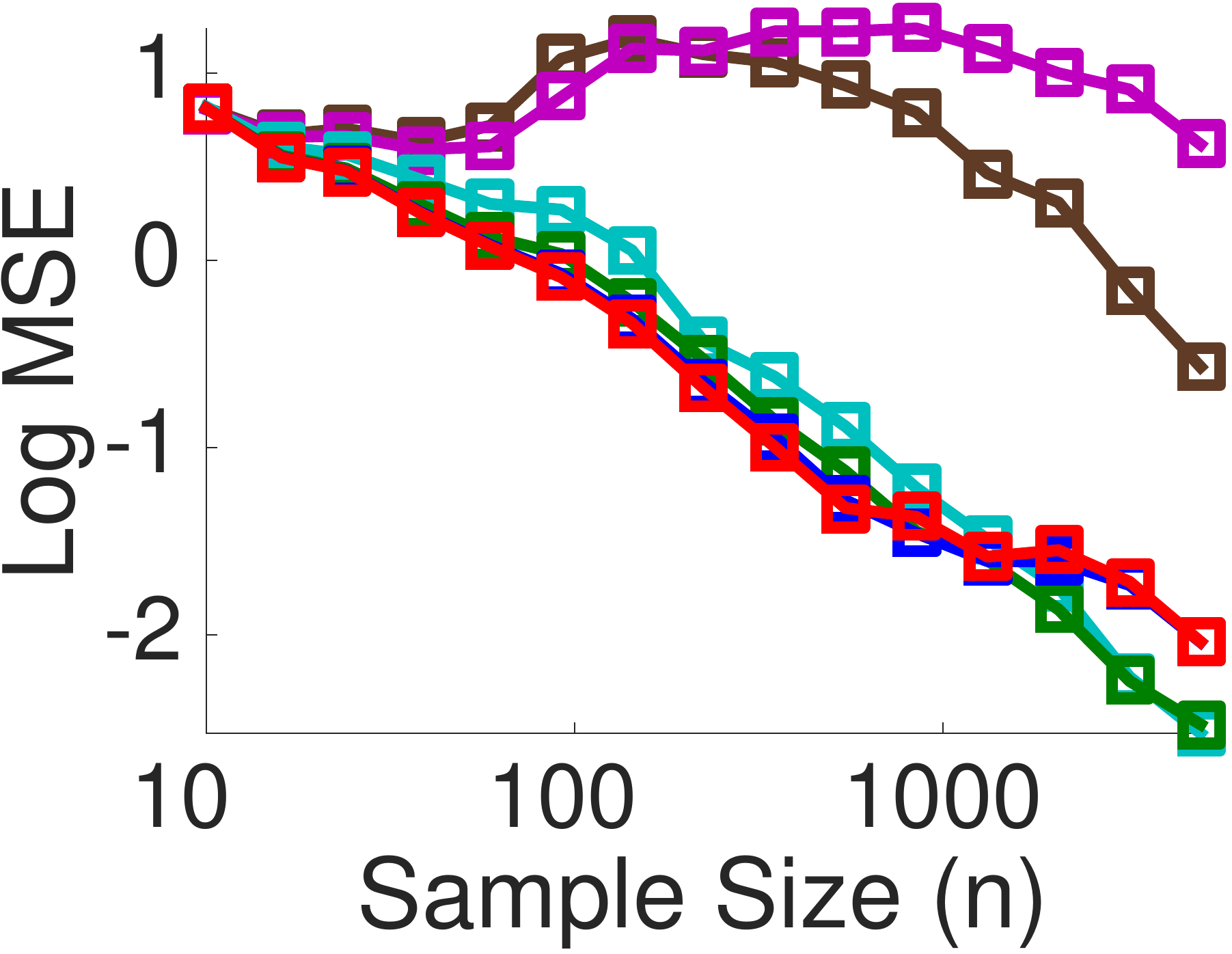}
       \raisebox{1em}{   \includegraphics[width=0.1\textwidth]{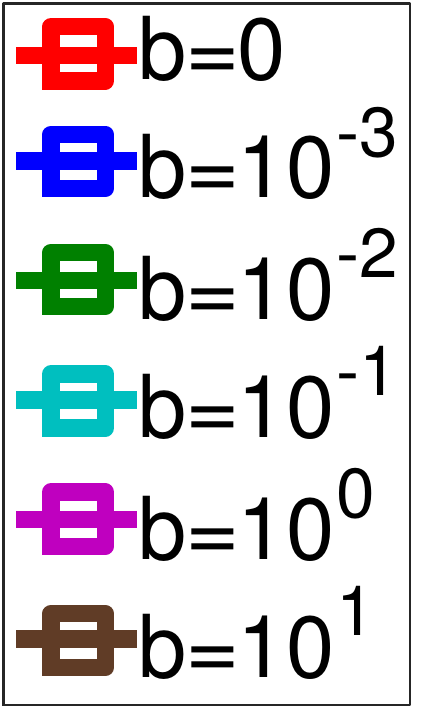}} \\    
 {\small (c) Stein with relaxed $w_i\geq -b$ constraint}
   \end{tabular}
   }
   \caption{\capfontsize
   The result of our method on the $p(x)$ in Figure~\ref{fig:gmm}(a) when the non-negative constraint $w_i \geq 0$ replaced by a general lower bound $w_i \geq -b $ with different values of $b$. 
    The MSE is for estimating $\E(x^2)$.}
   \label{fig:gmm2lowerbound}
\end{figure}

\section{Experiments}\label{sec:experiments}
We  empirically evaluate our method and compare it with the methods mentioned above, first on an illustrative toy example based on Gaussian mixture, and then on Bayesian probit regression. 
The methods we tested all have a form of $\sum_i w_i h(x_i)$, where the weights $w_i$ are decided by one of the following algorithms: 

1. Uniform weights $w_i = 1/n$ ({\tt Uniform}). 

2. Our method that solves \eqref{equ:vw} (referred as {\tt Stein}), for which we use RBF kernel 
$k(x,x') = \exp(-\frac{1}{h}||x-x'||^2_2)$; the bandwidth $h$ is heuristically chosen to  be the median of the pairwise square distance of data $\{x_i\}_{i=1}^n$
as suggested by \citet{gretton2012kernel}.  


3. The control functional method {\tt Control Func} following the empirical guidance in \citet{oates2014control}, which is also equivalent to Bayesian MC with kernel $\kpp(x,x') = \kp(x,x')+1$.
Note that the weights $\{ w_i\}$ in this method may be negative and do not necessarily sum to one. We also test a modified version of it $\sum_i w_i h(x_i) /\sum_i w_i$
that normalizes the weights and refer it as {\tt Control Func (Normalized)}. The kernel $k(x,x')$ and the bandwidth are taken to be the same as our method. 
We follow \citet{oates2014control}'s guidance to select that an L2 regularization coefficient to stabilize the algorithm. 

4. The kernel density estimator (KDE) based method by \citet{delyon2014integral} ({\tt KDE}), which uses weights $w_i =n^{-1} p(x_i)/\hat q_i(x_i)$, where $\hat q_i(x)$ is a leave-one-out KDE of form $\hat q_i(x) = \sum_{j\neq i} k(x, x_j)/n$. We report the result when using RBF kernel with bandwidth decided by the rule of thumb 
$h = \hat \sigma \big(\frac{d2^{d+5} \Gamma(d/2+3)}{(2d+1)n}\big)^{1/(4+d)}$, where $\hat \sigma$ is the standard deviation of $\{x_i\}_{i=1}^n$ and $d$ is the dimension of $x$. 
We also tested the choice of kernel and bandwidth suggested in \citet{delyon2014integral} but did not find consistent improvement. 
Similar to the case of the control functional method, we also test a self-normalized version of {\tt KDE} and denote it by {\tt KDE (Normalized)}. 

We evaluate these methods by comparing their mean square errors (MSE) for estimating $\E_p h$, with $h(x)$ taken to be $x$, $x^2$ or $\cos(\omega x + b)$. 
For $h(x) =\cos(\omega x+b)$, we draw $\omega\sim\normal(0,1)$ and $b \sim \textrm{Uniform}([0,2\pi])$ and average the MSE over $20$ random trials. 

\paragraph{Gaussian Mixture}
We start with a $2$-D Gaussian mixture distribution $p(x) = \sum_j \beta_j \normal(x; ~ \mu_j, \sigma_j^2)$ with $20$ randomly located mixture components shown in Figure~\ref{fig:gmm}(a), and draw $\{x_i\}_{i=1}^n$ from $p(x)$ itself. 
The MSEs for estimating $\E_ph$ with different $h(x)$ as the sample size $n$ increases are shown in Figure~\ref{fig:gmm}(b)-(d), 
where we generally find that our method tends to perform among the best. 

In Figure~\ref{fig:gmm2}(a), we study the performance of the algorithms on distributions with different Gaussianity, where we replace $p(x)$ with 
a series of distributions 
$p(x \mid \lambda)$ whose random variable is $(1-\lambda)x + \lambda \xi$
where $x\sim p$, $\xi \sim \normal(0,1)$ and $\lambda \in [0,1]$ controls the Gaussianity of $p(x \mid \lambda)$: it reduces to $p(x)$ when $\lambda=0$ and equals $\normal(0,1)$ when $\lambda = 1$.  
We observe that {\tt Stein} tends to perform the best when the distribution has high non-Gaussianity, 
but is suboptimal compared with {\tt Control Func} when the distribution is close to Gaussian.


In Figure~\ref{fig:gmm2}(b), we consider how the different algorithms scale to high dimensions by setting $p(x)$ to be the standard Gaussian distribution with increasing dimensions. 
We generally find that our {\tt Stein} tends to perform among the best under the different settings,
expect for low dimensional standard Gaussian under which {\tt Control Func} performs the best. 
The self-normalized versions of {\tt KDE}  and {\tt Control Func} can help to stabilize the algorithm in various cases, for example, 
{\tt KDE (Normalized)} significantly improves over {\tt KDE} in all the cases,  and 
{\tt Control Func (Normalized)} is significantly better than {\tt Control Func} in high dimensional cases as shown in Figure~\ref{fig:gmm2}(b). 

Figure~\ref{fig:gmm2lowerbound} shows the performance of our method with the non-negativity constraint ($w_i \geq 0$) replaced by $(w_i \geq -b)$ where 
$b$ is a positive number that takes different values. 
We find that the result of $w_i \geq 0$ generally performs the best when $n$ is small (e.g., $n < 1000$), but is slightly suboptimal when $n$ is large. 
Because the stability in the small $n$ case is more practically important than the large $n$ case,
given that the absolute difference on MSE would be negligible in the large $n$ region, 
we think enforcing $w_i \geq 0$ is a simple and good practical procedure. 

\begin{figure}[t]
   \begin{tabular}{ll}   
   \hspace{-.015\textwidth}
\includegraphics[height=0.2\textwidth]{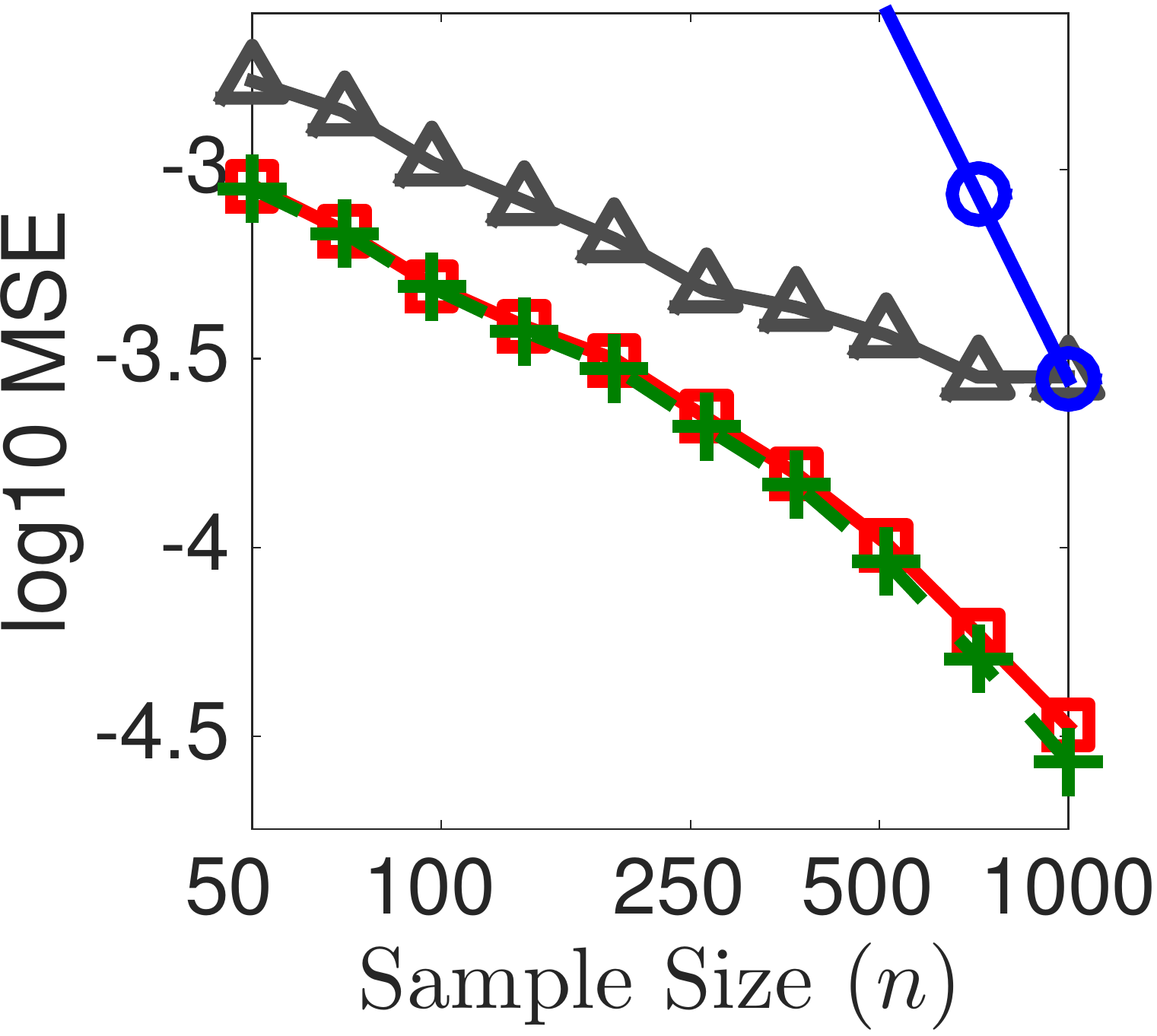}  & 
\hspace{.01\textwidth}   
\includegraphics[height=0.2\textwidth, trim={1.15cm 0 0 0},clip]{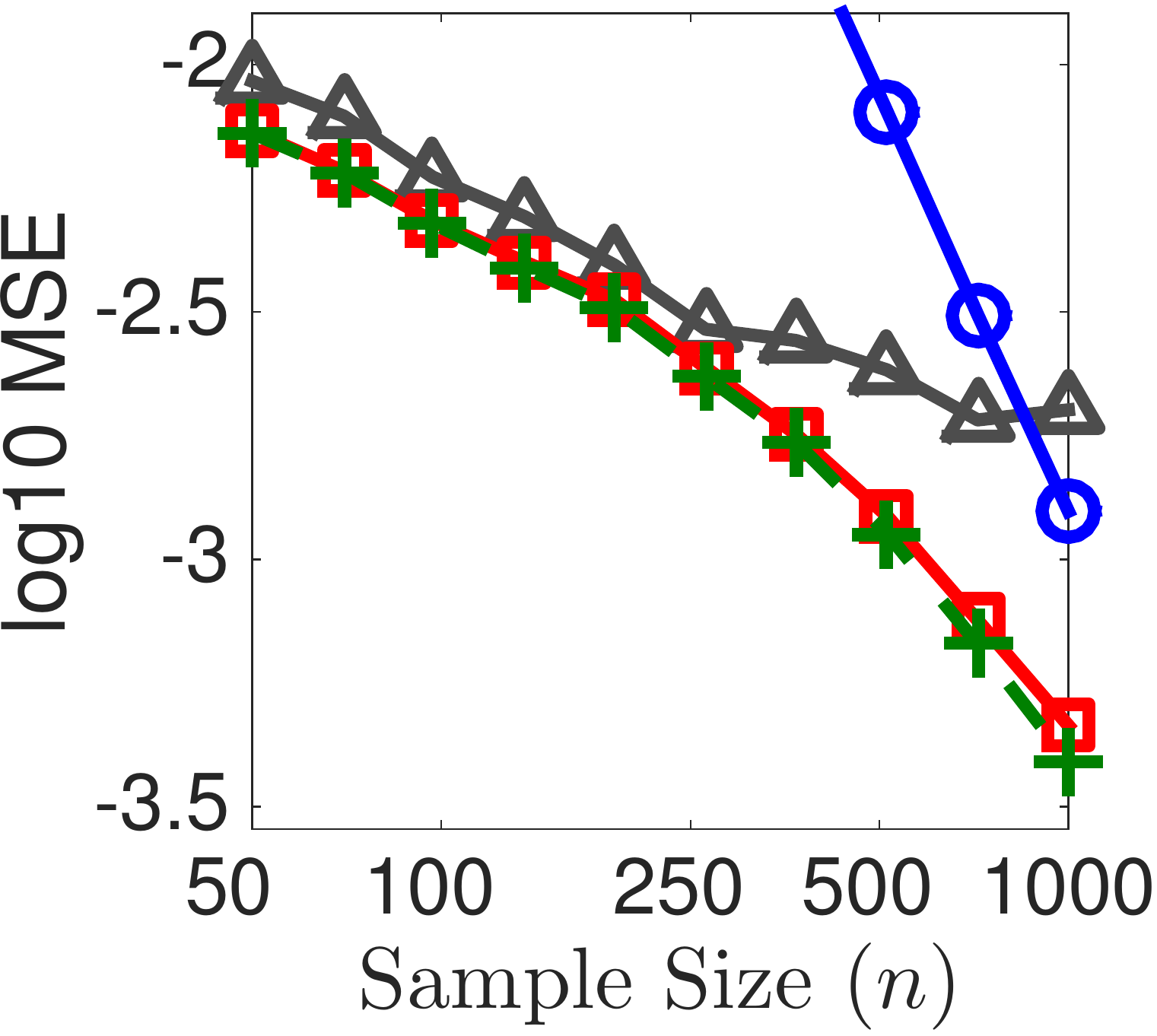}  
\hspace{-.27\textwidth}\raisebox{2.5em}{   \includegraphics[width=0.18\textwidth]{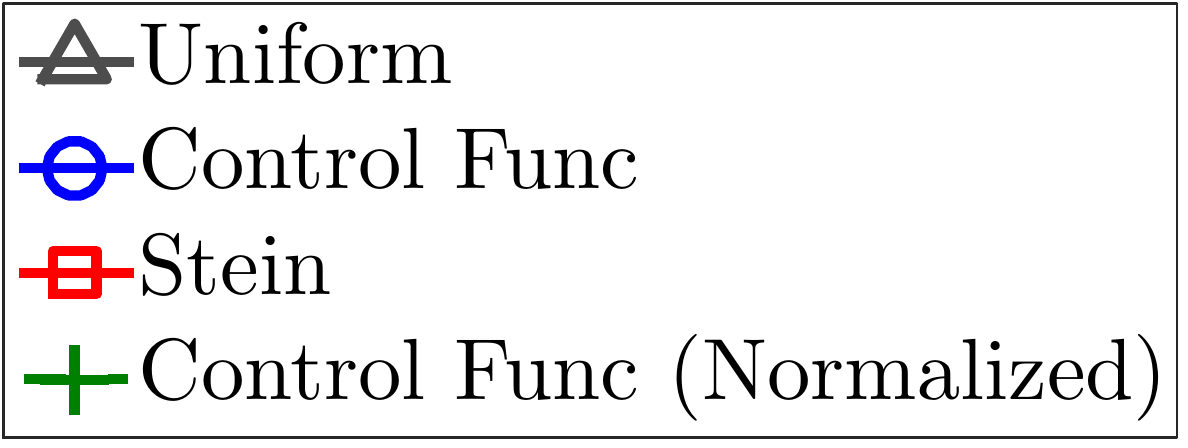} }
\\
{\footnotesize(a) Estimating $\E(x)$} 
& 
{\footnotesize(b) Estimating $\E(x^2)$} 
   \end{tabular}
   \caption{
\capfontsize   Results of Bayesian probit model 
   with simulated data. 
  We generate $\{x_i\}_{i=1}^n$ by simulating $n$ parallel chains of stochastic gradient Langevin Dynamics with a mini-batch size of $100$ for $100$-steps. 
 {\tt KDE} and {\tt KDE (Normalized)} perform significantly worse in this case, and are not show in the figure. 
 }
   \label{fig:probit1}
\end{figure}

\begin{figure}[t]
   \begin{tabular}{ll}  
   \hspace{-.015\textwidth}    
\includegraphics[height=0.2\textwidth]{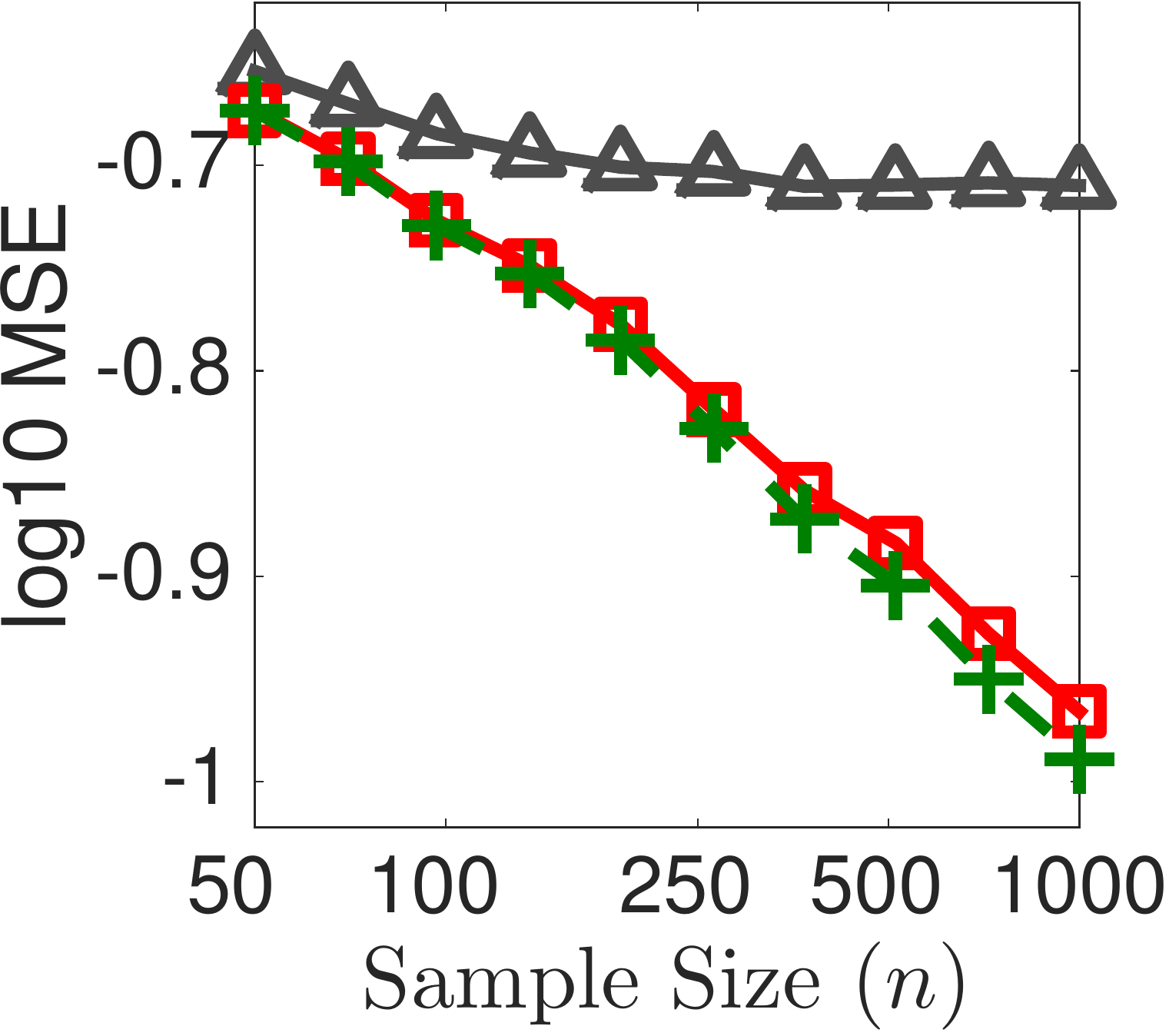}  & 
\hspace{.01\textwidth}   
\includegraphics[height=0.2\textwidth, trim={1.15cm 0 0 0},clip]{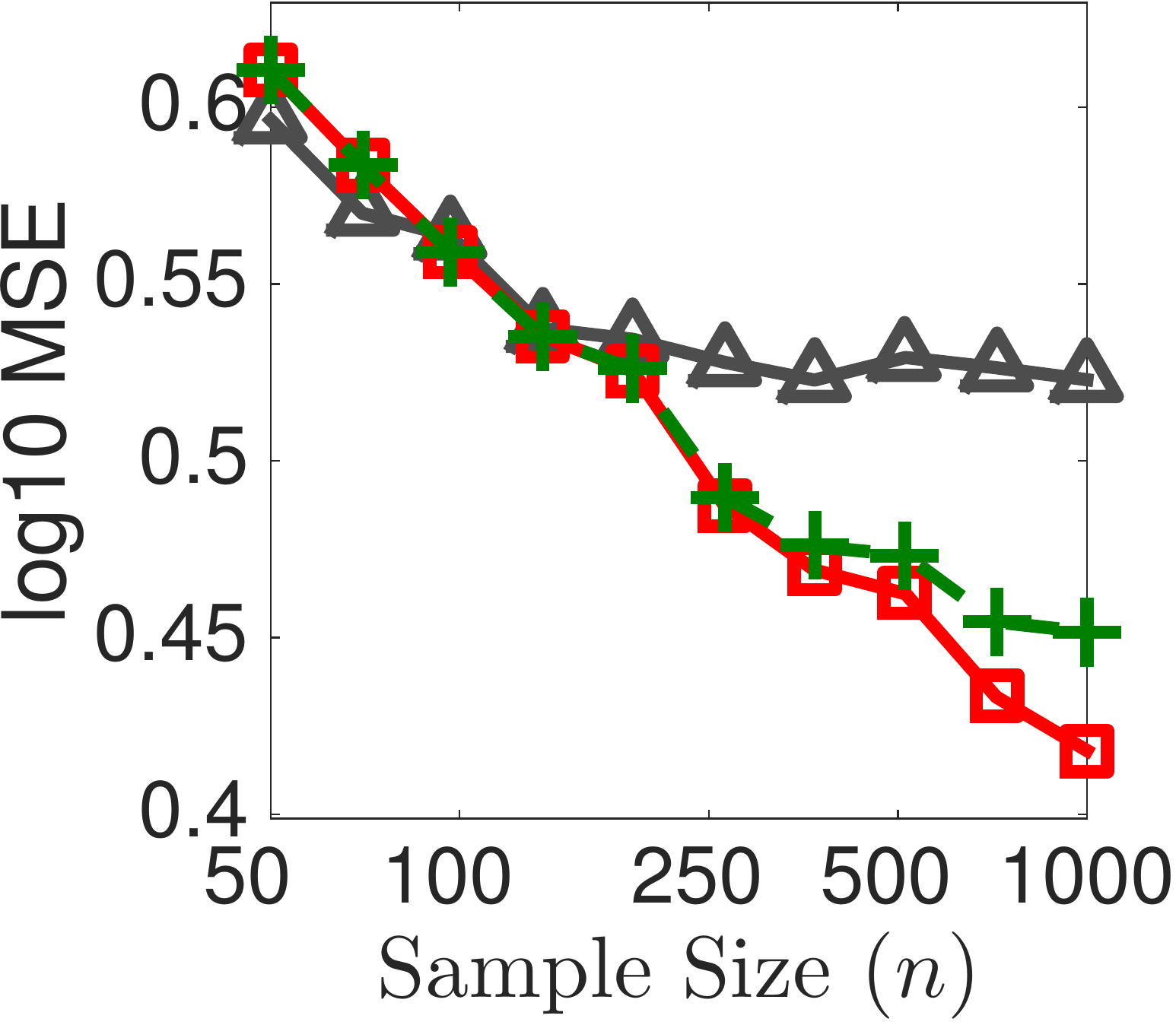}  
\hspace{-.28\textwidth}\raisebox{2.0em}{   \includegraphics[width=0.17\textwidth]{figures/Leg_NewsteinISstocLA0d00023342_covtypebinprobitn10000d54wtrueNaNmu0invS0d1_54_0d1_NVec50_10_1000_sVec1000.pdf} }
\\
{\footnotesize(a) Estimating $\E(x)$} 
& 
{\footnotesize(b) Estimating $\E(x^2)$} 
   \end{tabular}
   \caption{\capfontsize
   Results of Bayesian probit model 
   on the Covtype dataset.    
  We generate $\{x_i\}_{i=1}^n$ by simulating $n$ parallel chains of stochastic gradient Langevin Dynamics with a mini-batch size of $100$ for $1000$-steps. 
  The unnormalized {\tt Control Func}, as well as {\tt KDE} and {\tt KDE (Normalized)} perform significantly worse in this case, and are not show in the figure. 
 }
   \label{fig:probit2}
\end{figure}

\paragraph{Bayesian Probit Model}
We consider the Bayesian probit regression model for binary classification. Let $D = \{\chi_\ell, \zeta_\ell\}_{\ell=1}^N$ be a set of observed data with feature vector $\chi_\ell$ and binary label $\zeta_\ell\in \{0,1\}$. 
The distribution of interest is $p(x)\coloneqq p(D | x) p_0(x)$ with 
$$p( D | x) = \prod_{\ell=1}^N \big[ \zeta_\ell \Phi(x^\top \chi_\ell )  + (1-\zeta_\ell) ( 1-\Phi(x^\top \chi_\ell ) ) \big],$$
where $\Phi(\cdot)$ is the cumulative distribution function of the standard normal distribution, 
and $p_0(x) = \normal(x; 0, 0.1)$ is the prior. 

To test our method, we simulate $\{x_i\}_{i=1}^n$  
by running $n$ parallel chains of stochastic Langevin dynamics \citep{welling2011bayesian}. 
Since this method is an inexact MCMC, its stationary distribution should be different from the target distribution $p(x)$. 
As a result, directly averaging $\{x_i\}_{i=1}^n$ with uniform weights ({\tt Unif}) 
can give relatively poor results with convergence rate slower than the typical $\O(n^{-1/2})$ rate (see e.g., \citet{teh2014consistency}). The black-box weights can be used refine the result. 

Figure~\ref{fig:probit1} shows the result on a small simulated dataset with $100$ data instances and $10$ features. 
We can find that {\tt Stein} and {\tt Control Func (Normalized)} 
significantly improve the performance over {\tt Unif}. 
Interestingly, we find that the unnormalized {\tt Control Func}, as well as {\tt KDE} and {\tt KDE (normalized)} (not show in the figure) perform significantly worse in this case.   

Figure~\ref{fig:probit2} shows the result on the Forest Covtype dataset 
from the UCI machine learning repository \citep{Bache+Lichman:2013}; 
it has 54 features, and is reprocessed to get binary labels following \citet{collobert2002parallel}. 
For our experiment, we take the first 10,000 data points, so that it is feasible to evaluate the ground truth with 
No-U-Turn Sampler (NUTS) \citep{hoffman2014no}. 
We again find that  {\tt Stein} and {\tt Control Func (Normalized)} improves over the uniform weights, 
and the unnormalized {\tt Control Func} and {\tt KDE} and {\tt KDE (normalized)} again 
perform significantly worse and are not shown in the figure. 

 

\section{Conclusion}\label{sec:conclusion}
We propose a \emph{black-box importance sampling} method that calculates importance weights without knowing the proposal distribution, 
which also has the additional benefit of providing variance reduction.  
We expect our method provides a powerful tool for solving many difficult problems were previously intractable via importance sampling. 

{\small
\bibliographystyle{myunsrtnat}
\bibliography{bibrkhs_stein}
}

\onecolumn
\title{Appendix} 

\appendix
This document contains derivations and other supplemental information for ``Black-box Importance Sampling''.
\section{Kernelized Stein Discrepancy}
\todo{
Given RKHS $\H$ with kernel $\k(x,x')$, the maximum mean discrepancy (MMD) between two distributions with density $p(x)$ and $q(x)$ is defined as 
$$
\mathrm{MMD}_{\H}(q, p) =  \max_{f\in \H } \big\{  \E_q f - \E_p f ~~~s.t.~~~  ||f||_{\H}\leq 1 \}, 
$$
which can be shown to be equivalent to 
$$
\mathrm{MMD}_{\H}(q, p) = 
\E_{x,x'\sim p} [ \k(x, x') ]  - 2 \E_{x\sim p;y\sim q}[ \k(x, y) ]  + \E_{y,y'\sim q}[ \k(y, y') ].
$$
We show that kernelized discrepancy $\S(q, ~ p)$ is equivalent to $\mathrm{MMD}_{\Hp}(q, p)$, equipped with the $p$-Steinalized kernel $\kp(x,x')$. 
\begin{pro}
Assume \eqref{equ:Dhh} is true, we have
$$
  \S(q, ~ p)  = \mathrm{MMD}_{\Hp}(q, p). 
$$
\end{pro}
\begin{proof}
Simply note that $\E_{x' \sim p}[ \kp(x,x') ] = 0$ for any $x$, we have
$$
\mathrm{MMD}_{\Hp}(q, p)  = 
 \E_{x,x'\sim q}[\kp(x, x')] 
 = \S(q, ~ p).
 $$
\end{proof}
Similarly, we also have 
\begin{align*}
  \S(\{x_i, w_i\}_{i=1}^n, ~ p)  
  & = \mathrm{MMD}_{\Hp}(\{x_i, w_i\}, ~ p) \\
    & \coloneqq  \max_{f\in \H } \big\{  \sum_i w_i f(x_i) - \E_p f ~~~s.t.~~~ ||f||_{\H}\leq 1 \}. 
\end{align*}
}
\todo{The follow result is elementary, and also in \citet[Theorem 3][]{oates2014control}; see also, e.g., \citet{smola2007hilbert, bach2015equivalence}.  }
\myempty{
\begin{thm}\label{thm:mainbound}
Assume $h -\E_p h \in \Hp$, and $\sum_i w_i = 1$, we have 
$$
|  \sum_i w_i h(x_i)  -  \E_p h | \leq   || h - \E_p h ||_{\Hp} \cdot  \sqrt{  \S(\{w_i, x_i \}, ~ p) }
$$
\end{thm}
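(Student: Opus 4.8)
The plan is to read the left-hand side as an inner product in $\Hp$ and then apply Cauchy--Schwarz; there is essentially no analysis beyond the reproducing property. First I would set $g \coloneqq h - \E_p h$, which by hypothesis lies in $\Hp$ and differs from $h$ only by the additive constant $\E_p h$. The normalization constraint $\sum_i w_i = 1$ is precisely what makes replacing $h$ by $g$ harmless on the empirical side: since $\sum_i w_i \cdot (\E_p h) = \E_p h$, we obtain
$$
\sum_i w_i h(x_i) - \E_p h = \sum_i w_i g(x_i),
$$
so it suffices to bound $\big| \sum_i w_i g(x_i) \big|$.

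Next I would invoke the reproducing property of $\Hp$, namely $g(x_i) = \left\langle g, ~ \kp(x_i, \cdot) \right\rangle_{\Hp}$, to collapse the weighted sum into a single inner product,
$$
\sum_i w_i g(x_i) = \left\langle g, ~ \sum_i w_i \kp(x_i, \cdot) \right\rangle_{\Hp},
$$
and then apply Cauchy--Schwarz to get
$$
\Big| \sum_i w_i g(x_i) \Big| \le || g ||_{\Hp} \cdot \Big|\Big| \sum_i w_i \kp(x_i, \cdot) \Big|\Big|_{\Hp}.
$$
The last step is to identify the second factor with the square root of the discrepancy: expanding the squared norm by the reproducing property of $\kp$ gives $\big|\big| \sum_i w_i \kp(x_i,\cdot) \big|\big|_{\Hp}^2 = \sum_{i,j} w_i w_j \kp(x_i,x_j) = \S(\{x_i, w_i\}, ~ p)$. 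Chaining these relations yields the stated inequality with $C_h = || h - \E_p h ||_{\Hp}$, which indeed depends only on $h$ and $p$.

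I do not expect a serious obstacle here, since the argument is a one-line Cauchy--Schwarz once the objects are arranged. The only points requiring care are bookkeeping ones. First, the identity $\sum_i w_i h(x_i) - \E_p h = \sum_i w_i g(x_i)$ uses $\sum_i w_i = 1$ in an essential way; this is exactly why the normalization constraint appears in the hypothesis, and why an unnormalized version (as in \citet{oates2014control}) must carry a larger constant. Second, the reproducing step $g(x_i) = \left\langle g, \kp(x_i,\cdot)\right\rangle_{\Hp}$ is only legitimate because $g = h - \E_p h \in \Hp$, which is precisely the assumed membership condition; as noted in Section~\ref{sec:background}, this holds for a dense class of test functions $h$ via the density of $\Hp$ in the zero-$p$-mean subspace of $L^2(\X)$ when $\k(x,x')$ is $cc$-universal.
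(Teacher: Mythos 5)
Your proof is correct and is essentially identical to the paper's own argument for Proposition~\ref{pro:bound}: set $\tilde h = h - \E_p h$, use the normalization $\sum_i w_i = 1$ to reduce to bounding $\sum_i w_i \tilde h(x_i)$, apply the reproducing property and Cauchy--Schwarz in $\Hp$, and identify $\|\sum_i w_i \kp(\cdot,x_i)\|_{\Hp}^2$ with $\S(\{x_i,w_i\},\,p)$. The only difference is that you make explicit the bookkeeping step where $\sum_i w_i = 1$ is used, which the paper leaves implicit.
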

}
\begin{proof}[Proof of Proposition~\ref{pro:bound}]
Let $\tilde h(x) = h(x) - \E_p h$, we have
\begin{align*}
|\sum_i w_i  \tilde h(x_i) |   
& = |\sum_i w_i \la \tilde h, ~ \kp(\cdot, x_i) \ra_{\Hp}|\\
& = | \la \tilde  h, ~ \sum_i w_i  \kp(\cdot, x_i)\ra_{\Hp}| \\
& \leq || \tilde h ||_{\Hp} \cdot   ||  \sum_i w_i  \kp(\cdot, x_i)  ||_{\Hp} \\
& =  || \tilde h ||_{\Hp} \cdot  \sqrt{  \S(\{w_i, x_i \}, ~ p) }. 
\end{align*}
where we used Cauchy-Schwarz inequality and the fact that 
$||  \sum_i w_i  \kp(\cdot, x_i)  ||_{\Hp}^2 = \sum_{ij} w_i w_j \kp(x_i, x_j)  =  \S(\{w_i, x_i \}, ~ p) $.  
\end{proof}

\myempty{
\begin{lem}
$\S(\{x_i, w_i\}, ~ p)  = || \sum_i w_i \kp(\cdot, ~x_i) ||_{\Hp}$. 
\end{lem}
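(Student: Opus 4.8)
The plan is to read the right-hand side as the squared norm $\|\cdot\|_{\Hp}^2$ (this is the only consistent interpretation, since $\S$ is manifestly a quadratic form in the weights $\vw$) of the single element $g \coloneqq \sum_{i=1}^n w_i \kp(\cdot, x_i)$ of $\Hp$, and then to evaluate that norm directly through the reproducing structure of the RKHS. First I would note that $g$ genuinely lives in $\Hp$: it is a finite linear combination of the kernel sections $\kp(\cdot, x_i)$, which are precisely the generators of the reproducing kernel Hilbert space associated with the positive definite kernel $\kp$ (positive definiteness of $\kp$ follows from that of $\k$ by Theorem~3.6 of \citet{liu2016kernelized}, as recalled in Section~\ref{sec:background}). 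Hence $\|g\|_{\Hp}$ is well defined and it remains only to compute it.

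The second step is the computation itself. Expanding the squared norm by bilinearity of the inner product gives
\[
\|g\|_{\Hp}^2 = \la g,\, g\ra_{\Hp} = \sum_{i=1}^n \sum_{j=1}^n w_i w_j \, \la \kp(\cdot, x_i),\, \kp(\cdot, x_j)\ra_{\Hp}.
\]
I would then invoke the reproducing property, which for kernel sections reads $\la \kp(\cdot, x_i),\, \kp(\cdot, x_j)\ra_{\Hp} = \kp(x_i, x_j)$; this is exactly the defining inner product of the RKHS from the Preliminary, transported to the kernel $\kp$. Substituting yields $\sum_{i,j} w_i w_j \kp(x_i, x_j)$, which is by definition $\S(\{x_i, w_i\}, p) = \vw^\top \KP \vw$, closing the identity.

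There is essentially no hard part here: the claim is a one-line consequence of the definition of the RKHS inner product, and it is in fact the very computation already carried out at the end of the proof of Proposition~\ref{pro:bound}. The only points requiring a word of care are (i) confirming that $\kp$ is a bona fide positive definite kernel so that $\Hp$ and its reproducing property are available, and (ii) reading the statement with the squared norm on the right-hand side, which is what makes both sides quadratic in $\vw$ and consistent with the identity $\|\sum_i w_i \kp(\cdot, x_i)\|_{\Hp}^2 = \S(\{x_i, w_i\}, p)$ invoked earlier.
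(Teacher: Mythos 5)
Your proof is correct and is essentially identical to the paper's: both define $f = \sum_i w_i \kp(\cdot, x_i) \in \Hp$ and compute $\|f\|_{\Hp}^2 = \sum_{ij} w_i w_j \kp(x_i, x_j) = \S(\{x_i, w_i\},~p)$ via the reproducing property. You are also right that the stated identity should carry a square on the norm, as the paper's own proof (and its use at the end of the proof of Proposition~\ref{pro:bound}) confirms.
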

\begin{proof}
Define $f(\cdot) = \sum_i w_i \kp(\cdot, ~x_i) \in \Hp$, the result follows the fact that $|| f||_{\Hp}^2 = \sum_{ij} w_i \kp(x_i, x_j) w_j$.  
\end{proof}
}
\section{Convergence Rate}
We consider the error rate of our estimator $\sum_i \hat w_i(\vv x) h(x_i) $ with $\{\hat w_i(\vv x)\}$ given by the optimization in \eqref{equ:vw}, 
under the assumption that $\vv x = \{x_i\}_{i=1}^n$ is i.i.d. drawn from an (unknown) distribution $q(x)$. 
Based on the bound in Proposition~\eqref{pro:bound}, we can establish an error rate $\Op(n^{-\delta})$ by finding a set of oracle ``reference weights'' $\{w_{*i}(\vv x)\}$, as a function of $\vv x$, such that 
$\S(\{x_i, w_{*i}(\vv x)\}, ~ p) = \Op(n^{-2\delta})$, because
$$
|\sum_i \hat w_i(\vv x)  h(x_i)  - \E_p h |   
\leq  C_h \cdot  \sqrt{  \S(\{\hat w_i(\vv x), x_i \}, ~ p) } 
\leq  C_h \cdot  \sqrt{  \S(\{w_{*i}(\vv x), x_i \}, ~ p) } 
= \Op(n^{-\delta}) , 
$$
where $C_h = ||  h  - \E_p h||_{\Hp}$. This idea of using reference weights has been used in 
\citet{briol2015probabilistic} 
to study the convergence rate of Bayesian Monte Carlo. 

Section~\ref{sec:israte} proves the $\Op(n^{-1/2})$ rate using the typical importance sampling weights as the reference weight. 
Section~\ref{sec:controlrate} proves a better $\smallO{n^{-1/2}}$ rate by using a reference weight based on a control variates method constructed with an orthogonal basis estimator.  

\subsection{$\Op(n^{-1/2})$ Rate}
\label{sec:israte}
We use the typical importance sampling weight as a reference weight and establish $\Op(n^{-1/2})$ rate on the error of our estimator. 
\begin{ass}\label{ass:V}
Assume $p(x)/q(x) > 0 $ for $\forall x\in \X$ and $\E_{x\sim q} [\big(\frac{p(x)}{q(x)}\big)^2] < \infty$, 
 $\E_{x\sim q} ( | \frac{p(x)^2}{q(x)^2} \kp(x,x)|) < \infty$, and  
$\E_{x,x'\sim q} \big[\big( \frac{p(x)p(x')}{q(x)q(x')} \kp(x,x') \big)^2\big] < \infty$. 
\end{ass}

\begin{lem} 
Assume $\{x_i\}_{i=1}^n$ is i.i.d. drawn from $q(x)$ 
$$w_i^* = \frac{1}{Z} p(x_i) / q(x_i), ~~~ Z = \sum_i p(x_i) / q(x_i),$$ 
then under Assumption~\ref{ass:V} we have 
\begin{align*}
\S(\{w_i^*, x_i\}, ~ p) =  \Op(n^{-1}). 
\end{align*}
\end{lem}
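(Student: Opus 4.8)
The plan is to write the self-normalized discrepancy as a ratio and control its numerator and denominator separately. With $w_*(x) = p(x)/q(x)$ and $Z = \sum_{i=1}^n w_*(x_i)$, the weights are $w_i^* = w_*(x_i)/Z$, so
\begin{align*}
\S(\{w_i^*, x_i\}, ~ p) = \frac{1}{Z^2}\sum_{i,j=1}^n w_*(x_i)\,w_*(x_j)\,\kp(x_i,x_j) =: \frac{U_n}{Z^2}.
\end{align*}
Since $\E_{x\sim q}[w_*(x)] = \int p(x)\,dx = 1$ and, by Assumption~\ref{ass:V}, $\E_{x\sim q}[w_*(x)^2] < \infty$, the law of large numbers gives $Z/n \to 1$ in probability, hence $1/Z^2 = \Op(n^{-2})$. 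It therefore suffices to prove $U_n = \Op(n)$, since then $\S = U_n/Z^2 = \Op(n^{-1})$.

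To bound $U_n$, I would split it into its diagonal and off-diagonal parts, $U_n = D_n + V_n$ with $D_n = \sum_i w_*(x_i)^2\kp(x_i,x_i)$ and $V_n = \sum_{i\neq j} w_*(x_i)w_*(x_j)\kp(x_i,x_j)$. The diagonal is immediate: by the law of large numbers together with the assumption $\E_{x\sim q}[w_*(x)^2\kp(x,x)]<\infty$, we get $D_n = \Op(n)$.

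The off-diagonal term is the crux. Writing $g(x,x') = w_*(x)w_*(x')\kp(x,x')$, the kernelized Stein identity \eqref{equ:Dhh} supplies the key \emph{degeneracy}:
\begin{align*}
\E_{x'\sim q}[g(x,x')] = \int p(x')\,\kp(x,x')\,dx' = \E_{x'\sim p}[\kp(x,x')] = 0 \qquad \text{for every } x.
\end{align*}
Thus $V_n = \sum_{i\neq j} g(x_i,x_j)$ is a degenerate V-statistic with symmetric kernel and $\E[V_n]=0$. Expanding $\E[V_n^2]=\sum_{i\neq j}\sum_{k\neq l}\E[g(x_i,x_j)g(x_k,x_l)]$ and using independence, every term with $\{i,j\}\neq\{k,l\}$ vanishes --- by independence and $\E[g]=0$ when the two pairs are disjoint, and by the degeneracy above when they share exactly one index. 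Only the $2n(n-1)$ terms with $\{i,j\}=\{k,l\}$ survive, giving
\begin{align*}
\var(V_n) = \E[V_n^2] = 2n(n-1)\,\E_{x,x'\sim q}[g(x,x')^2] = O(n^2),
\end{align*}
the expectation being finite by the last part of Assumption~\ref{ass:V}. Chebyshev's inequality then yields $V_n = \Op(n)$, so $U_n = D_n + V_n = \Op(n)$ and the claim follows.

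The main obstacle is precisely this variance bound: a generic sum over the $n^2$ off-diagonal pairs would scale as $O(n^3)$, and it is only the Stein-identity degeneracy $\E_{x'\sim q}[g(x,x')]=0$ that collapses it to $O(n^2)$, which is what buys the $n^{-1}$ rate (equivalently the $\Op(n^{-1/2})$ estimation error of Theorem~\ref{thm:IS}) rather than a merely $O(n^{-1/2})$ discrepancy. The remaining work is bookkeeping: verifying that the three moment hypotheses of Assumption~\ref{ass:V} correspond exactly to the three quantities $\E_q[w_*^2]$, $\E_q[w_*^2\kp(x,x)]$ and $\E_{x,x'\sim q}[g(x,x')^2]$ that the argument needs to be finite.
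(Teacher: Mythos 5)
Your proof is correct and follows essentially the same route as the paper's: both reduce the self-normalized discrepancy to the unnormalized double sum divided by $Z^2$, handle $Z/n \to 1$ by the law of large numbers, and exploit the Stein-identity degeneracy $\E_{x'\sim q}[w_*(x)w_*(x')\kp(x,x')]=0$ to get the $\Op(n^{-1})$ rate for the degenerate V-statistic. The only difference is that you carry out the second-moment computation for the degenerate part explicitly (correctly accounting for the vanishing of disjoint and one-shared-index terms), whereas the paper simply cites the standard result from the V-statistics literature.
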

\begin{proof}
Define $v_i^*(x_i) = \frac{1}{n} p(x_i)/ q(x_i)$, and
\begin{align*}
\S(\{v_i^*, x_i\}, ~ p) =   \frac{1}{n^2} \sum_{ij}  \frac{ p(x_i)}{q(x_i)} \frac{p(x_j)}{q(x_j)} \kp(x_i, x_j), 
\end{align*}
then $\S(\{v_i^*, x_i\}, ~ p)$  is a degenerate V-statistic since by \eqref{equ:Dhh} we have 
$$
\E_{x'\sim q}  [\frac{ p(x)}{q(x)} \frac{p(x')}{q(x')} \kp(x', x')] 
= 
\frac{ p(x)}{q(x)} \E_{x'\sim p}  [ \kp(x_i, x_j)]
= 0, ~~~~~ \forall x \in \X 
$$ 
then we have \citep[see e.g.,][]{lee90ustatistics}
$$
\S(\{v_i^*, x_i\}, ~ p) =  \Op(n^{-1}). 
$$
In addition, note that $\sum_{i=1}^n v_i^* =  1 + \Op(n^{-1/2})$, we have 
$$
\S(\{w_i^*, x_i\}, ~ p)  = \frac{\S(\{v_i^*, x_i\}, ~ p)}{(\sum_i v_i^*)^2}  = \Op(n^{-1}). 
$$
\end{proof}

\begin{thm}\label{thm:is}
Assume $\{x_i\}$ is i.i.d. drawn from $q(x)$, 
and $\{\hat w_i(\vv x)\}$ is given by \eqref{equ:vw}, 
then under Assumption \ref{ass:V}, we have 
$$ 
\sum_{i=1}^n \hat w_i (\vv x) h(x_i) - \E_p h = \Op({n^{-1/2}}).$$
\end{thm}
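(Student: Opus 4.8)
The plan is to use the reference-weight argument sketched at the start of this section. Since the weights $\{\hat w_i(\vv x)\}$ solve the optimization \eqref{equ:vw}, which minimizes $\S(\{x_i, w_i\}, ~ p)$ over the set of \emph{non-negative, self-normalized} weights, any feasible weight vector furnishes an upper bound on the attained minimum $\S(\{\hat w_i(\vv x), x_i\}, ~ p)$. I would take the self-normalized importance weights $w_i^* = \frac{1}{Z}p(x_i)/q(x_i)$, with $Z=\sum_j p(x_j)/q(x_j)$, as the reference.

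First I would check that $\{w_i^*\}$ is feasible for \eqref{equ:vw}. Under Assumption~\ref{ass:V} we have $p(x)/q(x)>0$, so each $w_i^*\geq 0$, and by construction $\sum_i w_i^* = 1$; hence $\{w_i^*\}$ lies in the constraint set and the optimality of $\hat{\vv w}$ gives
$$\S(\{\hat w_i(\vv x), x_i\}, ~ p) \leq \S(\{w_i^*, x_i\}, ~ p).$$
Next I would invoke the preceding Lemma, which establishes $\S(\{w_i^*, x_i\}, ~ p) = \Op(n^{-1})$ under Assumption~\ref{ass:V}; combined with the display above this yields $\S(\{\hat w_i(\vv x), x_i\}, ~ p) = \Op(n^{-1})$.

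Finally, since $\sum_i \hat w_i(\vv x) = 1$ and $h-\E_p h \in \Hp$, Proposition~\ref{pro:bound} applies with $\vv w = \hat{\vv w}$, giving
$$\Big| \sum_{i=1}^n \hat w_i(\vv x) h(x_i) - \E_p h \Big| \leq C_h \sqrt{\S(\{\hat w_i(\vv x), x_i\}, ~ p)} = C_h \sqrt{\Op(n^{-1})} = \Op(n^{-1/2}),$$
where $C_h = || h - \E_p h ||_{\Hp} < \infty$ is a finite constant independent of $n$.

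The substantive work is actually housed in the preceding Lemma, where one argues that $\S(\{v_i^*, x_i\}, ~ p)$ (with $v_i^* = n^{-1}p(x_i)/q(x_i)$) is a \emph{degenerate} V-statistic---its kernel has vanishing conditional expectation by the kernelized Stein identity \eqref{equ:Dhh}---and therefore converges at the faster $\Op(n^{-1})$ rate rather than $\Op(n^{-1/2})$. Given that Lemma, the theorem itself is an immediate chaining, and the only points that require care are the feasibility of the reference weight (non-negativity from $p/q>0$ and exact normalization from the definition of $Z$) and the verification that renormalizing $v_i^*$ into $w_i^*$, which introduces only an $\Op(n^{-1/2})$ factor in the denominator, does not degrade the $\Op(n^{-1})$ rate for $\S$. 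Both are routine, so I expect no serious obstacle at the level of the theorem statement itself.
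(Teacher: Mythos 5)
Your proposal is correct and mirrors the paper's own argument exactly: the paper also takes the self-normalized importance weights $w_i^* = Z^{-1}p(x_i)/q(x_i)$ as a feasible reference, bounds $\S(\{\hat w_i, x_i\},\,p)$ by $\S(\{w_i^*, x_i\},\,p) = \Op(n^{-1})$ via the degenerate V-statistic lemma (including the $\sum_i v_i^* = 1 + \Op(n^{-1/2})$ renormalization step), and concludes with Proposition~\ref{pro:bound}. No substantive differences.
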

\begin{proof}
Simply note that  
$$
\S(\{ \hat w_i, x_i\}_{i=1}^n, ~ p) \leq \S(\{w_i^*, x_i\}_{i=1}^n, ~ p)  = \Op({n^{-1}}), 
$$
and combining with Proposition~\ref{pro:bound} gives the result. 
\end{proof}

\subsection{${\scriptstyle\Op}(n^{-1/2})$ Rate}
\label{sec:controlrate}

We prove Theorem~\ref{thm:mainb0} that shows an $\smallO{n^{-1/2}}$ rate for our estimator. Our method is based on constructing a reference weight by using a two-fold control variate method based on the first $L$ orthogonal eigenfunctions $\{\phi_\ell\}$  of kernel $\kp(x,x')$. 
\todo{We construct the weights using a variant of the control functional method in \citet{oates2014control}, but with a orthogonal series estimator constructing the estimator $\hat h(x).$}

We first re-state the assumptions made in Theorem~\ref{thm:mainb0}. 
\begin{ass}\label{ass:kr}
1. Assume $\kp(x,x') $ has the following eigen-decomposition 
$$\kp(x,x') = \sum_\ell \lambda_\ell \phi_\ell(x) \phi_\ell(x'),$$ 
where $\lambda_\ell$ are the positive eigenvalues sorted in non-increasing order,  
and $\phi_\ell$ are the eigenfunctions orthonormal w.r.t.  distribution $p(x)$, that is, 
$$\E_{p}[\phi_\ell \phi_{\ell'}] \overset{\mathrm{def}}{=}  \int p(x) \phi_\ell(x) \phi_{\ell'}(x) dx = \ind[\ell = \ell'].$$ 

2. $\trace(\kp(x,x')) = \sum_{\ell=1}^{\infty} \lambda_\ell <  \infty$.

3. $\var_{x \sim q}[w_*(x)^2 \phi_\ell(x) \phi_{\ell'}(x)] \leq M$ for all $\ell$ and $\ell'$, where $w_*(x)= p(x)/q(x)$. 

4. $| \phi_\ell(x) |^2 \leq M_2$, and 
$w_*(x) \overset{def}{=} p(x)/q(x) \leq M_3$ for any $x \in \X$. \todo{[Is it too strong?]}
\end{ass}
\todo{[Do I also need?] $\X$ is a compact open subset in $\R^d$\red{[necessary?]}. }



The following is an expended version of Theorem~\ref{thm:mainb0}. 
\begin{thm}\label{thm:mainb}
Assume $\{x_i\}_{i=1}^n$ is i.i.d. drawn from $q(x)$, and $\hat w_i$ is calculated by 
$$\hat {\vv w} = \argmin_{\vv w} \vv w \KP \vv w, ~~~ s.t. \sum_i w_i = 1, ~~~ w_i \geq 0 ,$$
and $h- \E_p h \in \Hp$.
Under Assumption~\ref{ass:kr}, we have
\begin{align*}
&  \E_{\vv x\sim q}(| \sum_i \hat w_i h(x_i) - \E_p h|^2)= \Op(
 \frac{1}{n}  \gamma(n)), \\ 
 &   \text{where}     ~~~~~ \gamma(n) = \min_{\Ln\in \mathbb N^+} \big\{  \frac{M_3}{2} \R(\Ln)   ~+~    \frac{M_4}{2}  \frac{\Ln}{n}   +   M_f n (n+2) \exp(-\frac{n}{L^2 M_0})   \big\}, 
\end{align*}
where $\mathbb N^+ $ is the set of positive integers, and $\R(\Ln) =  \sum_{\ell > \Ln} \lambda_\ell$ is the residual of the spectrum, and $M_4 = 2 M_3 M \trace(\k_p)$. 
and $M_f =  \trace(k_p(x,x')) M_2 $ and  $M_0 = \max(M_2^2 M_3, ~  M_3^2(M_2 M_3 + \sqrt{2})^2)$.  
\end{thm}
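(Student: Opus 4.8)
The plan is to route everything through Proposition~\ref{pro:bound} together with the defining optimality of $\hat{\vv w}$. Since the squared error is bounded by $C_h^2\,\S(\{x_i,\hat w_i\},p)$ and $\hat{\vv w}$ minimizes $\S(\{x_i,w_i\},p)$ over the simplex, it suffices to exhibit \emph{any} feasible reference weight $\vv w^{\mathrm{ref}}$ (nonnegative, summing to one) with $\E_{\vv x\sim q}[\S(\{x_i,w_i^{\mathrm{ref}}\},p)]$ small; the inequality $\S(\{x_i,\hat w_i\},p)\le \S(\{x_i,w_i^{\mathrm{ref}}\},p)$ then transfers the bound to our estimator. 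I would carry out the whole analysis in the eigenbasis, using
$$
\S(\{x_i,w_i\},p)=\sum_{ij}w_iw_j\kp(x_i,x_j)=\sum_{\ell}\lambda_\ell\Big(\sum_i w_i\phi_\ell(x_i)\Big)^2,
$$
where the last equality uses $\E_p[\phi_\ell]=0$. Thus controlling $\S$ reduces to controlling how accurately the weighted sample integrates each eigenfunction $\phi_\ell$, with the large eigenvalues weighted most heavily.

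For the reference weight I would start from the self-normalized importance weight $v_i^*=\tfrac1n w_*(x_i)$, which already integrates every $\phi_\ell$ with error $g_\ell:=\sum_i v_i^*\phi_\ell(x_i)=\Op(n^{-1/2})$ by Assumption~\ref{ass:kr} (this is exactly the mechanism behind the $\Op(n^{-1/2})$ rate of Theorem~\ref{thm:is}). I would then add a two-fold (cross-fitted) control-variate correction supported on the first $\Ln$ eigenfunctions: estimate the correction coefficients $\vv c=G^{-1}g_{\le \Ln}$ on one fold, where $G$ is the empirical Gram matrix $G_{\ell\ell'}=\sum_i v_i^*\phi_\ell(x_i)\phi_{\ell'}(x_i)$, and apply them on the other fold, so that the residual head error factorizes into a product of two independent $\Op(n^{-1/2})$ quantities. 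This splits $\E_{\vv x}[\S]$ into a \emph{head} ($\ell\le \Ln$) and a \emph{tail} ($\ell>\Ln$). On the head the correction cancels the leading $\Op(n^{-1/2})$ term, leaving a product of estimation and integration errors whose square, summed over the $\Ln$ retained components with their eigenvalues, yields the $O(\Ln/n^2)$ contribution; here $M_4=2M_3M\,\trace(\kp)$ absorbs the variance bound in Assumption~\ref{ass:kr}(3) and the trace. On the tail the weight is essentially $v^*$, so each term is $\lambda_\ell\,\Op(n^{-1})$ and summation gives $O(\R(\Ln)/n)$, with $M_3$ bounding $w_*$.

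The main obstacle — and the source of the third term of $\gamma(n)$ — is that the control-variate correction can drive some coordinates negative, so the raw weight need not lie in the simplex. The plan is to take the positive part and renormalize to obtain a feasible $\vv w^{\mathrm{ref}}$, and to argue that this truncation is harmless on a high-probability event. This hinges on showing the correction $\vv c=G^{-1}g_{\le \Ln}$ is uniformly small relative to $v^*$, which requires the empirical Gram matrix $G$ to be well-conditioned (close to its population limit $\E_q[w_*^2\,\phi_{\le \Ln}\phi_{\le \Ln}^\top]$). A matrix concentration inequality (Hoeffding/Bernstein type), powered by the uniform bounds $|\phi_\ell|^2\le M_2$ and $w_*\le M_3$ in Assumption~\ref{ass:kr}(4), delivers invertibility and a negligible truncation on an event of probability at least $1-n(n+2)\exp(-n/(\Ln^2 M_0))$, with $M_0=\max(M_2^2M_3,\,M_3^2(M_2M_3+\sqrt2)^2)$ emerging from the per-sample variance/range of these rank-one terms.

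On the complementary, exponentially rare event I would fall back to a crude deterministic control of the discrepancy; accounting for the $O(n^2)$ pairs $(i,j)$ in the double sum defining $\S$, together with $\kp(x_i,x_j)\le \trace(\kp)M_2=M_f$, produces the prefactor $M_f\,n(n+2)$ and hence the term $M_f\,n(n+2)\exp(-n/(\Ln^2 M_0))$. Combining the head, tail, and failure-event contributions bounds $\E_{\vv x}[\S(\{x_i,w_i^{\mathrm{ref}}\},p)]\le \tfrac1n\gamma(n)$ for every fixed $\Ln\in\mathbb N^+$; taking the minimum over $\Ln$ and invoking Proposition~\ref{pro:bound} completes the argument. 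The delicate point I expect to fight with throughout is that $\Ln$ must be allowed to grow with $n$ at a rate governed by the spectral decay $\R(\Ln)$, so the concentration and truncation estimates — and in particular the control of $G^{-1}$ — must be made uniform in $\Ln$.
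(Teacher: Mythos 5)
Your architecture matches the paper's proof almost exactly: optimality of $\hat{\vv w}$ plus Proposition~\ref{pro:bound} reduces everything to exhibiting a feasible reference weight; the discrepancy is expanded in the eigenbasis as $\S=\sum_\ell\lambda_\ell(\sum_i w_i\phi_\ell(x_i))^2$; the reference is built by a two-fold control-variate correction of the self-normalized IS weights on the first $\Ln$ eigenfunctions, yielding the head term $O(\Ln/n^2)$ and tail term $O(\R(\Ln)/n)$; and the simplex constraint is met by taking positive parts and renormalizing, with the truncation shown harmless on a high-probability event and a crude bound on the complement. Two points of divergence are worth flagging. First, the paper never inverts a Gram matrix: its correction coefficients are the plug-in estimates $\hat\beta_{\ell}=\frac{2}{n}\sum_{i\in\D_0}h(x_i)\phi_\ell(x_i)w_*(x_i)$, which are already unbiased for $\beta_\ell$ because the population Gram is the identity --- indeed $\E_{x\sim q}[w_*(x)\phi_\ell(x)\phi_{\ell'}(x)]=\E_p[\phi_\ell\phi_{\ell'}]=\delta_{\ell\ell'}$ (your stated population limit $\E_q[w_*^2\phi\phi^\top]$ has one factor of $w_*$ too many). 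Your $G^{-1}$ therefore buys nothing and imports a matrix-concentration obligation, uniform in $\Ln$, that the paper simply does not need; dropping it recovers the paper's construction verbatim (the resulting weights are $w_i=\frac1n w_*(x_i)\bigl(1-\frac2n\sum_{j\in\text{other fold}}w_*(x_j)\k_{\Ln}(x_j,x_i)\bigr)$, handled by scalar Hoeffding bounds only). Second, your accounting of the third term of $\gamma(n)$ is off: on the bad event the paper bounds $\S(\{x_i,w_i^+\},p)$ by the single constant $M_f=\trace(\kp)M_2$ precisely because $\vv w^+$ lies in the simplex (so $(\sum_i w_i^+\phi_\ell(x_i))^2\le M_2$), not by summing $O(n^2)$ pairs; the factor $n+2$ comes from the union bound over the $n$ events $\{w_i<0\}$ plus the two-sided bound on $\sum_i w_i$, and the extra factor of $n$ appears only when the bound is rewritten in the form $\frac1n\gamma(n)$. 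Neither issue is fatal, but both would need to be repaired before the argument closes.
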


\paragraph{Remark} To see how Theorem~\ref{thm:mainb} implies Theorem~\ref{thm:mainb0}, 
we just need to observe that we obviously have $\gamma(n) \geq 2 M_3 \frac{b}{n},$ and
$\gamma(n) = \smallO{1}$ by taking $L = n^{1/4}$.

%
\myempty{
\begin{thm}
Assume $\{x_i\}_{i=1}^n$ is i.i.d. drawn from $q(x)$, and $\hat w_i$ is calculated by 
$$\hat {\vv w} = \argmin_{\vv w} \vv w \KP w, ~~~ s.t. \sum_i w_i = 1, ~~~ w_i \geq 0 ,$$
and $h- \E_p h \in \Hp$.
Under Assumption~\ref{ass:kr}, we have
\begin{align*}
&  \E_{\vv x\sim q}(| \sum_i \hat w_i h(x_i) - \E_p h|^2)= \Op(
 \frac{1}{n}  \gamma(n)), \\ 
  \text{where}  & ~~~~~ \gamma(n) = \min_{\Ln\in \mathbb N} \big\{ \R(\Ln)   ~+~    \frac{b}{n} \Ln  ~~~~~ s.t.~~~~~ \Ln = \smallO{n/\log n}  \big\}, 
\end{align*}
where $\mathbb N^+$ is the set of natural numbers, and $\R(\Ln) =  \sum_{\ell > \Ln} \lambda_\ell$ is the residual of the spectrum, and $b = 2 \max_{\ell'}  \{ \sum_\ell  \lambda_{\ell} \rho_{\ell\ell'} \} \leq  2 M \trace(\k_p)$. 
\end{thm}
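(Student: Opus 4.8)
The plan is to exploit the variational definition of $\hat{\vv w}$ together with the reference-weight strategy of Section~\ref{sec:israte}. Since $\hat{\vv w}$ in \eqref{equ:vw} minimizes $\S(\{x_i, w_i\}, p)$ over the non-negative normalized simplex, for \emph{any} reference weights $\{w_{*i}\}$ in that same simplex we have $\S(\{x_i, \hat w_i\}, p) \le \S(\{x_i, w_{*i}\}, p)$, so Proposition~\ref{pro:bound} gives $|\sum_i \hat w_i h(x_i) - \E_p h|^2 \le C_h^2\, \S(\{x_i, w_{*i}\}, p)$ with $C_h = \|h - \E_p h\|_{\Hp}$. Hence it suffices to construct a single positive, normalized reference weight and bound $\E_{\vv x\sim q}[\S(\{x_i, w_{*i}\}, p)]$ by $\Op(\gamma(n)/n)$. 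Throughout I will use the spectral identity $\S(\{x_i, w_i\}, p) = \sum_\ell \lambda_\ell\big(\sum_i w_i \phi_\ell(x_i)\big)^2$ from Section~\ref{sec:convergence}, valid because $\E_p[\phi_\ell]=0$; this reduces the discrepancy to a $\lambda$-weighted sum of the squared errors incurred when each $\E_p[\phi_\ell]=0$ is estimated from $\{x_i, w_i\}$.

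Next I would build the reference weight as a \emph{control-variate correction} of the self-normalized importance weight $v_i \propto p(x_i)/q(x_i)$ analyzed in Section~\ref{sec:israte}, using the leading $L$ eigenfunctions $\phi_1,\dots,\phi_L$ as controls. Concretely, form $\tilde w_i = v_i\big(1 - \vec\phi(x_i)^\top \hat c\,\big)$ with $\vec\phi(x) = (\phi_1(x),\dots,\phi_L(x))$, where $\hat c$ is a regression coefficient fitted by a two-fold (data-split) scheme as in the control functional construction so that the corrected moments $\sum_i \tilde w_i \phi_\ell(x_i)$ are driven to near zero for $\ell \le L$. The final reference $w_{*i}$ is obtained by zeroing the negative part of $\tilde w_i$ and renormalizing so $\sum_i w_{*i} = 1$, which places it inside the feasible simplex. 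The intuition is exactly the one stated after Theorem~\ref{thm:mainb0}: the correction cancels the leading-order estimation error along the high-$\lambda$ directions while perturbing the manifestly positive importance weight only slightly.

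To bound $\E_{\vv x\sim q}[\S(\{x_i, w_{*i}\}, p)]$ I would split the spectral sum at $\ell = L$. For the tail $\ell > L$ the controls are inactive and the weights behave like plain importance weights, so each term contributes on the order of $\lambda_\ell/n$ by the same degenerate V-statistic computation that produced the $\Op(n^{-1})$ rate in Section~\ref{sec:israte}; summing yields the $\frac{M_3}{2}\R(L)/n$ contribution with $\R(L) = \sum_{\ell>L}\lambda_\ell$ and $M_3$ bounding $w_*(x)$. For the head $\ell \le L$ the correction removes the leading error, leaving only the fluctuation of the $L$ estimated coefficients, which under the variance bound of Assumption~\ref{ass:kr} is of order $L/n^2$ after $\lambda$-weighting, giving the $\frac{M_4}{2} L/n^2$ term with $M_4 = 2 M_3 M\,\trace(\kp)$. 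Summing the head and tail bounds and minimizing over $L$ --- for instance $L = n^{1/4}$ as in the Remark --- forces both to vanish faster than $n^{-1}$, so $\gamma(n) = \smallO{1}$ and the error is $\smallO{n^{-1/2}}$.

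The main obstacle, and the source of the last term $M_f\, n(n+2)\exp(-n/(L^2 M_0))$, is controlling the projection onto the non-negative simplex. Here I would argue by concentration: because the importance estimator of $\E_p[\phi_\ell]=0$ is already consistent, $\hat c$ concentrates around the origin, and since $\|\vec\phi(x_i)\|^2 \le L M_2$ under the uniform bounds of Assumption~\ref{ass:kr} the correction $\vec\phi(x_i)^\top\hat c$ is uniformly small; a Bernstein/Hoeffding estimate using the variance and range bounds of Assumption~\ref{ass:kr} shows each $\tilde w_i$ fails to be positive only on an event of probability at most $\exp(-n/(L^2 M_0))$, with $M_0 = \max(M_2^2 M_3,\, M_3^2(M_2 M_3 + \sqrt 2)^2)$. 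On that rare event I would crudely bound $\S$ using $\kp(x,x) \le M_f = M_2\,\trace(\kp)$ together with the $O(n^2)$ pairs, contributing the polynomial prefactor $n(n+2)$, and multiplying by the exponential probability gives the stated term. The delicate point is to arrange the two-fold construction so that, off this exponentially rare event, the truncation is vacuous and the head/tail bounds above are unaffected --- this is precisely why the reference is engineered as a small perturbation of a positive importance weight, and is the step I expect to require the most care.
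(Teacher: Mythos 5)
Your proposal follows essentially the same route as the paper's proof of Theorem~B.5: a reference-weight argument via the variational optimality of $\hat{\vv w}$, a two-fold control-variate correction of the importance weights built from the leading $L$ eigenfunctions (your $\tilde w_i = v_i(1-\vec\phi(x_i)^\top\hat c)$ is exactly the paper's $w_i(\vv x) = \tfrac{1}{n}w_*(x_i)\bigl[1 - \tfrac{2}{n}\sum_j w_*(x_j)\k_\Ln(x_j,x_i)\bigr]$), a head/tail spectral split yielding the $\R(\Ln)/n$ and $\Ln/n^2$ terms, and a Hoeffding-based concentration argument showing the projection onto the non-negative normalized simplex is vacuous except on an exponentially rare event that is absorbed by the crude bound $M_f$. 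The plan, the decomposition, and even the roles of the constants match the paper's Steps 1--3, so the approach is correct as proposed.
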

}
%
%
%
%
Based on Proposition~\ref{pro:bound},  to prove Theorem~\ref{thm:mainb} we just need to show that for any $\vv x =\{x_i\}_{i=1}^n$, there exists a set of positive and normalized weights $\{w_{i}^+(\vx )\}$, as a function of $\vv x$, such that 
$$
\E_{\vv x \sim q} [\S(\{ w_{i}^+ (\vx), x_i \}, ~ p)]  =  \Op(\frac{\gamma(n)}{n}). 
$$
In the sequel, we construct such a weight based on a control variates method which uses the top eigenfunctions $\phi_\ell$ as the control variates. Our proof includes the following steps: 
\begin{enumerate}
\item Step 1: Construct a control variate estimator based on the orthogonal eigenfunction basis, and obtain the corresponding weights $\{w_i(\vv x)\}$. 
\item Step 2: Bound $\E_{\vv x \sim q} [\S(\{ w_{i} (\vx), x_i \}, ~ p)]$. 
\item Step 3. Construct a set of positive and normalized weights by $w_i^+(\vv x) = \frac{\max(0, w_i(\vv x))}{\sum_i \max(0, w_i(\vv x))}$, and establish the corresponding bound. 
\end{enumerate}
\begin{proof}[Proof of Theorem~\ref{thm:mainb}]
Combine the bound in Lemma~\ref{lem:sss} and Lemma~\ref{lem:boundwp} below. 
\end{proof}

We note that the idea of using reference weights was used in \citet{briol2015probabilistic} to establish the convergence rate of Bayesian Monte Carlo. 
Related results is also presented in \citet{bach2015equivalence}. The main additional challenge in our case is to meet the non-negative and normalization constraint (Step 3); this is achieved by showing that the $\{w_i(\vx)\}$ constructed in Step 2 is non-negative with high probability, and their sum approaches to one when $n$ is large, and hence $\{w_i^+(\vx)\}$ is not significantly different from $\{w_i(\vx)\}$. 

Note that if we discard the non-negative and normalization constraint (Step 3), the error bound would be $\Op(\gamma_0(n)n^{-1})$, where 
$$\gamma_0(n)= \min_{L\in \mathbb N^+}\{ {2 M_3}\R(L) + 2 M_4  \frac{\Ln}{ n} \},$$
 as implied by Lemma~\ref{lem:sss}. 
Therefore, the third term in $\gamma(n)$ is the cost to pay for enforcing the constraints. 
However, this additional term does not influence the rate significantly once $\R(L) = \sum_{\ell > L} \lambda_\ell$ decays sufficiently fast. 
For example,  
when $\R(L) = \Op(L^{-\alpha})$ where $\alpha > 1$,  both $\gamma(n)$ and $\gamma_0(n)$ equal  $\Op(n^{-1 + 1/(\alpha+1)})$; when $\R(L) = \Op(\exp(-\alpha L))$ with $\alpha > 0$, both $\gamma(n)$ and $\gamma_0(n)$ equal $\Op(\frac{\log n}{n})$. 
An open question is to derive upper bounds for the decay of eigenvalues $\R(L)$ for given $p$ and $k(x,x')$, so that actual rates can be determined. 



\myempty{
\begin{thm}
Under Assumption~\ref{ass:kr}, and $\vv x= \{x_i\}_{i=1}^n$ is i.i.d. drawn from $q(x)$, then there exist a set of weights $\{ w_i(\vv x)\}$, as a function of $\vv x$, such that 
$$
\E_{\vv x \sim q} [\sum_{i} w_i(\vv x) h(x_i)] = \frac{2}{n} \sum_\ell \bigg [\lambda_\ell (1 - \tau_\ell)^2 + \frac{2}{n} b_\ell \tau_\ell^2 \bigg]. 
$$
\end{thm}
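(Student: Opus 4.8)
The plan is to establish the identity for the expected empirical kernelized Stein discrepancy $\E_{\vv x\sim q}[\S(\{w_i(\vv x),x_i\},~p)]$, which is the object whose value is the displayed right-hand side: that side carries no dependence on $h$, and by Proposition~\ref{pro:bound} it is exactly the quantity controlling the error, since $\E_{\vv x\sim q}\big[(\sum_i w_i(\vv x) h(x_i) - \E_p h)^2\big] \leq C_h^2\,\E_{\vv x\sim q}\big[\S(\{w_i(\vv x),x_i\},~p)\big]$ for every admissible $h$. The backbone is the spectral identity obtained by substituting the eigen-decomposition $\kp(x,x') = \sum_\ell \lambda_\ell \phi_\ell(x)\phi_\ell(x')$ of Assumption~\ref{ass:kr} into the quadratic form:
$$\S(\{w_i,x_i\},~p) = \sum_{i,j} w_i w_j \kp(x_i,x_j) = \sum_\ell \lambda_\ell \Big(\sum_i w_i \phi_\ell(x_i)\Big)^2 = \sum_\ell \lambda_\ell\,\hat\mu_\ell^2,$$
where $\hat\mu_\ell \coloneqq \sum_i w_i \phi_\ell(x_i)$ is the weighted estimate of $\E_p\phi_\ell = 0$. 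The problem thus reduces to constructing weights for which each $\E_{\vv x\sim q}[\hat\mu_\ell^2]$ takes a prescribed value, followed by a termwise summation against $\{\lambda_\ell\}$.

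Next I would construct $\{w_i(\vv x)\}$ by a two-fold control-variate scheme that uses the orthonormal eigenfunctions $\{\phi_\ell\}$ themselves as control variates, parametrized by per-direction shrinkage factors $\tau_\ell\in[0,1]$. Split the index set into two folds of size $m = n/2$; on one fold form the importance estimate of each coordinate from $v_i^* = \frac{1}{m} w_*(x_i)$ with $w_* = p/q$, and use the \emph{opposite} fold to compute the control-variate correction that subtracts the estimated fluctuation of $\phi_\ell$. The weights are chosen so that the induced coordinate estimate is the convex combination $\hat\mu_\ell = (1-\tau_\ell)\,\hat\mu_\ell^{\mathrm{is}} + \tau_\ell\,\hat\mu_\ell^{\mathrm{cv}}$, where $\hat\mu_\ell^{\mathrm{is}}$ is the plain importance estimate (mean zero, second moment of order $n^{-1}$) and $\hat\mu_\ell^{\mathrm{cv}}$ is the doubly-corrected estimate (mean zero, second moment of order $n^{-2}$). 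Setting $\tau_\ell = 0$ recovers plain importance sampling and $\tau_\ell = 1$ the fully corrected estimate, so the $\tau_\ell$ interpolate exactly between the two, which is the origin of the $(1-\tau_\ell)^2$ and $\tau_\ell^2$ weightings on the right-hand side.

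Expanding $\hat\mu_\ell^2$ and taking $\E_{\vv x\sim q}$ produces $(1-\tau_\ell)^2\,\E_q[(\hat\mu_\ell^{\mathrm{is}})^2]$, $\tau_\ell^2\,\E_q[(\hat\mu_\ell^{\mathrm{cv}})^2]$, and a cross term. The decisive step — and the reason for sample splitting — is that the control-variate coefficients entering $\hat\mu_\ell^{\mathrm{cv}}$ are built from the fold opposite to the one on which they are evaluated, hence independent of it; this forces the cross term to vanish \emph{exactly} and makes the two second moments separate, so the relation becomes an exact identity rather than an asymptotic bound. The $\frac{2}{n}$ prefactor throughout is the fold size $\frac{1}{m} = \frac{2}{n}$: the plain-IS second moment is $\frac{2}{n}$ times a single-draw variance, and the doubly-corrected second moment is $\frac{2}{n}\cdot\frac{2}{n}$ times a coupling coefficient. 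Multiplying by $\lambda_\ell$, the order-$n^{-1}$ contributions assemble into the $\frac{2}{n}\lambda_\ell(1-\tau_\ell)^2$ term and the order-$n^{-2}$ contributions into the $\frac{2}{n}\cdot\frac{2}{n}b_\ell\tau_\ell^2$ term, with $b_\ell = \sum_{\ell'}\lambda_{\ell'}\rho_{\ell\ell'}$ collecting the cross-direction covariances $\rho_{\ell\ell'}$ governed by $\var_{x\sim q}[w_*(x)^2\phi_\ell(x)\phi_{\ell'}(x)]\leq M$ from Assumption~\ref{ass:kr}; the exact matching of these constants is routine bookkeeping.

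It remains to sum over $\ell$ and interchange the sum with $\E_{\vv x\sim q}$: all summands $\lambda_\ell\hat\mu_\ell^2$ are nonnegative, so Tonelli applies, and convergence of the series follows from $\trace(\kp) = \sum_\ell\lambda_\ell < \infty$ together with $|\phi_\ell|^2\leq M_2$ and the uniform variance bound $M$ of Assumption~\ref{ass:kr}. I expect the main obstacle to be twofold: first, engineering the construction so that the cross term vanishes \emph{exactly} rather than merely to leading order, which is precisely what the opposite-fold independence buys; and second, showing that the doubly-corrected second moment factorizes as $n^{-2}$ times the coefficient $b_\ell = \sum_{\ell'}\lambda_{\ell'}\rho_{\ell\ell'}$ while this coefficient stays summable against $\{\lambda_\ell\}$ — this is exactly where the hypotheses $\var_q[w_*^2\phi_\ell\phi_{\ell'}]\leq M$, $w_*\leq M_3$, and $\sum_\ell\lambda_\ell<\infty$ enter to keep the infinite basis coupling under control.
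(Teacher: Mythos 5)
You have reconstructed essentially the paper's own route. This statement is a suppressed draft fragment (it is wrapped in the paper's \texttt{myempty} macro and never compiled), and its surviving form is Lemma~\ref{lem:sss} together with the construction in Lemma~\ref{lem:defw}: build reference weights by a two-fold control variate using the $p$-orthonormal eigenfunctions as basis, reduce the expected discrepancy spectrally via $\E_{\vv x \sim q}[\S(\{x_i, w_i(\vv x)\},~p)] = \sum_\ell \lambda_\ell \E[\hat Z[\phi_\ell]^2]$, and eliminate same-fold cross terms by conditioning on the opposite fold. Your diagnosis that the displayed left-hand side is a typo for the expected discrepancy is right (the paper's actual lemma bounds exactly that quantity), your reading of $\tau_\ell$ as per-direction shrinkage is consistent with the paper's hard truncation at level $L$ (i.e., $\tau_\ell = \ind[\ell \leq L]$), and your $b_\ell = \sum_{\ell'} \lambda_{\ell'} \rho_{\ell\ell'}$ matches the paper's bookkeeping $M_4 = 2 M_3 \max_{\ell'} \{\sum_\ell \lambda_\ell \rho_{\ell\ell'}\}$. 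The omission of the nonnegativity/normalization projection is also fine, since this statement concerns only the existence of reference weights, i.e., Steps 1--2 of the paper's argument.

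The one genuine gap is your promise of an \emph{exact} identity with leading coefficient $\lambda_\ell$. The single-draw second moment of the plain importance estimate of the $\ell$-th coordinate is $\E_{x\sim q}[w_*(x)^2 \phi_\ell(x)^2] = \E_{x\sim p}[w_*(x)\phi_\ell(x)^2]$, and $p$-orthonormality gives $\E_p[\phi_\ell^2]=1$, not $\E_p[w_*\phi_\ell^2]=1$; the latter holds only when $q = p$. So the $(1-\tau_\ell)^2$ term necessarily carries the extra factor $\E_p[w_*\phi_\ell^2]$, the clean equality in the statement is unattainable in general, and ``the exact matching of these constants is routine bookkeeping'' is precisely the step that fails. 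The paper sidesteps this by invoking $w_*(x) \leq M_3$ (Assumption~\ref{ass:kr}) through $\int \frac{p(x)^2}{q(x)}(h - \hat h_{\D_0})^2 dx \leq M_3 \E_p[(h-\hat h_{\D_0})^2]$ and settling for the inequality $\E[\S] \leq \frac{2}{n}[M_3 \R(L) + M_4 \frac{L}{n}]$, which is all that Theorem~\ref{thm:mainb} needs. A secondary imprecision: your cross term does vanish exactly, but not from fold independence alone --- it requires the opposite-fold coefficients to be exactly unbiased, $\E[\hat \beta_{\ell,0}] = \beta_\ell$, after which independence lets the two expectations separate and cancel against the diagonal of $\E[(\hat\mu_\ell^{\mathrm{is}})^2]$. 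Once you downgrade the identity to an inequality with constants $M_3$ and $M_4$, your argument coincides with the paper's.
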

}

\subsection*{Step 1: Constructing the weights}
We first construct a set of unnormalized, potentially negative reference weights, 
by using a two-fold control variates method based on the orthogonal eigenfunctions $\{\phi_\ell\}$  of kernel $\kp(x,x')$. 
Assume $n$ is an even number, and we partition the data $\{x_i\}_{i=1}^n$ into two parts $\D_0 =\{1,\ldots, \frac{n}{2}\}$ and $\D_1 = \{\frac{n}{2}+1, \ldots n\}$. 
For any $h \in \Hp$, we have $\E_ph = 0$ by \eqref{equ:Dhh}, and 
$$
h(x) = \sum_{\ell=1}^{\infty} \beta_\ell \phi_\ell( x),~~~~~~~~ \beta_\ell = \E_{x\sim p} [h(x) \phi_\ell(x)]. 
$$
We now construct  an orthogonal series estimator $\hat h(x)$ for $h(x)$ based on $\vv x _{\D_0}$, 
\begin{align}
\hat h_{\D_0}(x) & = \sum_{\ell=1}^{\Ln} \hat\beta_{\ell, 0} \phi_\ell(x), ~~~~~\text{where}~~~~~\hat\beta_{\ell,0} = \frac{2}{n}\sum_{i \in \D_0} h(x_i) \phi_\ell(x_i) \frac{p(x_i)}{q(x_i)},
\end{align}
where we approximate $\beta_\ell$ with an unbiased estimator $\hat \beta_{\ell,0}$ since
$$
\E_{x\sim q} [\hat\beta_{\ell,0}] = \E_{x\sim q}[h(x) \phi_\ell(x) \frac{p(x)}{q(x)}] = \int p(x) h(x)\phi_\ell(x) dx = \beta_\ell. 
$$
We also truncate at the $\Ln$th basis functions to keep $\hat h_{\D_0}(x)$ a smooth function, as what is typically done in orthogonal basis estimators. 
 We will discuss the choice of $\Ln$ later. 
Based on this we define a control variates estimator: 
\begin{align*}
\hat Z_0[h] = \frac{2}{n}\sum_{i\in \D_1} [w_*(x_i)(h(x_i) - \hat h_{\D_0}(x_i))],
\end{align*}
which gives an unbiased estimator for $\E_p h= 0$ because 
$$
\E_{\vv x\sim q}(\hat Z_0[h]) 
= \int q(x) \frac{p(x)}{q(x)} (h(x) - \hat h_{\D_0}(x_i)) dx 
= \E_{x\sim p} h  - \E_{\vv x_{\D_0} \sim q} \big[ \E_{x\sim p}[ \hat h_{\D_0}(x) \cd \vv x_{\D_0}] \big] = 0, 
$$
where the last step is because $\E_{x\sim p} [\hat h_{\D_0}(x) \mid \vv x_{\D_0}] =\sum_{\ell=1}^\Ln  \hat \beta_{\ell,0}  \E_{x\sim p}  [\phi_\ell(x)] = 0$. 
Switching $\D_0$ and $\D_1$, we get another estimator 
$$
\hat Z_1[h] = \frac{2}{n}\sum_{i\in \D_0} [w_*(x_i)(h(x_i) - \hat h_{\D_1}(x_i))]. 
$$
Averaging them gives 
$$
\hat Z[h] = \frac{\hat Z_0[h]  + \hat Z_1[h]}{2}. 
$$
\begin{lem}\label{lem:defw}
Given $\hat Z[h]$ defined as above, for any $h \in \Hp$, we have 
\begin{align*}
\hat Z [h] = \sum_{i=1}^n w_i(\vv x) h(x_i), 
&&\text{with}&&
w_i(\vv x) = \begin{cases}
 \frac{1}{n} w_*(x_i)  - \frac{2}{n^2} \sum_{j\in \D_1} w_*(x_i)  w_*(x_j) \k_{\Ln}(x_j, x_i),  & \text{$\forall i\in \D_0$} \\[.5em]
 \frac{1}{n} w_*(x_i)  - \frac{2}{n^2}  \sum_{j\in \D_0} w_*(x_i)  w_*(x_j)  \k_{\Ln}(x_j, x_i),  & \text{$\forall i\in \D_1$}  
\end{cases}
\end{align*}
where $w_*(x)=p(x)/q(x)$ and $\k_{\Ln}(x, x') = \sum_{\ell=1}^{\Ln} \phi_\ell(x)\phi_\ell(x')$. 
\end{lem}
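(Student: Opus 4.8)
The statement is a purely algebraic identity. Since $\hat Z[h]$ is manifestly linear in the values $\{h(x_i)\}_{i=1}^n$, a representation $\hat Z[h]=\sum_i w_i(\vv x)h(x_i)$ certainly exists; the content of the lemma is the explicit formula for the coefficients. The plan is therefore to expand $\hat Z_0[h]$ and $\hat Z_1[h]$ fully, substitute the definitions of $\hat h_{\D_0}$ and $\hat h_{\D_1}$ together with $\hat\beta_{\ell,0}$, and read off the coefficient of each $h(x_i)$.

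First I would simplify $\hat h_{\D_0}$. Substituting $\hat\beta_{\ell,0}=\frac{2}{n}\sum_{j\in\D_0}h(x_j)\phi_\ell(x_j)w_*(x_j)$ into $\hat h_{\D_0}(x_i)=\sum_{\ell=1}^{\Ln}\hat\beta_{\ell,0}\phi_\ell(x_i)$ and interchanging the $\ell$- and $j$-summations, the inner sum $\sum_{\ell=1}^{\Ln}\phi_\ell(x_j)\phi_\ell(x_i)$ collapses, by definition, to $\k_{\Ln}(x_j,x_i)$. This gives $\hat h_{\D_0}(x_i)=\frac{2}{n}\sum_{j\in\D_0}w_*(x_j)\k_{\Ln}(x_j,x_i)\,h(x_j)$, and symmetrically for $\hat h_{\D_1}$. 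This kernel collapse is the one nontrivial identity in the whole argument; everything afterward is mechanical.

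Next I would plug this into $\hat Z_0[h]=\frac{2}{n}\sum_{i\in\D_1}w_*(x_i)\big(h(x_i)-\hat h_{\D_0}(x_i)\big)$. The direct piece contributes $\frac{2}{n}w_*(x_i)$ to the coefficient of $h(x_i)$ for each $i\in\D_1$, while the subtracted piece becomes the double sum $-\frac{4}{n^2}\sum_{i\in\D_1}\sum_{j\in\D_0}w_*(x_i)w_*(x_j)\k_{\Ln}(x_j,x_i)h(x_j)$, whose coefficient of $h(x_j)$ (for $j\in\D_0$) is $-\frac{4}{n^2}\sum_{i\in\D_1}w_*(x_i)w_*(x_j)\k_{\Ln}(x_j,x_i)$. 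The analogous expansion of $\hat Z_1[h]$ places the direct term on $\D_0$ and the cross term on $\D_1$. Finally, averaging via $\hat Z[h]=\tfrac12(\hat Z_0[h]+\hat Z_1[h])$ and collecting, the coefficient of $h(x_i)$ for $i\in\D_0$ receives $\tfrac12\cdot\frac{2}{n}w_*(x_i)=\frac{1}{n}w_*(x_i)$ from the direct term of $\hat Z_1$, plus $\tfrac12\cdot\big(-\frac{4}{n^2}\big)\sum_{j\in\D_1}w_*(x_i)w_*(x_j)\k_{\Ln}(x_j,x_i)$ from the cross term of $\hat Z_0$, which sums to exactly the $i\in\D_0$ case of the stated formula; the $i\in\D_1$ case follows by the symmetric argument.

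I expect no genuine obstacle, only bookkeeping. The single subtlety to get right is the cross structure together with the prefactors $2/n$, $4/n^2$ and $1/2$: because $\hat Z_0$ evaluates the estimator $\hat h_{\D_0}$ (built from $\D_0$) at the $\D_1$ points, the weight attached to a point of $\D_0$ actually originates from $\hat Z_0$, not from $\hat Z_1$. Carefully tracking which subset the free index $i$ belongs to versus which subset the summation index ranges over — and invoking the symmetry $\k_{\Ln}(x_j,x_i)=\k_{\Ln}(x_i,x_j)$ to rename the dummy index — is all that is needed to land on the two displayed cases.
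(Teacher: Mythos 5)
Your proposal is correct and is essentially identical to the paper's own proof: the paper likewise substitutes $\hat\beta_{\ell,0}$ into $\hat h_{\D_0}$, interchanges the $\ell$- and $j$-sums to collapse $\sum_{\ell\le\Ln}\phi_\ell(x_j)\phi_\ell(x_i)$ into $\k_{\Ln}(x_j,x_i)$, reads off the coefficients of $\hat Z_0[h]$ (direct term on $\D_1$, cross term on $\D_0$), and obtains the final weights by symmetry and averaging with $\hat Z_1[h]$. Your bookkeeping of the prefactors $2/n$, $4/n^2$, and $1/2$, and of which fold each piece of the weight originates from, matches the paper exactly.
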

\begin{proof}
We have 
\begin{align*}
\hat Z_0[h] 
&= \frac{2}{n}\bigg[\sum_{i\in \D_1} w_*(x_i) \big( h(x_i)  - \hat h_{\D_0}(x_i) \big ) \bigg] \\
&= \frac{2}{n}\bigg[\sum_{i\in \D_1} w_*(x_i)\big (h(x_i) - \sum_{\ell=1}^{\Ln}  \hat \beta_{\ell,0} \phi_\ell(x) \big ) \bigg] \\
&= \frac{2}{n}\bigg[\sum_{i\in \D_1} w_*(x_i) \big(h(x_i) - \frac{2}{n}  \sum_{\ell=1}^\Ln  \sum_{j\in \D_0} h(x_j)  w_*(x_j) \phi_\ell(x_j) \phi_\ell(x_i) \big ) \bigg]\\
&= \frac{2}{n} \sum_{i\in \D_1} w_*(x_i) h(x_i)  ~ - ~  \frac{4}{n^2} \sum_{j\in \D_0} \sum_{i\in \D_1}  h(x_j)  w_*(x_j)   w_*(x_i) \sum_{\ell=1}^\Ln  \phi_\ell(x_j) \phi_\ell(x_i) \\ 
&= \frac{2}{n} \sum_{i\in \D_1} w_*(x_i) h(x_i)  ~ - ~  \frac{4}{n^2} \sum_{j\in \D_0} \sum_{i\in \D_1}  h(x_j)  w_*(x_j) w_*(x_i)  \k_\Ln(x_i, x_j) \\ 
& \overset{\mathrm{def}}{=} \sum_{i=1}^n w_{i,0}  h(x_i),
\end{align*}
where 
\begin{align}
w_{i,0} = 
\begin{cases}
 - \frac{4}{n^2} \sum_{j \in \D_1} w_*(x_i) w_*(x_j) \k_{\Ln}(x_j, x_i)   & \text{$\forall i\in \D_0$} \\
\frac{2}{n} w_*(x_i)   & \text{$\forall i \in \D_1$} 
\end{cases}
\end{align}
We can derive the same result for $\hat Z_1[h]$ and averaging them would gives the result. 
\end{proof}

\subsection*{Step 2: Calculating $ \E_{\vv x\sim q}(\S(\{x_i, w_i(\vv x)\}, ~ p ))$} 
\begin{lem} \label{lem:sss}
Under Assumption~\ref{ass:kr}, for the weights $\{w_i(\vv x)\}$ defined in Lemma~\ref{lem:defw}, we have
$$
\E_{\vv x\sim q} [ \S(\{x_i, w_i(\vv x)\}, ~ p ]
\leq 
 \frac{2}{n} [ M_3 \R(\Ln)   ~+~    M_4 \frac{\Ln}{n}  ]
$$
where $M_3$ is the upper bound of $p(x)/q(x), ~ \forall x\in \X$ and $\R(\Ln) =  \sum_{\ell > \Ln} \lambda_\ell$ and $M_4 = 2 M_3\max_{\ell'}  \{ \sum_\ell  \lambda_{\ell} \rho_{\ell\ell'} \} \leq  2 M_3 M \trace(\k_p)$. 
\end{lem}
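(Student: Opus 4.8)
The plan is to reduce the bound on $\E_{\vv x\sim q}[\S(\{x_i, w_i(\vv x)\},~p)]$ to a variance calculation for the control-variate estimator $\hat Z$. First I would diagonalize the discrepancy using the Mercer expansion $\kp(x,x')=\sum_\ell \lambda_\ell\phi_\ell(x)\phi_\ell(x')$, exactly as in the expansion displayed in Section~\ref{sec:convergence}:
$$\S(\{x_i, w_i(\vv x)\},~p) = \sum_\ell \lambda_\ell\Big(\sum_i w_i(\vv x)\phi_\ell(x_i)\Big)^2.$$
Since each $\phi_\ell$ lies in $\Hp$ (it is an eigenfunction with $\lambda_\ell>0$, so $\|\phi_\ell\|_{\Hp}^2=1/\lambda_\ell<\infty$), Lemma~\ref{lem:defw} applies and gives $\sum_i w_i(\vv x)\phi_\ell(x_i)=\hat Z[\phi_\ell]$. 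Because $\hat Z[\phi_\ell]$ is an unbiased estimator of $\E_p\phi_\ell=0$ (shown in Step~1), taking expectations yields
$$\E_{\vv x\sim q}[\S(\{x_i, w_i(\vv x)\},~p)] = \sum_\ell \lambda_\ell\, \E_{\vv x\sim q}\big[\hat Z[\phi_\ell]^2\big] = \sum_\ell \lambda_\ell\,\var_{\vv x\sim q}\big[\hat Z[\phi_\ell]\big],$$
so the whole problem becomes one of bounding the variance of the control-variate estimator applied to each eigenfunction.

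Next I would bound $\var[\hat Z[\phi_\ell]]$. By symmetry of the two folds $\var[\hat Z[\phi_\ell]]\le \var[\hat Z_0[\phi_\ell]]$, and since $\hat Z_0[\phi_\ell]$ is conditionally unbiased given $\vv x_{\D_0}$, the law of total variance kills the between-fold term and leaves $\var[\hat Z_0[\phi_\ell]]=\E_{\D_0}\big[\var[\hat Z_0[\phi_\ell]\mid \vv x_{\D_0}]\big]$. As $\hat Z_0[\phi_\ell]$ is a $\frac{2}{n}$-scaled sum over the $n/2$ points of $\D_1$ of the conditionally i.i.d. terms $w_*(x_i)(\phi_\ell(x_i)-\hat h_{\D_0}(x_i))$, its conditional variance is $\frac{2}{n}\var_{x\sim q}[w_*(x)(\phi_\ell(x)-\hat h_{\D_0}(x))\mid \vv x_{\D_0}]$. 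The key manipulation is to drop the centering, use $w_*(x)\le M_3$ to peel off one factor, and convert the remaining $q$-expectation into a $p$-expectation:
$$\var[\hat Z_0[\phi_\ell]\mid \vv x_{\D_0}] \le \frac{2}{n}\,\E_{x\sim q}\big[w_*(x)^2(\phi_\ell-\hat h_{\D_0})^2\big] \le \frac{2M_3}{n}\,\E_{x\sim p}\big[(\phi_\ell-\hat h_{\D_0})^2\big] = \frac{2M_3}{n}\,\|\phi_\ell-\hat h_{\D_0}\|_{L^2(p)}^2.$$

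Then I would evaluate $\E_{\D_0}\|\phi_\ell-\hat h_{\D_0}\|_{L^2(p)}^2$ by the usual bias/variance split for an orthogonal-series estimator, exploiting that $\{\phi_m\}$ is orthonormal in $L^2(p)$ and that each coefficient $\hat\beta_{m,0}$ is unbiased for $\E_p[\phi_\ell\phi_m]=\ind[\ell=m]$ with variance $\frac{2}{n}\rho_{\ell m}$, where $\rho_{\ell m}=\var_{x\sim q}[w_*(x)\phi_\ell(x)\phi_m(x)]$ is uniformly bounded under Assumption~\ref{ass:kr}. For $\ell\le\Ln$ the truncation bias vanishes and only the coefficient variances survive, giving $\frac{2}{n}\sum_{m\le \Ln}\rho_{\ell m}$; for $\ell>\Ln$ there is an irreducible truncation bias contributing $1$, giving $1+\frac{2}{n}\sum_{m\le \Ln}\rho_{\ell m}$. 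Multiplying by $\lambda_\ell$ and summing, the bias terms produce $\sum_{\ell>\Ln}\lambda_\ell=\R(\Ln)$, while the variance terms, after swapping the order of summation, give $\frac{2}{n}\sum_{m\le \Ln}\sum_\ell\lambda_\ell\rho_{\ell m}\le \frac{2\Ln}{n}\max_{m'}\{\sum_\ell\lambda_\ell\rho_{\ell m'}\}$. Collecting everything with the prefactor $\frac{2M_3}{n}$ and recognizing $M_4=2M_3\max_{\ell'}\{\sum_\ell\lambda_\ell\rho_{\ell\ell'}\}$ reproduces exactly $\frac{2}{n}[M_3\R(\Ln)+M_4\frac{\Ln}{n}]$.

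I expect the main obstacle to be the careful handling of the two-fold (sample-splitting) structure in the middle step: one must verify conditional unbiasedness to suppress the between-fold variance, track that the residual $\phi_\ell-\hat h_{\D_0}$ depends only on $\D_0$ and is hence independent of the evaluation points in $\D_1$, and correctly separate the truncation bias (which appears only for $\ell>\Ln$ and is precisely what produces the $\R(\Ln)$ term) from the estimation variance of the coefficients (which scales like $\Ln/n$). The conversion from the $q$-weighted second moment to the $p$-weighted $L^2$ norm via $w_*\le M_3$ is what makes the $L^2(p)$-orthonormality of $\{\phi_\ell\}$ usable, and keeping the bookkeeping of constants consistent with the stated $M_3,M_4$ is the fiddly part.
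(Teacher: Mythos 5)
Your proposal is correct and follows essentially the same route as the paper's proof: reduce $\E[\S]$ to $\sum_\ell \lambda_\ell \E[\hat Z[\phi_\ell]^2]$ via the Mercer expansion, exploit the conditional unbiasedness from the two-fold split so only the per-point conditional variance survives, convert the $q$-second-moment to an $L^2(p)$ norm using $w_*\leq M_3$, and finish with the bias/variance decomposition of the orthogonal-series estimator, summing against $\lambda_\ell$ to produce $\R(\Ln)$ and the $M_4 \Ln/n$ term. The only differences are cosmetic (you phrase the cross-term cancellation via the law of total variance where the paper expands the square directly, and you specialize to $h=\phi_\ell$ from the outset rather than at the end), and your constant bookkeeping matches the stated bound.
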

\begin{proof}
First, for any $h \in \Hp$ (such that $\E_p[h] = 0$), we have 
\begin{align*}
& \E_{\vv x \sim q} \left [ \hat Z_0[h]^2 \right ]   \\
& = \E_{\vv x\sim q} \left [\bigg (\frac{2}{n}\sum_{i \in \D_1} w_*(x_i)( h(x_i) - \hat h_{\D_0}(x_i))  \bigg)^2\right]   \\
& = \frac{4}{n^2} \E_{\vv x_{\D_0}\sim q} \bigg\{ \sum_{i \in \D_1} \E_{x_i \sim q} \left [ w_*(x_i)^2( h(x_i) - \hat h_{\D_0}(x_i) )^2 \right ]  
\\
& ~~~~~~~~~~~~~~~~~~~~~+   \sum_{i\neq j; i,j\in \D_1}  \E_{x_i, x_j\sim q} \left [ w_*(x_i) (h(x_i) - \hat h_{\D_0}(x_i)) w_*(x_j) (h(x_j) - \hat h_{\D_0}(x_j)) \right ]      \bigg\}\\
& = \frac{4}{n^2}  \E_{\vv x_{\D_0}\sim q}  \bigg\{\sum_{i \in \D_1} \E_r \left[ ( h(x_i) - \hat h_{\D_0}(x_i) )^2 \right ]  ~ + 
\sum_{i\neq j; i,j\in \D_1}  \E_p \left [(h(x_i) - \hat h_{\D_0}(x_i))  (h(x_j) - \hat h_{\D_0}(x_j)) \right ]     \bigg\}  \\
& = \frac{2}{n}  \E_{\vv x_{\D_0}\sim q}  \bigg\{  \int  \frac{p(x)^2}{q(x)} ( h(x) - \hat h_0(x) )^2 dx    \bigg\}  ~~~~~~~~\text{\small (because $\E_p h = \E_p \hat h = 0)$}   \\
& \leq \frac{2 M_3 }{n}  \E_{\vv x_{\D_0}\sim q}  \bigg\{ \E_p [ ( h(x) - \hat h_0(x) )^2 ]    \bigg\}   ~~~~~~~~~~~~~~~~\text{\small (because $p(x)/q(x)\leq M_3$ by assumption)}   \\
 & = \frac{2 M_3}{n}   \E_{\vv x_{\D_0}\sim q}  \bigg\{   \sum_{\ell > \Ln} \beta_\ell^2 + \sum_{\ell < \Ln}  (\beta_\ell - \hat \beta_{\ell, 0})^2    \bigg\} \\
  & = \frac{2 M_3}{n}   \bigg\{ \sum_{\ell > \Ln} \beta_\ell^2 + \sum_{\ell < \Ln}   \var_{\vv x_{\D_0}\sim q}(\hat \beta_{\ell, 0})   \bigg\}  ~~~~~~~~~~~~~~~~~~~~\text{\small (because $\E_{\vv x_{\D_0}\sim q} [\hat \beta_{\ell,0} ]  =\beta_{\ell}$)}\\
  & = \frac{2 M_3}{n}   \left [    \sum_{\ell > \Ln} \beta_\ell^2 + \frac{2}{n} \sum_{\ell < \Ln}  \var_{x \sim q}[w_*(x)\phi_\ell(x) h(x)]      \right ]. 
    \end{align*}
    We can derive the same result for $\hat Z_1 [h]$ and hence  
  \begin{align*}
\E_{\vv x\sim q} \big[\hat Z[h]^2\big] 
& \leq \frac{1}{2}(\E_{\vv x\sim q} \big[\hat Z_0[h]^2\big]  + \E_{\vv x\sim q} \big[\hat Z_1[h]^2\big] )  \\
& = \frac{2M_3}{n}   \left [   \sum_{\ell > \Ln} \beta_\ell^2 + \frac{2}{n} \sum_{\ell < \Ln}  \var_{x \sim q}[w_*(x)  \phi_\ell(x) h(x)]    \right]. 
 \end{align*}
 Taking $h(x) = \phi_{\ell'}(x)$ for which we have $\beta_{\ell} = \ind[\ell = \ell']$, we get
  $$
 \E_q \big[\hat Z[\phi_{\ell'}]^2\big] 
 \leq 
 \begin{cases}
 \frac{4 M_3}{n^2} \sum_{\ell < \Ln}  \var_{x \sim q}[w_*(x) \phi_\ell(x) \phi_{\ell'}(x)]  & \text{if $\ell'  \leq \Ln$} \\[.1em]
 \frac{2 M_3}{n} +  \frac{4 M_3}{n^2} \sum_{\ell < \Ln}  \var_{x \sim q}[w_*(x) \phi_\ell(x) \phi_{\ell'}(x)]  & \text{if $\ell'  >  \Ln$} . 
\end{cases} 
 $$
Define  
$\rho_{\ell\ell'}=\var_{x \sim q}[w_*(x) \phi_\ell(x) \phi_{\ell'}(x)]$
and we have $\rho_{\ell\ell'}\leq M$ by Assumption~\ref{ass:kr}. 
We have 
\begin{align*}
\E_{\vv x\sim q}[\S(\{x_i, w_i(\vv x)  \}, p) ]
& =\E_{\vv x\sim q}[\sum_{i,j=1}^n w_i(\vv x) w_j(\vv x)  \kp(x_i, x_j) ] \\
& =\E_{\vv x\sim q}[\sum_{i,j=1}^n w_i (\vv x) w_j(\vv x)  \sum_{\ell=1}^\infty \lambda_\ell \phi_\ell(x_i) \phi_\ell(x_j) ]\\
& =\sum_{\ell} \lambda_\ell \E_{\vv x\sim q}[ (\sum_{i=1}^n w_i(\vv x)   \phi_\ell(x_i))^2   ]\\
& =\sum_{\ell} \lambda_\ell \E_{\vv x\sim q}[ \hat Z[\phi_\ell]^2   ]\\
& \leq \frac{2 M_3}{n} [  \sum_{\ell > \Ln} \lambda_\ell   ~+~    \frac{2}{n} \sum_{\ell=1}^\infty  \lambda_\ell   \sum_{\ell' < \Ln} \rho_{\ell\ell'}  ] \\
& \leq \frac{2 }{n} [  M_3 \sum_{\ell > \Ln} \lambda_\ell   ~+~     M_4 \frac{\Ln}{n}  ] , 
\end{align*}
where $M_4 = 2 M_3 \max_{\ell'}  \{ \sum_\ell  \lambda_{\ell} \rho_{\ell\ell'} \} \leq  2M_3  M \trace(\k_p)$. 
\end{proof}

\subsection*{Step 3: Meeting the Non-negative and Normalization Constraint}
The weights defined in \eqref{lem:defw} is not normalized to sum to one, and may also have negative values. 
To complete the proof, we define a set of new weights, 
$$
 w^+_i(\vv x) =\frac{ \max(0, w_i(\vv x)) }{\sum_i \max(0, w_i(\vv x))}. 
$$
We need to give the bound for $\S(\{x_i, w^+_i(\vv x) \},~ p)$ based on the bound of $\Op( \S(\{x_i, w_i(\vv x)\},~ p)  )$. 
The key observation is that we have $\sum_{i=1}^n w_i(\vv x) \pto 1 $ and $w_i(\vv x) \geq 0$ with high probability for the weights given by in Lemma~\ref{lem:defw}. 
\begin{lem}\label{lem:ineq}
For the weights $\{w_i(\vx)\}$ defined in Lemma~\ref{lem:defw}, under Assumption~\ref{ass:kr}, we have

i). When $\vx = \{x_i\}_{i=1}^n \sim q$, we have 
\begin{align} 
\label{equ:wineg}
\prob[ w_i(\vx)  <  0]  \leq \exp( - \frac{n}{ L M_2^2 M_3^2}), ~~~~~ \text{for~~ $\forall i\leq n$.} 
\end{align}
ii).  
We have $\E_{\vx \sim q} [\sum_i w_i(\vx) ]  = 1$. Assume $L \geq 1$, we have 
\begin{align}\label{equ:sineq}
\prob(S < 1 - t)  \leq 2 \exp(- \frac{n}{L^2 M_s}) &&
\text{where} &&
M_s = M_3^2( M_2 M_3 + \sqrt{2})^2/4, 
\end{align}
\end{lem}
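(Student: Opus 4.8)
The plan is to read each tail probability as a deviation bound for a bounded empirical average whose mean is pinned down by the identity $\E_p[\phi_\ell]=0$ — valid because every $\phi_\ell\in\Hp$ and functions in $\Hp$ integrate to zero against $p$ — and then to invoke a standard concentration inequality. For part~i) I would exploit the multiplicative structure of the weight. Since $w_*(x_i)=p(x_i)/q(x_i)\ge 0$, for $i\in\D_0$ (the case $i\in\D_1$ is symmetric) we can factor
$$w_i(\vx)=\frac1n w_*(x_i)\bigl(1-T_i\bigr),\qquad T_i=\frac2n\sum_{j\in\D_1} w_*(x_j)\,\k_{\Ln}(x_j,x_i),$$
so that the event $\{w_i(\vx)<0\}$ is exactly $\{T_i>1\}$.

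Conditioning on $x_i$, the sum $T_i$ is an average of the $n/2$ independent terms $w_*(x_j)\k_{\Ln}(x_j,x_i)$, $j\in\D_1$, each bounded in absolute value by $\Ln M_2 M_3$ (from $w_*\le M_3$, $|\phi_\ell|^2\le M_2$, and Cauchy-Schwarz applied to $\k_{\Ln}$), with vanishing conditional mean,
$$\E_{x_j\sim q}\!\bigl[w_*(x_j)\k_{\Ln}(x_j,x_i)\mid x_i\bigr]=\sum_{\ell=1}^{\Ln}\phi_\ell(x_i)\,\E_p[\phi_\ell]=0.$$
Hoeffding's inequality then controls $\prob[T_i>1\mid x_i]$ by an exponential of the form $\exp(-cn/(L^2 M_2^2 M_3^2))$ uniformly in $x_i$, hence unconditionally; the $M_2^2M_3^2$ factor is the square of the range bound on $\k_{\Ln}$. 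A variance-based (Bernstein) refinement, using that after converting $q$ to $p$ via $w_*$ the conditional second moment of $w_*(x_j)\k_{\Ln}(x_j,x_i)$ is only $O(\Ln)$, sharpens the $L$-dependence and yields the stated bound \eqref{equ:wineg}.

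For part~ii) I would write $S=\sum_i w_i(\vx)=S_1-S_2$, where $S_1=\frac1n\sum_{i=1}^n w_*(x_i)$ is a plain empirical average and $S_2$ gathers the cross terms. The identity $\E_{\vx\sim q}[S]=1$ follows because $\E_q[w_*]=\int p(x)\,dx=1$, while each cross term $w_*(x_i)w_*(x_j)\k_{\Ln}(x_j,x_i)$ with $i,j$ in opposite blocks factorizes over the independent blocks into $\sum_\ell(\E_q[w_*\phi_\ell])^2=\sum_\ell(\E_p[\phi_\ell])^2=0$. For the lower tail I would apply McDiarmid's bounded-difference inequality to $S$ viewed as a function of $(x_1,\dots,x_n)$: altering one coordinate $x_k$ moves $S_1$ by at most $M_3/n$ and $S_2$ by at most $O(\Ln M_2 M_3^2/n)$, so the squared bounded differences sum to $\sum_k c_k^2=O(L^2 M_s/n)$ with $M_s=M_3^2(M_2M_3+\sqrt2)^2/4$. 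McDiarmid then gives $\prob(S<1-t)\le 2\exp(-cnt^2/(L^2 M_s))$, which is \eqref{equ:sineq}.

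The main obstacle is the cross term $S_2$ in part~ii), which is a degenerate two-sample $U$-statistic (a bilinear form in the independent blocks $\D_0$ and $\D_1$); rather than analyzing it through a $U$-statistic Hoeffding decomposition, the bounded-difference route sidesteps this by only requiring that each sample point perturb $S$ by a controlled amount, at the cost of an $L^2$ rather than $L$ dependence. A secondary, purely bookkeeping point is tracking the range bounds for $\k_{\Ln}$ and the factor of two from the data split precisely enough that the exact constant $M_s$, and the $M_2^2M_3^2$ scaling in part~i), come out correctly.
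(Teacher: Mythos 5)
Your proposal is correct and follows the paper's skeleton closely: the same factorization $w_i(\vx)=\tfrac1n w_*(x_i)(1-T_i)$ with the event $\{w_i<0\}=\{T_i>1\}$ handled by conditioning on $x_i$ and applying Hoeffding in part i), and the same split $S=S_1+S_2$ with $\E[S_1]=1$, $\E[S_2]=0$ in part ii). The one genuine divergence is in how you control the cross term: the paper bounds $S_1$ and $S_2$ separately, optimizing over a split $t=t_1+t_2$ and invoking Hoeffding's inequality for two-sample $U$-statistics \citep[Section 5b]{hoeffding1963probability} for the degenerate bilinear term $S_2$, whereas you apply McDiarmid's bounded-difference inequality to $S$ as a whole. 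Both yield the same functional form $\exp(-c\,nt^2/(L^2\,\mathrm{poly}(M_2,M_3)))$; your route is arguably cleaner since it needs only one standard inequality and no split optimization, at the price of constants that will not land exactly on $M_s=M_3^2(M_2M_3+\sqrt2)^2/4$ (which you correctly flag as bookkeeping, and which the paper itself does not track carefully). Your Bernstein remark in part i) is a sharp observation: the lemma as stated has $L$ to the first power in the denominator of the exponent, but the paper's own Hoeffding argument only delivers $L^2$ (a weaker bound, since $L\ge1$); exploiting that the conditional second moment of $w_*(x_j)\k_{\Ln}(x_j,x_i)$ is $O(LM_2M_3)$ rather than the square of the sup bound is indeed the natural way to recover the first-power dependence, though downstream (Lemma~\ref{lem:boundwp} and Theorem~\ref{thm:mainb}) the paper only uses the $L^2$ version, so nothing is lost either way.
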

\begin{proof}
i). 
Recall that 
\begin{align*}
w_i(\vx) = 
\begin{cases}
   \frac{1}{n} w_*(x_i)  - \frac{2}{n^2} \sum_{j\in \D_1}   w_*(x_i)  w_*(x_j) \k_{\Ln}(x_j, x_i),  & \text{$\forall i\in \D_0$} \\[.5em]
   \frac{1}{n} w_*(x_i)  - \frac{2}{n^2} \sum_{j\in \D_0} w_*(x_i) w_*(x_j)    \k_{\Ln}(x_j, x_i),  & \text{$\forall i\in \D_1$}  . 
\end{cases}
\end{align*}
We just need to prove \eqref{equ:wineg} for $i\in \D_0$. Note that
\begin{align*}
w_i(\vx)  = \frac{1}{n} w_*(x_i) \big[1  -  T \big],  &&
\text{where} &&
T= \frac{2}{n}  \sum_{j\in \D_1} w_*(x_j) \k_{\Ln}(x_j, x_i). 
\end{align*}
Because $\E[T \mid x_i]  = \E_{x'\sim q}[ w_*(x') \k_{\Ln}(x',x_i) ]=0$ for $\forall x$ and $| w(x') \k_{\Ln}(x,x') |\leq  L M_2 M_3$, $\forall x,x'\in \X$,
using Hoeffding's inequality, we have 
$$
\prob( w_i (\vx) <  0) = \prob(T> 1) \leq \exp( - \frac{n}{L^2 M_2^2 M_3^2}). 
$$

ii). Note that $S \overset{def}{=} \sum_i w_i (\vx) = S_1 + S_2$, 
\begin{align*}
\text{where} && S_1 = \frac{1}{n} \sum_{i=1}^n w_*(x_i), &&
S_2 = - \frac{2}{n^2} \sum_{i\in \D_0} \sum_{j\in \D_1} w_*(x_i) w_*(x_j) \k_\Ln(x_i, x_j), 
\end{align*} 
where the first term is the standard importance sampling weights and the second term comes from the control variate. It is easy to show that $\E[S_1] = 1$ and $\E[S_2] = 0$, and hence $\E[S]=1$.  
To prove the tail bound, note that for any $t_1 + t_2 = t$, $t_1, t_2 > 0$, we have 
\begin{align*}
\prob(S < 1 - t) 
& \leq \prob(S_1 < 1 - t_1) + \prob(S_2 \leq t_2) \\
& \leq \exp(- \frac{2 n t_1^2}{M_3^2}) + \exp( - \frac{4 n t_2^2}{ L^2 M_2^2 M_3^4} ), 
\end{align*}
where the bound for $S_2$ uses the Hoffeding's inequality for two-sample U statistics \citep[][Section 5b]{hoeffding1963probability}. We take $t_1 = \sqrt{2 t} / (L M_2 M_3 + \sqrt{2})$, we have
$$
\prob(S < 1 - t)  \leq 2 \exp( - \frac{4 n t^2} {L^2 M_3^2(M_2 M_3 + \sqrt{2}/L)^2}) \leq 2 \exp(- \frac{n t^2}{L^2 M_s}),
$$
where $M_s = M_3^2(M_2 M_3 + \sqrt{2})^2/4$ (we assume $L\geq 1$). 

\end{proof}

\begin{lem} \label{lem:boundwp}
Under Assumption~\ref{ass:kr}, we have 
$$
\E[\S(\{x_i, w_i^+(\vv x)\}, ~ p )]  \leq \frac{1}{4} \E[\S(\{x_i, w_i(\vv x) \}, ~ p) ]   
~+ ~  M_f (n+2) \exp(-\frac{n}{L^2 M_0})  ,
$$
where $M_f =  \trace(k_p(x,x')) M_2 $ and  $M_0 = \max(M_2^2 M_3, ~  M_3^2(M_2 M_3 + \sqrt{2})^2)$. 
\end{lem}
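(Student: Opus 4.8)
The plan is to compare the normalized, truncated weights $\{w_i^+(\vx)\}$ with the raw control-variate weights $\{w_i(\vx)\}$ from Lemma~\ref{lem:defw} on a high-probability ``good event,'' and to pay only a vanishing price off that event. Write $S = \sum_i w_i(\vx)$ and $S^+ = \sum_i \max(0, w_i(\vx))$, and let $G$ be the event that every $w_i(\vx) \ge 0$ together with $S \ge c$ for a fixed threshold $c\in(0,1)$. On $G$ the truncation is inactive, so $\max(0, w_i) = w_i$, $S^+ = S$, and hence $w_i^+ = w_i/S$ exactly. Using the eigen-expansion from Assumption~\ref{ass:kr}, the discrepancy factorizes as
\[
\S(\{x_i, w_i^+\}, ~p) = \sum_\ell \lambda_\ell \Big(\sum_i w_i^+ \phi_\ell(x_i)\Big)^2 = S^{-2}\,\S(\{x_i, w_i\}, ~p),
\]
so on $G$ the two discrepancies differ only by the factor $S^{-2} \le c^{-2}$. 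This reduces $\E[\S(\{x_i,w_i^+\},p)\ind_G]$ to a constant multiple of $\E[\S(\{x_i,w_i\},p)]$, giving the first term of the claim (the constant being fixed by the choice of $c$).

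For the complementary event $G^c$ I would use a crude deterministic bound. Since $\{w_i^+\}$ is always a probability vector ($w_i^+\ge 0$, $\sum_i w_i^+=1$), Jensen's inequality applied to the same eigen-expansion yields $\S(\{x_i, w_i^+\}, p) = \sum_\ell \lambda_\ell (\sum_i w_i^+\phi_\ell(x_i))^2 \le \sum_\ell \lambda_\ell \sum_i w_i^+ \phi_\ell(x_i)^2 \le M_2 \sum_\ell \lambda_\ell = M_2\,\trace(\kp) = M_f$, where I used $|\phi_\ell(x)|^2 \le M_2$ from Assumption~\ref{ass:kr}. Consequently $\E[\S(\{x_i,w_i^+\},p)\ind_{G^c}] \le M_f\,\prob(G^c)$.

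It then remains to show $\prob(G^c)$ is exponentially small, and this is exactly where Lemma~\ref{lem:ineq} does the work. A union bound over the $n$ coordinates controls $\prob(\exists\, i\colon w_i(\vx) < 0) \le n\exp(-n/(LM_2^2 M_3^2))$ via~\eqref{equ:wineg}, while the lower-tail bound~\eqref{equ:sineq} controls $\prob(S < c)$ by $2\exp(-n/(L^2 M_s))$ when $c = 1-t$ for a fixed $t$. Collecting the two estimates and bounding both exponents by the worse one gives $\prob(G^c) \le (n+2)\exp(-n/(L^2 M_0))$ with $M_0 = \max(M_2^2 M_3, ~ M_3^2(M_2M_3+\sqrt{2})^2)$, which produces the residual term $M_f(n+2)\exp(-n/(L^2 M_0))$. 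Adding the two contributions yields the stated inequality, and combining it with Lemma~\ref{lem:sss} (and Proposition~\ref{pro:bound}) then completes the proof of Theorem~\ref{thm:mainb}.

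The main obstacle I anticipate is the good-event analysis, and specifically the control of the random normalizer $1/S^2$. The identity $\S(\{x_i,w_i^+\},p) = S^{-2}\S(\{x_i,w_i\},p)$ is exact, but $S$ is random with $\E S = 1$, so the delicate point is to confine $S$ away from $0$ on a set whose complement is exponentially rare, and to ensure that the rare small-$S$ region does not inflate the expectation. This is precisely what Lemma~\ref{lem:ineq}(ii) supplies, and the care lies in coupling that concentration estimate with the discrepancy bound of Lemma~\ref{lem:sss} so that the constant in front of $\E[\S(\{x_i,w_i\},p)]$ is absorbed cleanly into the factor appearing in the statement.
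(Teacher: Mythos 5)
Your proposal follows essentially the same route as the paper's own proof: condition on the good event $\{w_i(\vx)\ge 0 \ \forall i,\ \sum_i w_i(\vx)\ge 1/2\}$, use the exact identity $\S(\{x_i,w_i^+\},p)=S^{-2}\,\S(\{x_i,w_i\},p)$ there, bound $\S(\{x_i,w_i^+\},p)\le M_f$ deterministically off that event via the probability-simplex/Jensen argument, and control the bad-event probability by a union bound plus Lemma~\ref{lem:ineq}. One caveat worth noting: on the good event $S\ge 1/2$ gives $S^{-2}\le 4$, so this argument (yours and, correctly executed, the paper's) yields the inequality with constant $4$ rather than the stated $\tfrac14$; the paper's step $\E[f(\vv w^+)\mid \mathcal E_n]\le \tfrac14 \E[f(\vv w)\mid \mathcal E_n]$ inverts the bound on $S$ the wrong way, and your formulation with a generic $c^{-2}\ge 1$ is the correct one (the change of constant does not affect the rate claimed in Theorem~\ref{thm:mainb}).
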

\begin{proof}
We use short notation $f(\vv w^+) = \S(\{x_i, w^+_i(\vv x)\}, ~ p )$ for convenience. We have 
\begin{align*}
|f(\vv w^+)| = |\sum_\ell \lambda_\ell (\sum_i w_i^+ \phi_\ell(x_i))^2| \leq  \trace(k_p(x,x')) M_2 \overset{\mathrm{def}}{=} M_f.
\end{align*}
Define $\mathcal E_n$ to be the event that all $w_i > 0$ and $\sum_i w_i \geq 1/2$, that is, $\mathcal E_n = \{ \sum_i w_i \geq 1/2, ~~ w_i \geq 0, ~\forall i\in [n]\}$. We have from Lemma\ref{lem:ineq} that 
$$
\prob( \mathcal{\bar{E}}_n)  \leq n \exp(-\frac{n}{L^2 M_2^2 M_3}) + 2 \exp(-\frac{n}{4 L^2 M_s}). 
$$
Note that under event $\mathcal E_n$, we have $\vv w = \vv w^+$. 
Therefore, 
\begin{align*}
\E[ f(\vv w^+ )]  
& = \E[f(\vv w^+) \mid \mathcal E_n]  \cdot \prob[\mathcal E_n] ~+ ~ \E[f(\vv w^+) \mid\mathcal{\bar{E}}_n]   \cdot \prob[\mathcal{\bar{E}}_n] \\
& \leq \E[f(\vv w^+) \mid \mathcal E_n]  \cdot \prob[\mathcal E_n] ~+ ~ M_f \cdot \prob[\mathcal{\bar{E}}_n] \\
& \leq  \frac{1}{4}  \E[f(\vv w) \mid \mathcal E_n]  \cdot \prob[\mathcal E_n] ~+ ~ M_f \cdot \prob[\mathcal{\bar{E}}_n] \\
& \leq \frac{1}{4} \E[f(\vv w) ]   ~+ ~ M_f \cdot \prob[\mathcal{\bar{E}}_n] \\
& \leq \frac{1}{4}  \E[f(\vv w)] ~+ ~  M_f \cdot \bigg [   n \exp(-\frac{n}{L^2 M_2^2 M_3})  +  2 \exp(- \frac{n }{4L^2 M_s}) \bigg ] \\
&  \leq \frac{1}{4}  \E[f(\vv w)]  ~+ ~  M_f (n+2) \exp(-\frac{n}{L^2 M_0})  
\end{align*}

\end{proof}

\section{Additional Empirical Results}
Here we show in Figure~\ref{fig:gmmv2} an additional empirical result when $p(x)$ is a  Gaussian mixture model shown in Figure~\ref{fig:gmmv2}(a) and 
 $\{x_i\}_{i=1}^n$ is generated by running $n$ independent chains of MALA for $10$ steps. 

\begin{figure}[htbp]
   \centering
   \begin{tabular}{ccccc}
 \hspace{-1em}   \raisebox{1.5em}{  \includegraphics[height=0.12\textwidth]{figures/contour.pdf} } \hspace{-2em}&
     \hspace{-1em}   \includegraphics[height=0.2\textwidth]{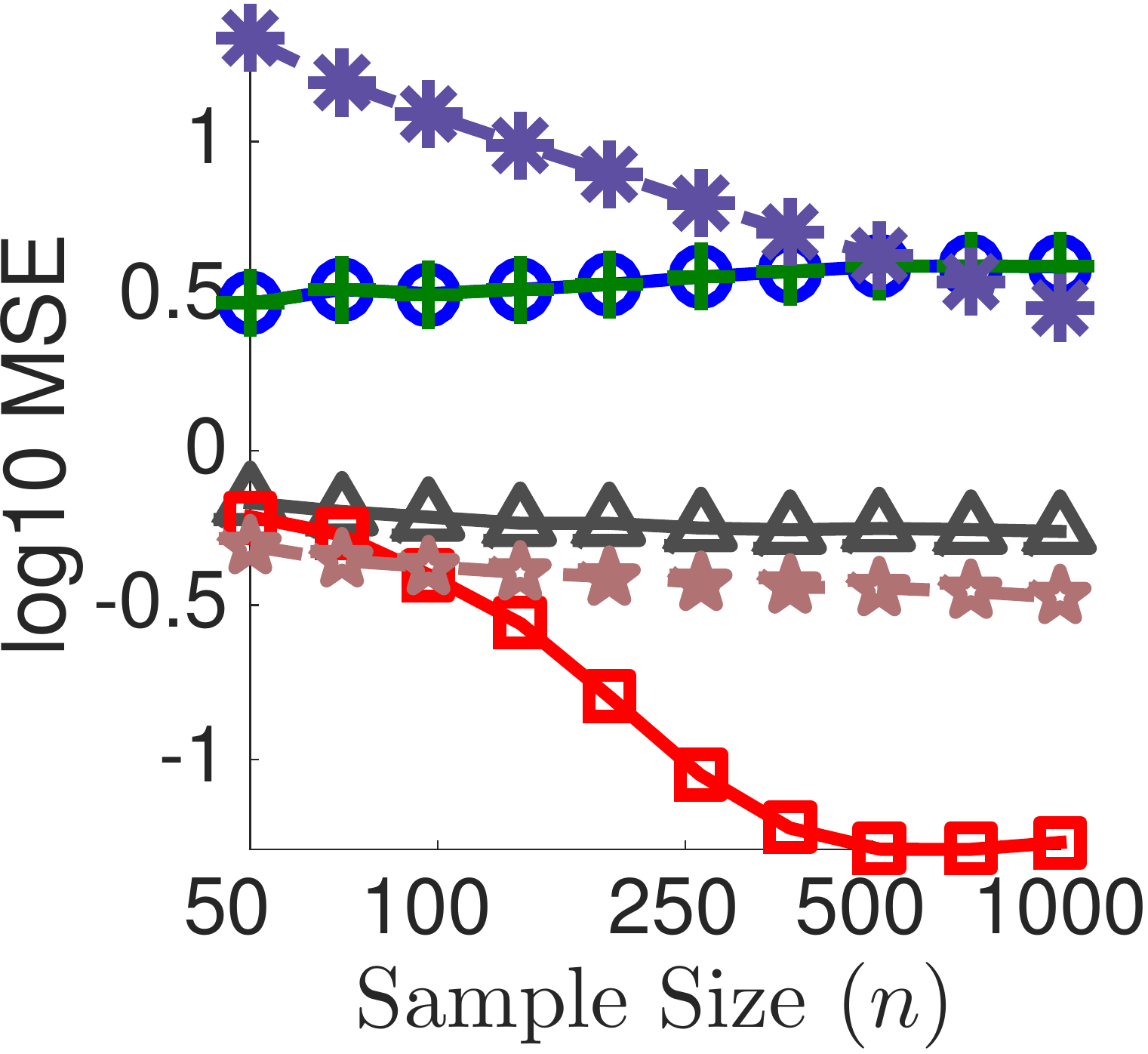} &
 \hspace{-1em}  \raisebox{0em}{  \includegraphics[height=0.2\textwidth, trim={1.28cm 0 0 0},clip]{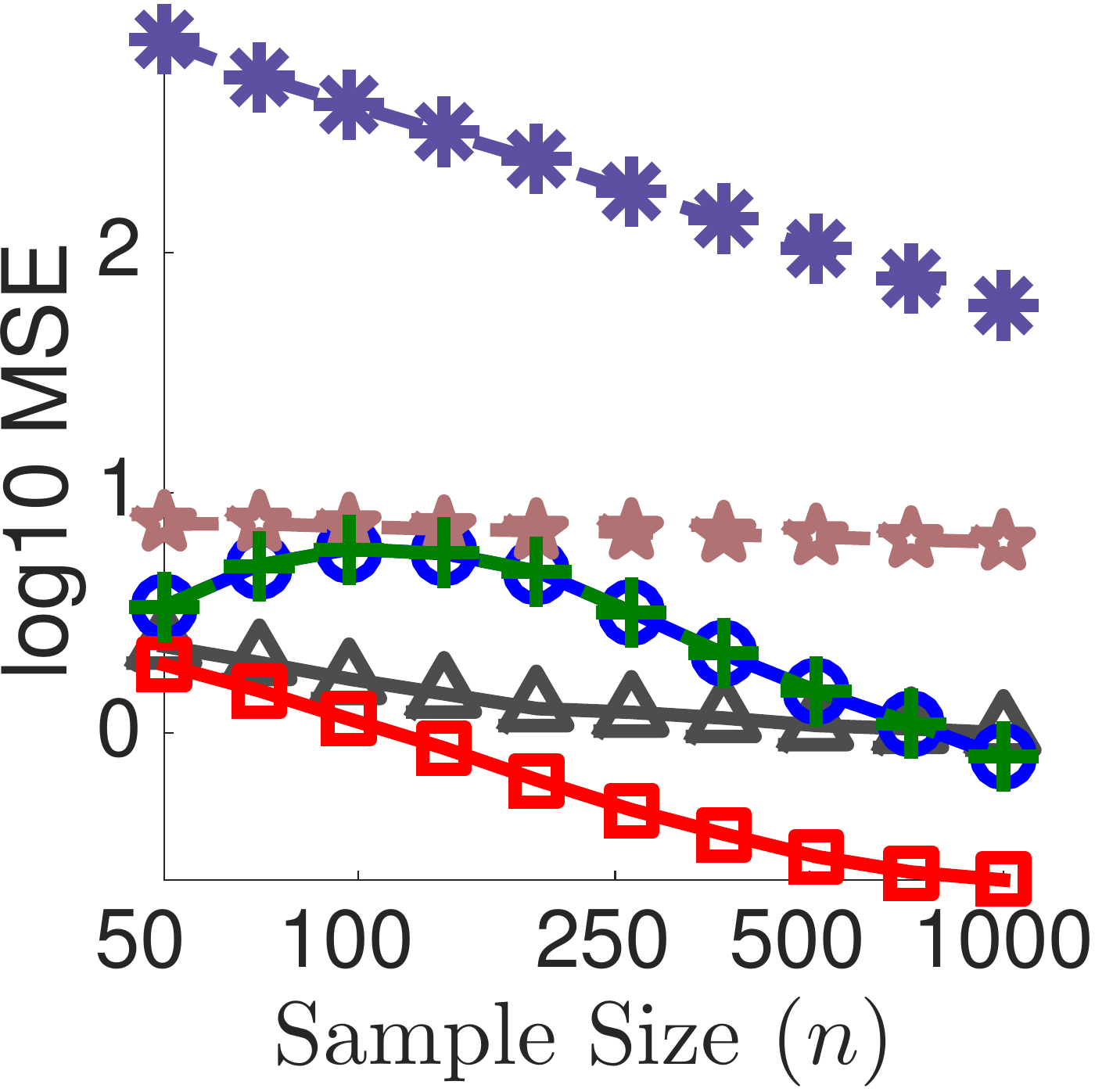}  }&
 \hspace{-1em}  \raisebox{0em}{  \includegraphics[height=0.2\textwidth, trim={1.4cm 0 0 0},clip]{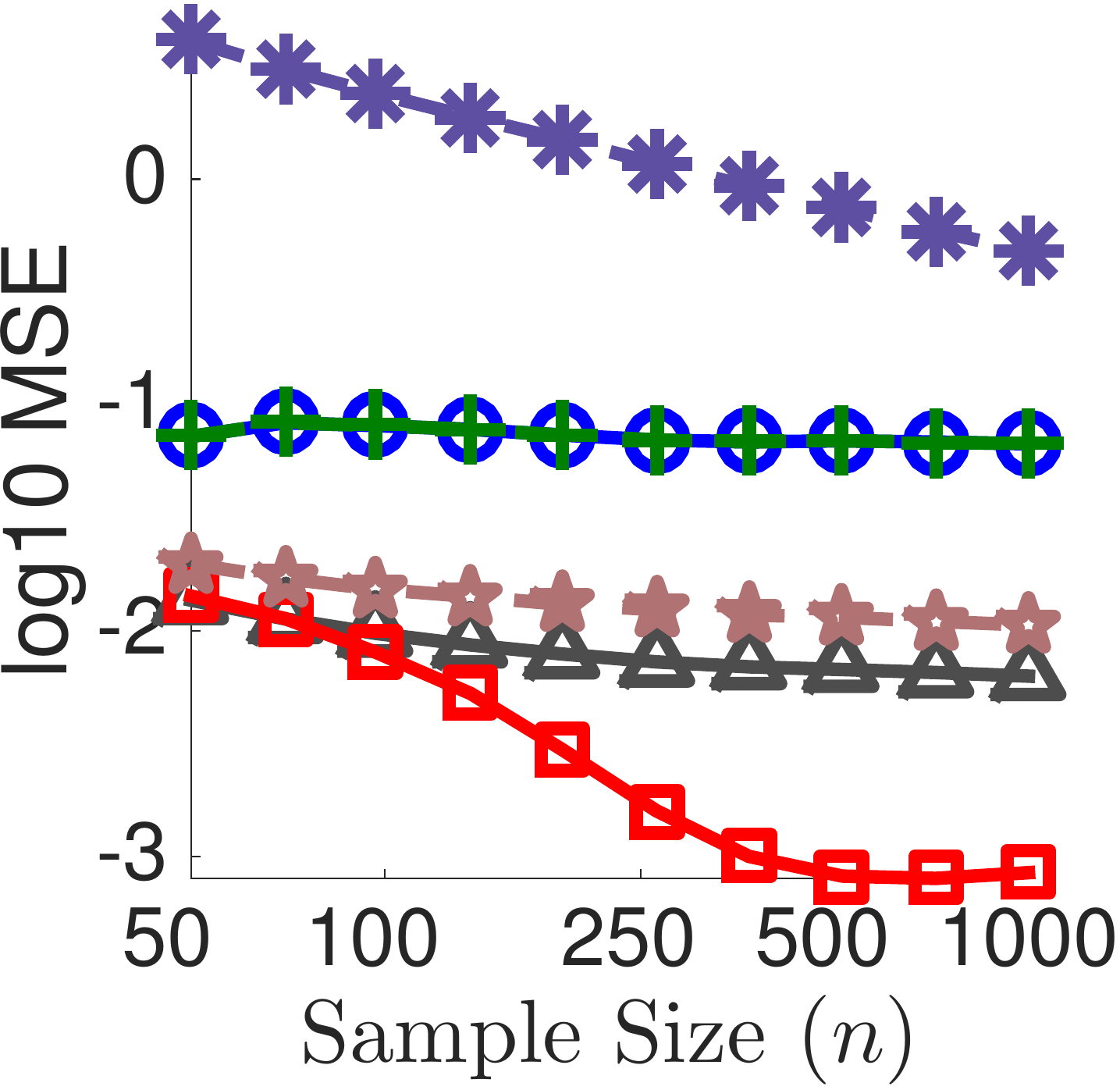}  }&
 \hspace{-2em}  \raisebox{4em}{  \includegraphics[height=0.1\textwidth]{figures/Leg_51steinISmala0d0039486_binprobitn1000d50wtrue1_50_0mu0invS0d01_NVec50_10_1000_sVec100.pdf}  } \\  
{ \small (a) $p(x)$} & {\small(b)  $\E(x)$} & {\small(c)  $\E(x^2)$} & {\small(d)  $\E(\cos(\omega x+b))$} &
   \end{tabular}
   \caption{Gaussian Mixture Example. (a) The contour of the distribution $p(x)$ that we use, and 
    $\{x_i\}_{i=1}^n$ is generated by running $n$ independent MALA for $10$ steps. 
   (b) - (c) The MSE of the different weighting schemes
   for estimating $\E(h(x))$, where $h(x)$ equals $x$, $x^2$, and $\cos(\omega x+b)$, respectively. For $h=\cos(\omega x+b)$ in (c), we draw $\omega\sim\normal(0,1)$ and $b \sim \textrm{Uniform}([0,2\pi])$ and average the MSE over $20$ random trials.}
   \label{fig:gmmv2}
\end{figure}

%

%
\onecolumn
\aistatstitle{Black-Box Importance Sampling}
\aistatsauthor{ Anonymous Author 1 \And Anonymous Author 2 \And Anonymous Author 3 }
\aistatsaddress{ Unknown Institution 1 \And Unknown Institution 2 \And Unknown Institution 3 } 

\bibliographystyle{unsrtnat}
\bibliography{bibrkhs_stein}
\end{document}